\newtheorem{theorem}{Theorem}
\newtheorem{proof}{Proof}
\begin{document}
\title{Multi-view Clustering with Deep Matrix Factorization and Global Graph Refinement}
%
%

\author{Chen~Zhang$^{*}$,~Siwei~Wang$^{*}$,~Wenxuan~Tu,~Pei~Zhang, ~Xinwang~Liu$^{\dagger}$,~\IEEEmembership{Senior~Member,~IEEE,}\\
~Changwang~Zhang~ and~Bo~Yuan$^{\dagger}$



}

\maketitle

\begin{abstract}
Multi-view clustering is an important yet challenging task in machine learning and data mining community. One popular strategy for multi-view clustering is matrix factorization which could explore useful feature representations at lower-dimensional space and therefore alleviate dimension curse. However, there are two major drawbacks in the existing work: $i)$ most matrix factorization methods are limited to shadow depth, which leads to the inability to fully discover the rich hidden information of original data. Few deep matrix factorization methods provide a basis for the selection of the new representation's dimensions of different layers. $ii)$ the majority of current approaches only concentrate on the view-shared information and ignore the specific local features in different views. To tackle the above issues, we propose a novel $\mathbf{M}$ulti-$\mathbf{V}$iew $\mathbf{C}$lustering method with $\mathbf{D}$eep semi-N$\mathbf{MF}$ and $\mathbf{G}$lobal $\mathbf{G}$raph $\mathbf{R}$efinement ($\mathbf{MVC}$-$\mathbf{DMF}$-$\mathbf{GGR}$) in this paper. Firstly, we capture new representation matrices for each view by hierarchical decomposition, then learn a common graph by approximating a combination of graphs which are reconstructed from these new representations to refine the new representations in return. An alternate algorithm with proved convergence is then developed to solve the optimization problem and the results on six multi-view benchmarks demonstrate the effectiveness and superiority of our proposed algorithm.
\end{abstract}

\begin{IEEEkeywords}
Multi-view learning, Multi-view clustering, Deep matrix decomposition. 
\end{IEEEkeywords}

%
\IEEEpeerreviewmaketitle

\begin{figure*}
    \centering
	\includegraphics[width = 1\textwidth]{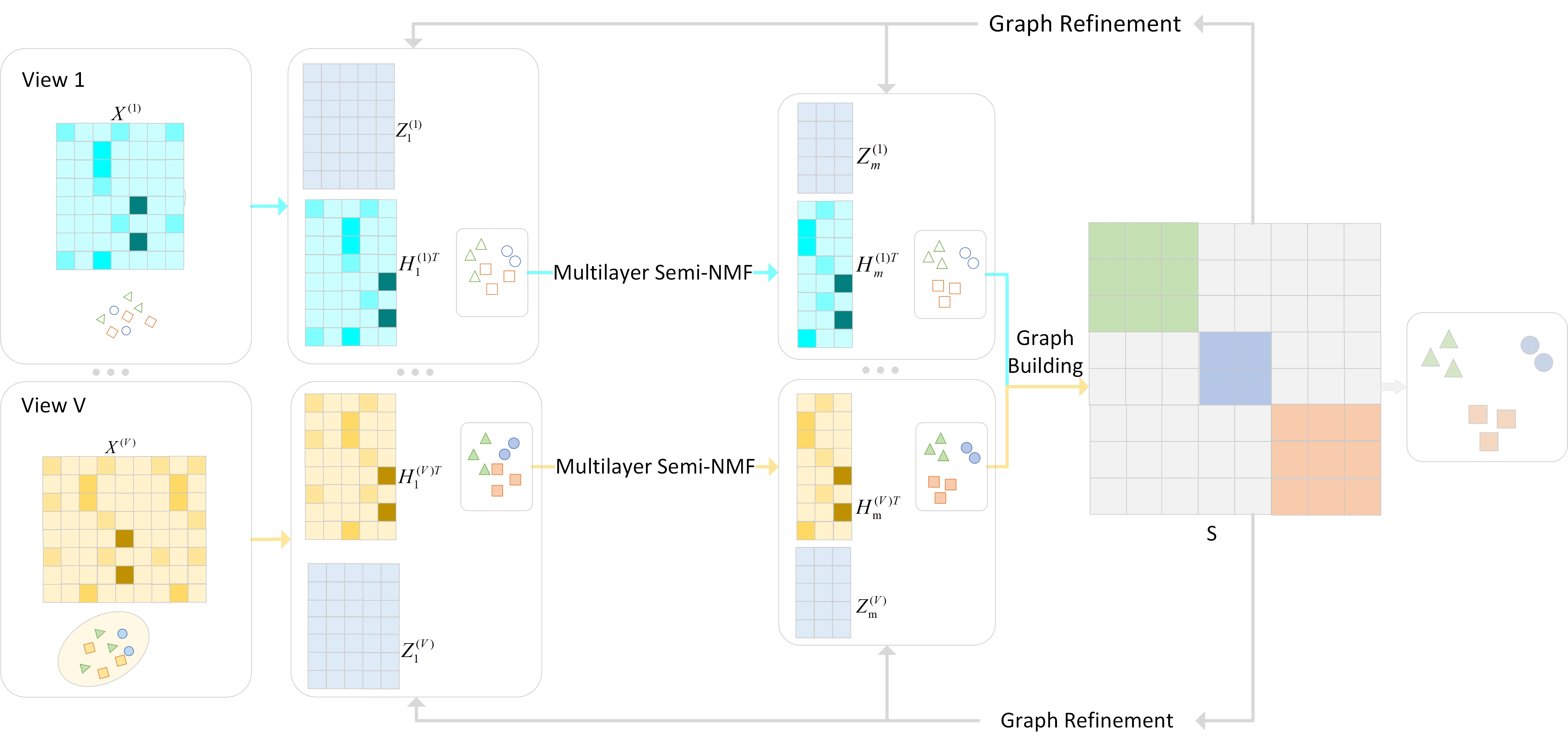}
	\caption{The illustration of our proposed MVC-DMF-GGR. The representations of all layers are firstly initialized with semi-NMF. Then a global graph structure is constructed to capture the complex inner relationship of data. Finally, the representations and the global graph are alternately boosted until best serving for clustering.}
	\label{model structure}
\end{figure*}

\section{Introduction}
\label{Introduction}

\IEEEPARstart{M}ulti-view clustering (MVC) is a fundamental learning task in data mining, image segmentation and pattern recognition \cite{wang2009unified,  fu2010multi,wang2016iterative,tang2019adaptive}. The key of MVC is to find the consistency and complementary information among each view which is described by different aspects, which has been attracted enormous attention. Existing multi-view clustering approaches can be categorized into four categories according to the mechanisms and principles involved, namely, co-training, multi-kernel clustering, graph clustering and subspace clustering \cite{liu2020optimal, cao2015diversity, kang2020partition, zhang2020consensus, wang2020deep}. Co-training algorithms bootstrap the clustering results of different views by using the prior or learning knowledge from others \cite{ kumar_co-training_nodate, kumar_co-regularized_nodate, tan2020unsupervised}. For multi-kernel clustering, the data are first mapped to high-dimensional spaces through kernel functions, and then these kernels are combined linearly or non-linearly to improve clustering performance \cite{li_multiple_nodate, huang2020robust, wang2020kernelized, chen2019jointly}. Multi-view graph clustering algorithms aim to construct graph similarity matrix for individual views, and the main challenge is how to approximately obtain a fusion graph \cite{liang2020multi,wang2019multi,wang2017sparse,wen2020adaptive,wang2015unsupervised,nie_parameter-free_nodate,wang2018multiview,zhang_multilevel_2021}. Multi-view subspace clustering can be further divided into subspace-based methods \cite{wang2015robust,chen_relaxed_2021,ou2020anchor,tang_learning_2019,zhang2020tensorized,zhou_subspace_2020,wang2020learning,tang2018learning} and matrix factorization methods \cite{wang2018multiview,zhu2014fast}. Both of them are designed to learn a low-dimensional representation shared by all views. Our paper belongs to the non-negative matrix factorization method.


In recent years, non-negative matrix factorization (NMF) in Multi-view subspace clustering has been developed to a certain extent. A novel NMF-based multi-view clustering algorithm has been proposed by searching for a factorization that gives compatible clustering solutions across multiple views \cite{liu_multi-view_2013}. The work in \cite{zong_multi-view_2017} proposes a multi-manifold regularized non-negative matrix factorization framework (MMNMF) which can preserve the locally geometrical structure of the manifolds for multi-view clustering. A method of \cite{wang2020structured} aims at multi-view feature selection and fusion problems by using matrix factorization. In \cite{liang2020multi}, a novel NMF model with co-orthogonal constraints is designed to deal with the MVC problem. However, most algorithms based on matrix factorization follow the single-layer strategy. Only a few algorithms such as the method DMVC in \cite{zhao_multi-view_nodate} uses the deep semi-NMF framework inspired by work \cite{trigeorgis_deep_nodate}. DMVC focuses on the intrinsic geometric structure of each view, so graph regularizations are introduced to couple the output representation of deep structures. However, DMVC needs to learn the values of hyperparameters. Then a method \cite{huang_auto-weighted_2020} has been proposed to solve this problem and performance has been further improved. Although these methods have achieved success, they can also be considered to be improved from the following perspectives: $i)$ Since different views represent various attributes of the data items, the view-specific features have been discarded in existing methods and are forced to be consistent among various views. $ii)$ According to \cite{trigeorgis_deep_nodate}, there is still a large gap of fully discovering the rich hidden information of original data with deep factorization matrix structures from existing mechanisms.

In this paper, we propose a multi-view clustering method via deep semi-NMF to solve the above problems. We jointly optimize the representation learning of each view and the late fusion stage in a unified framework, which terms as multi-view clustering with deep semi-NMF and global graph refinement (MVC-DMF-GGR). Firstly, we learn a low-dimensional and more compact representation for each view through the deep semi-NMF framework. As these representations originate from different views, the specific information across views can be well captured. Secondly, we use these learned representations to reconstruct the graph structure of each view and then merge them to approximate a common graph structure. Although the representation of each view may be different, the graph structure of each view tends to be similar. Because they all represent the same batch of samples. Therefore, following traditional graph-based methods, we combine the representation learning and common graph structure learning for joint optimization and hope to obtain an optimal graph structure for clustering. Besides, extensive experiments on six benchmark datasets are performed to evaluate the effectiveness of our proposed method. The proposed method enjoys superior clustering performance by comparing with some state-of-the-art methods.

The contributions of this paper are summarized as follows,
\begin{itemize}
	\item We propose a multi-view clustering method with deep semi-NMF and global graph refinement (MVC-DMF-GGR). In this work, we unify the representation learning and graph structure learning into one framework, which can promote and guide each other and reach a best consensus for clustering. 
	\item Through introducing the deep semi-NMF framework, we decompose the feature matrix by multiple layers and capture the underlying information of each view. In the fusion stage, the graph regularization item is introduced to learn a graph structure shared by each view. The common graph unifies the internal geometric structures of data among different views.  
	\item Extensive experiments are conducted on six multi-view datasets and our proposed method shows clear superiority over other SOTA methods.
\end{itemize}

The rest of the paper is organized as follows. Section \ref{Related Work} outlines the related work of multi-view clustering via NMF. Section \ref{method} introduces the method we have proposed and the alternate algorithm that to solve the optimization problem with its convergence and the computational complexity analysis. Section \ref{experiments} introduces the datasets and compared methods and shows the experiment results with analysis. The ending of this paper is a conclusion in Section \ref{conclusion}.

\section{Related Work}
\label{Related Work}

We introduce some notations firstly. $\mathbf{A}$ represents a matrix which with bold capital symbol. $\mathbf{A}_{i,:}$, $\mathbf{A}_{:,j}$ and $\mathbf{A}_{i,j}$ represent its $i$-th row, $j$-th column and the $ij$-th element. $\mathbf{A}_{m}$ denotes the $m$-th layer and $\mathbf{A}^{(v)}$ denotes the $v$-th view. The Forbenius norm of matrix $\mathbf{A}$ is denoted as $\|\mathbf{A}\|_{F}$ and the trace of matrix $\mathbf{A}$ is denoted as $\operatorname{tr}(\mathbf{A})$. $\mathbf{A}^{\operatorname{T}}$ and $\mathbf{A}^{\dagger}$ denote the transpose and the Moore-Penrose generalized inverse of matrix $\mathbf{A}$ respectively. We separate the negative parts and positive parts of matrix $\mathbf{A}$ as $\left[\mathbf{A}\right]^{+}$ and $\left[\mathbf{A}\right]^{-}$.

In this part, we briefly review several of the most related works, including Semi-NMF, deep Semi-NMF, Multi-view clustering via DMF, etc.
\subsection{Semi-NMF}

Non-negative matrix factorization is an important theme in matrix factorization, which can be used to solve clustering, spectral decomposition, and subspace identification. In reality, the source data $\mathbf{X}$ we get may have mixed signs. The work in \cite{ding_convex_2010} extends traditional NMF to semi-NMF and gives an alternately updating algorithm of related variables. NMF can be written as:
\begin{equation}\label{NMF}
		\mathrm{NMF}: \min_{\mathbf{Z}\geq 0, \mathbf{H}\geq 0} \|\mathbf{X}_{+}-\mathbf{Z}\mathbf{H} \|_F^2,
\end{equation}
Semi-NMF can be written as:
\begin{equation}\label{NMF}
		\mathrm{NMF}: \min_{\mathbf{H}\geq 0} \|\mathbf{X}-\mathbf{Z}\mathbf{H} \|_F^2,
\end{equation}
Where $\mathbf{X} \in \mathbb{R}^{d\times n}$ denotes the input data with $n$ samples and each sample is composed of $d$ dimensional feature. $\mathbf{X}_{+}$ in Eq. (\ref{NMF}) represents the elements of the original data are positive and $\mathbf{X}_{\pm}$ in Eq. (\ref{Semi-NMF}) represents the elements of the original data are mixed. When NMF or semi-NMF is used in clustering, $\mathbf{Z}\in\mathbb{R}^{d\times k}$ is the cluster centroid matrix and $\mathbf{H}\in\mathbb{R}^{k\times n}$ is the soft clustering assignment matrix or the representation of $k$-dimensional. The differences between NMF and Semi-NMF can be concluded that the elements of $\mathbf{X}$ and $\mathbf{Z}$ in NMF are forced to be positive, while in Semi-NMF they can be mix-sign.

The optimization problem of Semi-NMF in Eq. (\ref{Semi-NMF}) can be solved by alternately updating Z and H: $\\ $
 i) \textbf{Optimizing $\mathbf{Z}$ by given $\mathbf{H}$}. By fixing the soft clustering assignment matrix $\mathbf{H}$, the optimization Eq. (\ref{Semi-NMF}) can be considered as an unconstrained problem as:
$\mathcal{C}=\|\mathbf{X}-\mathbf{Z}\mathbf{H}\|_F^2$. By setting $\partial\mathcal{C}/\partial \mathbf{Z}=0$, give the solutions as $\mathbf{Z}=\mathbf{X}\mathbf{H}(\mathbf{H}^{\mathrm{T}}\mathbf{H})^{-1}$.\\
ii) \textbf{Optimizing $\mathbf{H}$ by given $\mathbf{Z}$.} With $\mathbf{Z}$ fixed, $\mathbf{H}$ can be optimized via solving the problem as $\partial\mathcal{C}=\|\mathbf{X}-\mathbf{Z}\mathbf{H}\|_F^2$  with constraint $\mathbf{H} \geq 0$. By using Lagrange Method, we can obtain the update rule of $\mathbf{H}$ which satisfies the KKT condition as follow,
\begin{equation}
	\mathbf{H}=\mathbf{H} \odot \sqrt{(\left[\mathbf{Z}^{\mathrm{T}} \mathbf{X}\right]^{\mathrm{+}}+\left[\mathbf{Z}^{\mathrm{T}}\mathbf{Z}\mathbf{H}\right]^{\mathrm{-}})/(\left[\mathbf{Z}^{\mathrm{T}} \mathbf{X}\right]^{\mathrm{-}}+\left[\mathbf{Z}^{\mathrm{T}} \mathbf{Z} \mathbf{H}\right]^{\mathrm{+}})}.
\end{equation}

\subsection{Deep Semi-NMF for representation learning}\label{deepNMF}

The low-dimensional one-layer representation obtained by Semi-NMF cannot preserve the original feature well due to the limitd representation ability. So a deep Semi-NMF framework for single-view has been proposed in \cite{trigeorgis_deep_nodate}, which is able to learn a lower and hidden representation. This method promotes the applications of semi-NMF and provides interpretability for the improvement of clustering performance. Deep Semi-NMF can be written as,
\begin{equation}\label{Deep_Semi-NMF}
		\min_{\mathbf{H}_{i}\geq 0} \|\mathbf{X}-\mathbf{Z}_{1}\mathbf{Z}_{2}\ldots\mathbf{Z}_{m}\mathbf{H}_{m} \|_F^2, 		\text{s.t.} \mathbf{H}_{i} \geq 0,
\end{equation}
where $\mathbf{Z}_{1}$ denotes the mapping between feature matrix $\mathbf{X}$ and the $1$-th representation $\mathbf{H}_{1}$. $\mathbf{Z}_{i}$ denotes the mapping between the $(i$-$1)$-th representation $\mathbf{H}_{i-1}$ and the $i$-th representation $\mathbf{H}_{i}$. In other words, $\mathbf{H}_{i-1}\approx\mathbf{Z}_{i}\mathbf{H}_{i}$. $m$ denotes the depth of Semi-NMF. Following the work in \cite{ding_convex_2010}, we denote $\Phi=\mathbf{Z}_{1}\mathbf{Z}_{2}\ldots\mathbf{Z}_{m}$. The optimization problem can be solved by alternately updating $\mathbf{Z}$ and $\mathbf{H}$:

i) \textbf{Optimizing $\mathbf{Z}_{i}$ while others be fixed}. The optimization Eq.  (\ref{Deep_Semi-NMF}) can be written as an unconstrained problem as:
$\mathcal{C}=\|\mathbf{X}-\Phi\mathbf{Z}_{i}\mathbf{H}_{i}\|_F^2$. By setting $\partial\mathcal{C}/\partial \mathbf{Z}_{i}=0$, we can give the solution as $ \mathbf{Z}_{i}=\Phi^{\dagger} \mathbf{X} \mathbf{H}_{i}^{\dagger}$.

ii) \textbf{Optimizing $\mathbf{H}_{i}$ while others be fixed}. $\mathbf{H}_{i}$ can be optimized via solving the problem as $\mathcal{C}=\|\mathbf{X}-\Phi\mathbf{H}_{i}\|_F^2$ with constraintion $\mathbf{H}_{i} \geq 0$. By using Lagrange method, we can obtain the update rule of $\mathbf{H}_{i}$ which satisfies the KKT condition as follow,
\begin{equation}
	\mathbf{H}_{i}=\mathbf{H}_{i} \odot \sqrt{(\left[\Phi^{\mathrm{T}} \mathbf{X}\right]^{\mathrm{+}}+\left[\Phi^{\mathrm{T} }\Phi\mathbf{H}\right]^{\mathrm{-}})/(\left[\Phi^{\mathrm{T}} \mathbf{X}\right]^{\mathrm{-}}+\left[\Phi^{\mathrm{T}} \Phi \mathbf{H}\right]^{\mathrm{+}})}.
\end{equation}

\subsection{DMVC}
The work of \cite{zhao_multi-view_nodate} combines deep semi-NMF with multi-view clustering which is called DMVC. The proposed method solves the clustering problem with constant geometric structure and representation learning by multi-layer simultaneously. Formally, multi-view clustering with deep semi-NMF can be mathematically written as,
\begin{equation}\label{MCV_DMF}
    \begin{split}
		\min\limits_{\substack{\mathbf{Z}_{i}^{(v)},\mathbf{H}_{i}^{(v)}\\\mathbf{H}_{m},\mathbf{\alpha}^{(v)}}}\sum_{v=1}^{V}(\alpha^{(v)})^\gamma(&\|\mathbf{X}^{(v)}-\mathbf{Z}_{1}^{(v)}\mathbf{Z}_{2}^{(v)}\ldots\mathbf{Z}_{m}^{(v)}\mathbf{H}_{m}\|_F^2\\&+\mathbf{\beta}\operatorname{tr}( \mathbf{H}_{m}\mathbf{L}^{(v)}  \mathbf{H}_{m}^{\mathrm{T}})),\\
		\text{s.t.} \mathbf{H}_{i}^{(v)}\geq 0,&\mathbf{H}_{m}\geq 0,\sum_{v=1}^{V}\alpha^{(v)}=1, \alpha^{(v)} \geq 0.
	\end{split}
\end{equation} 
We denote $\mathbf{X}=(\mathbf{X}^{(1)}\ldots\mathbf{X}^{(v)}\ldots\mathbf{X}^{(V)})$, where $\mathbf{X}^{(v)}\in\mathbb{R}^{d_v\times n}$ represents the feature matrix of the $v$-th view. $n$ and $d_v$ denote $\mathbf{X}^{(v)}$ with $n$ sample and each sample is of $d_v$ dimensional feature. Similar to subsection \ref{deepNMF}, $\mathbf{Z}_{1}^{(v)}$ denotes the mapping between feature matrix $\mathbf{X}^{(v)}$ and the $1$-th representation $\mathbf{H}_{1}^{(v)}$ of the $v$-th view. $\mathbf{Z}_{i}^{(v)}$ denotes the mapping between the $(i$-${1})$-th representation $\mathbf{H}_{i-1}^{(v)}$ and the $i$-th representation $\mathbf{H}_{i}^{(v)}$ of the $v$-th view$v$. $V$ is the number of views and $m$ is the number of layers or called the depth of Semi-NMF. $\mathbf{H}_m$ is the consensus latent representation for all views. $\alpha^{(v)}$ is the weighting coefficient of the $v$-th view and $\gamma$ is a coefficient that controls the weights distribution.  $\mathbf{L}^{(v)}=\mathbf{D}^{(v)}-\mathbf{A}^{(v)}$ denotes the $v$-th graph Laplacian, where $\mathbf{A}^{(v)}$ is constructed by feature matrix using $k$-nearest neighbor and $\mathbf{D}_{ii}^{(v)}=\sum_{j}\mathbf{A}_{ij}^{(v)}$.
The optimization problem of Eq. (\ref{MCV_DMF}) can be solved by alternately updating $\mathbf{Z}_{i}^{(v)}$, $\mathbf{H}_{i}^{(v)}$, $\mathbf{H}_{m}$ and $\mathbf{\alpha}^{(v)}$. The update rule of $\mathbf{Z}_{i}^{(v)}$, $\mathbf{H}_{i}^{(v)}$ and $\mathbf{H}_{m}$ are similar to the method deep semi-NMF. As for updating $\mathbf{\alpha}^{(v)}$, we can use Lagrange method and take the derivative of Lagrange function with respect to $\mathbf{\alpha}^{(v)}$.

\section{The proposed method}
\label{method}
\begin{table}[t]
    \renewcommand{\arraystretch}{1.3}
	\caption{{Basic notations for the proposed method.}}
	\label{notions} 
	\begin{tabular}{ll}
		\toprule
		Notations       & Meaning \\
		\midrule
		$\mathbf{X}^{(v)} \in \mathbb{R}^{d_{v}\times n}$   			&Feature matrix of the $i$-th view \\
		$\mathbf{Z}_{1}^{(v)} \in \mathbb{R}^{d_{v}\times l_{i}}$		&$1$-th layer cluster centroid matrix of the $i$-th view \\
		$\mathbf{Z}_{i}^{(v)} \in \mathbb{R}^{l_{i-1}\times l_{i}}$	&$i$-th layer cluster centroid matrix of the $i$-th view \\
		$\mathbf{H}_{i}^{(v)} \in \mathbb{R}^{l_{i}\times n}$			& $i$-th layer feature representation of the $i$-th view \\
		$\mathbf{S}^{(v)} \in \mathbb{R}^{n\times n}$							&the similarity matrix of the $v$-th view\\
		$\mathbf{S} \in \mathbb{R}^{n\times n}$							&Consensus similarity matrix \\
		$\phi  \in \mathbb{R}^{d_{v}\times l_{i-1}}$						&$ \mathbf{Z}_{1}^{(v)}\mathbf{Z}_{2}^{(v)}\ldots\mathbf{Z}_{i-1}^{(v)}$\\
		$\Phi  \in \mathbb{R}^{d_{v}\times l_{i}}$						&$\mathbf{Z}_{1}^{(v)}\mathbf{Z}_{2}^{(v)}\ldots\mathbf{Z}_{i}^{(v)}$ \\
		$\mathbf{\hat{H}}_{i}^{(v)} \in \mathbb{R}^{l_{i}\times n}$	&$\mathbf{Z}_{i+1}^{(v)}\dots\mathbf{Z}_{m}^{(v)}\mathbf{H}_{m}^{(v)}$\\
		$\mathbf{G}  \in \mathbb{R}^{n\times n}$     					&$\sum_{o=1,o\neq v}^{V}\alpha^{(o)}\mathbf{H_{m}}^{(o)\mathrm{T}}{\mathbf{H_{m}}^{(o)}}$\\
		$\mathbf{Q} \in \mathbb{R}^{n\times n}$  						&$\sum_{v=1}^{V}\alpha^{(v)}\mathbf{H}_{m}^{(v)\mathrm{T}}{\mathbf{H}_{m}^{(v)}}$ \\	\bottomrule
	\end{tabular}
\end{table}
We introduce some basic notations of our method firstly as described in Table \ref{notions}. We also explain in the relevant places of the paper for reading easily. 

As we mentioned before, the representations of all views in the last layer should be different in theory and the global graph structure which represents the relationship between samples should be consistent. Therefore, different from DMVC, we assume that the feature representations of the last layer in different views are different and a consensus local structure matrix $\mathbf{S}$ should be fused with individual structures. The idea can be mathematically expressed as follows,
\begin{equation}\label{object_all}
	\begin{aligned}
		\min \limits_{\substack{\mathbf{Z}_{i}^{(v)}, \mathbf{H}_{i}^{(v)}\\\mathbf{\alpha}^{(v)}, \mathbf{S}}}
		&\sum_{v=1}^{V}\|\mathbf{X}^{(v)}-\mathbf{Z}_{1}^{(v)}\mathbf{Z}_{2}^{(v)}\ldots\mathbf{Z}_{m}^{(v)}\mathbf{H}_{m}^{(v)}\|_F^2	\\	&+\mathbf{\beta}\|\mathbf{S}-\sum_{v=1}^{V}\alpha^{(v)}\mathbf{H}_{m}^{(v)\mathrm{T}}{\mathbf{H}_{m}^{(v)}}\|_F^2, \\
		\text{s.t.} \mathbf{H}_{i}^{(v)}\geq 0, &\sum_{v=1}^{V}\alpha^{(v)}\!=\!1, \alpha^{(v)} \!\geq\! 0, \mathbf{S}\mathbf{1}\!=\!\mathbf{1},\mathbf{S}\! \geq \!0,\text{diag}\mathbf{(S)}\!=\!0.
	\end{aligned}
\end{equation}
The meaning of $\mathbf{X}^{(v)}$, $\mathbf{Z}_{i}^{(v)}$ and $\mathbf{H}_{i}^{(v)}$ are similar to these symbols in Eq. (\ref{MCV_DMF}) described in Table \ref{notions}. $\mathbf{H}_{m}^{(v)}$ denotes the $m$-th layer of the $v$-th view. $\mathbf{H}_{m}^{(v)\operatorname{T}}\mathbf{H}_{m}^{(v)}$ constructs the similarity matrix $\mathbf{S}^{(v)}$ in different layer. $\alpha^{(v)}$ is the weight coefficient of the $v$-th view for $\mathbf{S}^{(v)}$. $\mathbf{S}$ denotes the consensus similarity matrix. $\mathbf{S}_{i,j}$ denotes the similarity score between $i$-th and $j$-th sample so we need to add the constraints $\mathbf{S}\geq 0$ and $\text{diag}\mathbf{(S)}=0$ for $\mathbf{S}$. The larger value $\mathbf{S}_{i,j}$ is, the more likely two samples belong to the same cluster. We hope to obtain normalized solution, so we add the constraint $\mathbf{S}\mathbf{1}=\mathbf{1}$.

\subsection{Initialization}\label{init}
Inspired by the tricks of the initialization in \cite{hinton2006reducing}, we have pre-trained all of the layers to initialize the variables $\mathbf{Z}_{i}^{(v)}$ and $\mathbf{H}_{i}^{(v)}$ by decomposing layer by layer. Firstly, we decompose the feature matrix of the $v$-th view $\mathbf{X}^{(v)}\approx \mathbf {Z}_{1}^{(v)} \mathbf{H}_{1}^{(v)}$, where $\mathbf{Z}_{1}^{(v)}\in\mathbb{R}^{d_v\times l_1}$ and $\mathbf{H}_{1}^{(v)}\in\mathbb{R}^{l_1\times n}$. Following this, we decompose the new feature matrix $\mathbf{H}_{1}^{(v)}\approx\mathbf{Z}_{2}^{(v)} \mathbf{H}_{2}^{(v)}$, where $\mathbf{Z}_{2}^{(v)}\in\mathbb{R}^{l_{1}\times l_{2}}$ and $\mathbf{H}_{2}^{(v)}\in\mathbb{R}^{l_{2}\times n}$. We repeat the above steps until all layers have been pre-trained.  We pre-train each of the layers to have an initial approximation of the matrices $\mathbf{H}_{i}^{(v)}$ and $\mathbf{Z}_{i}^{(v)}$ which can greatly reduce the time for follow-up work. Then we use the value of $\mathbf{Z}_{i}^{(v)}$ and $\mathbf{H}_{i}^{(v)}$ to initialize $\mathbf{S}$ and $\alpha^{(v)}$ by setting $\alpha^{(v)} = \frac{1}{V}$ and $\mathbf{S}=\sum_{v=1}^{V}\alpha^{(v)}\mathbf{H}_{m}^{(v)\mathrm{T}}\mathbf{H}_{m}^{(v)}$. At the beginning, we argue that each view has the same contribution, so we initialize $\mathbf{S}$ by the construction of $\mathbf{H}_m^{(v)}$ with the same weight.

\subsection{Optimization}\label{sec:Optimization}
Because the objective function Eq. ($\ref{object_all}$) is a non-convex problem, it seems unlikely to solve this problem in one step. So we propose a five-step alternate optimization method to address this problem. To reduce the total reconstruction error of the model, we also need to alternately minimize  $\mathbf{Z_{i}}^{(v)}$ and $\mathbf{H_{i}}^{(v)}$ in each layer.
\subsubsection{Update rule for matrix $\mathbf{Z}_{i}^{(v)}$}	
By fixing $\mathbf{H}_{i}^{(v)}$, $\mathbf{S}$, $\alpha^{(v)}$ and $\mathbf{Z}_{i}^{(o)}$($o\neq v$), we can update $\mathbf{Z}_{i}^{(v)}$ by solving the following problem without constraint, 
\begin{equation}\label{upZ}  
	\begin{array}{l}
		\min\limits_{\mathbf{Z}_{i}^{(v)}} \|\mathbf{X}^{(v)}-\phi\mathbf{Z}_{i}^{(v)}\hat{\mathbf{H}}_{i}^{(v)}\|_F^2,
	\end{array}
\end{equation}
where $\phi=\mathbf{Z}_{1}^{(v)}\mathbf{Z}_{2}^{(v)}\ldots\mathbf{Z}_{i-1}^{(v)}$, by setting $\partial\mathcal{C}/\partial \mathbf{Z}_{i}^{(v)}=0$, we can give the solutions as,
\begin{equation}
	\mathbf{Z}_{i}^{(v)}=\phi^{\dagger} \mathbf{X}^{(v)}\mathbf{\hat{H}}_{i}^{(v)\dagger},
\end{equation}	
where $\phi^{\dagger}=(\phi^{\mathrm{T}} \phi)^{-1} \phi^{\mathrm{T}}$ and $\mathbf{\hat{H}}_{i}^{(v) \dagger}=\mathbf{\hat{H}}_{i}^{(v)\mathrm{T}}(\mathbf{\hat{H}}_{i}^{(v)} \mathbf{\hat{H}}_{i}^{(v) \mathrm{T}})^{-1}$. $\mathbf{\hat{H}}_{i}^{(v)}=\mathbf{Z}_{i+1}^{(v)}\dots\mathbf{Z}_{m}^{(v)}\mathbf{H}_{m}^{(v)}$ and $\mathbf{\hat{H}}_{i}^{(v)}$ denotes the reconstruction of the $i$-th layer's representation for the $v$-th view.

\subsubsection{Update rule for matrix $\mathbf{H}_{i}^{(v)}(i \textless m)$}	
By fixing $\mathbf{Z}_{i}^{(v)}$, $\mathbf{H}_{m}^{(v)}$, $\alpha^{(v)}$ and $\mathbf{S}$, we can update $\mathbf{H}_{i}^{(v)}$ by solving the following problem, 
\begin{equation}\label{Hi}
	\min\limits_{\mathbf{H_{i}}^{(v)}} \|\mathbf{X}^{(v)}-\Phi\mathbf{H}_{i}^{(v)}\|_F^2,  \text{s.t.} \mathbf{H}_{i}^{(v)} \geq 0,
\end{equation}
where $\Phi=\mathbf{Z}_{1}^{(v)}\mathbf{Z}_{2}^{(v)}\ldots\mathbf{Z}_{i}^{(v)}$. Following the update rule in \cite{ding_convex_2010}, the update rule for $\mathbf{H}_{i}^{(v)}(i \textless m)$ can be written as,
\begin{equation}\label{reHi}
	\mathbf{H}_{i}^{(v)}=\mathbf{H}_{i}^{(v)} \odot \sqrt{\frac{\left[\Phi^{\mathrm{T}} \mathbf{X}^{(v)}\right]^{\mathrm{+}}+\left[\Phi^{\mathrm{T}} \Phi \mathbf{H}_{i}^{(v)}\right]^{\mathrm{-}}}{\left[\Phi^{\mathrm{T}} \mathbf{X}^{(v)}\right]^{\mathrm{-}}+\left[\Phi^{\mathrm{T}} \Phi \mathbf{H}_{i}^{(v)}\right]^{\mathrm{+}}}}.
\end{equation}
We also update $\mathbf{H}_{m}^{(v)}$ here for faster convergence and easier code writing.

\subsubsection{Update rule for matrix $\mathbf{H}_{m}^{(v)}$ }
By fixing $\mathbf{Z}_{i}^{(v)}$, $\mathbf{H}_{i}^{(v)}(i \textless m)$, $\alpha^{(v)}$ and $\mathbf{S}$, we can update $\mathbf{H}_{m}^{(v)}$ by solving the following problem, 
\begin{equation}\label{Hm}
	\min\limits_{\mathbf{H_{m}}^{(v)}}\|\mathbf{X}^{(v)}-\Phi\mathbf{H}_{m}^{(v)}\|_F^2
	+\beta\|\mathbf{S}-\alpha^{(v)}\mathbf{H}_{m}^{(v)\mathrm{T}}{\mathbf{H}_{m}^{(v)}}-\mathbf{G}\|_F^2,\text{s.t.} \mathbf{H}_{m}^{(v)} \geq 0,
\end{equation}
where the variables are defined as follows,
\begin{equation}\label{var1}
    \Phi=\mathbf{Z}_{1}^{(v)}\mathbf{Z}_{2}^{(v)}\ldots\mathbf{Z}_{m}^{(v)},\\
    \mathbf{G}=\sum_{o=1,o\neq v}^{V}\alpha^{(o)}\mathbf{H_{m}}^{(o)\mathrm{T}}{\mathbf{H_{m}}^{(o)}}.
\end{equation}
We give the updating rule of $\mathbf{H}_{m}^{(v)}$ firstly, followed by the proof of it.
\begin{equation}\label{reHm}
	\begin{split}
		&\mathbf{H}_{m}^{(v)}=\mathbf{H}_{m}^{(v)} \odot \sqrt{
			\vartheta_{u}(\mathbf{ZHS})/
			\vartheta_{l}(\mathbf{ZHS})},\\
		&\vartheta_{u}(\mathbf{ZHS})=\left[\Phi^{\mathrm{T}} \mathbf{X}^{(v)}\right]^{\mathrm{+}}+\left[\Phi^{\mathrm{T}}\Phi\mathbf{H}_{m}^{(v)}\right]^{\mathrm{-}}+
		\alpha^{(v)}\beta(\left[\mathbf{H}_{m}^{(v)}\mathbf{S}\right]^{\mathrm{+}}\\
		&+\left[\mathbf{H}_{m}^{(v)}\mathbf{S}^{\mathrm{T}}\right]^{\mathrm{+}}+\left[2\mathbf{H}_{m}^{(v)}\mathbf{G}\right]^{\mathrm{-}}+
		\left[2{\alpha^{(v)}}\mathbf{H}_{m}^{(v)}\mathbf{H}_{m}^{(v)\mathrm{T}}\mathbf{H}_{m}^{(v)}\right]^{\mathrm{-}}),\\
		&\vartheta_{u}(\mathbf{ZHS})=\left[\Phi^{\mathrm{T}} \mathbf{X}^{(v)}\right]^{\mathrm{-}}+\left[\Phi^{\mathrm{T}}\Phi\mathbf{H}_{m}^{(v)}\right]^{\mathrm{+}}+
		\alpha^{(v)}\beta(\left[\mathbf{H}_{m}^{(v)}\mathbf{S}\right]^{\mathrm{-}}\\
		&+\left[\mathbf{H}_{m}^{(v)}\mathbf{S}^{\mathrm{T}}\right]^{\mathrm{-}}+\left[2\mathbf{H}_{m}^{(v)}\mathbf{G}\right]^{\mathrm{+}}+
		\left[2{\alpha^{(v)}}\mathbf{H}_{m}^{(v)}\mathbf{H}_{m}^{(v)\mathrm{T}}\mathbf{H}_{m}^{(v)}\right]^{\mathrm{+}}).
	\end{split}
\end{equation}

\begin{theorem}
	The limited solution of the update rule in Eq. (\ref{reHm}) satisfies the KKT condition.
\end{theorem}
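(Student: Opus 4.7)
The plan is to verify that any limit point of the iteration in Eq. (\ref{reHm}) satisfies all four KKT conditions of the subproblem obtained from Eq. (\ref{Hm}) by fixing every variable except $\mathbf{H}_m^{(v)}$: primal feasibility $\mathbf{H}_m^{(v)} \geq 0$, dual feasibility $\Lambda \geq 0$, stationarity $\Lambda = \nabla_{\mathbf{H}_m^{(v)}} f$, and complementary slackness $\Lambda \odot \mathbf{H}_m^{(v)} = 0$. The strategy follows the template of \cite{ding_convex_2010,trigeorgis_deep_nodate}: compute the gradient of $f$ with respect to $\mathbf{H}_m^{(v)}$, split it into its non-negative and non-positive contributions so that the two halves reproduce the numerator $\vartheta_u$ and denominator $\vartheta_l$ in Eq. (\ref{reHm}), and then read off KKT directly from the fixed-point equation.

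Concretely, I would first form the Lagrangian by adjoining a non-negative multiplier matrix $\Lambda$ to the constraint $\mathbf{H}_m^{(v)} \geq 0$, so that the vanishing of the Lagrangian gradient gives $\Lambda = \nabla_{\mathbf{H}_m^{(v)}} f$. Differentiating the two summands of Eq. (\ref{Hm}) via the identity $\partial \operatorname{tr}(C \mathbf{H}^{\mathrm{T}} \mathbf{H}) / \partial \mathbf{H} = \mathbf{H}(C + C^{\mathrm{T}})$ yields $\tfrac{1}{2}\nabla_{\mathbf{H}_m^{(v)}} f = -\Phi^{\mathrm{T}} \mathbf{X}^{(v)} + \Phi^{\mathrm{T}} \Phi \mathbf{H}_m^{(v)} - \alpha^{(v)} \beta \mathbf{H}_m^{(v)} (\mathbf{S} + \mathbf{S}^{\mathrm{T}}) + 2\alpha^{(v)} \beta \mathbf{H}_m^{(v)} \mathbf{G} + 2(\alpha^{(v)})^{2} \beta \mathbf{H}_m^{(v)} \mathbf{H}_m^{(v)\mathrm{T}} \mathbf{H}_m^{(v)}$. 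Decomposing each summand as $[\cdot]^{+} - [\cdot]^{-}$ and regrouping exactly reproduces the expressions defined in Eq. (\ref{reHm}), giving the sign-split identity $\tfrac{1}{2}\nabla_{\mathbf{H}_m^{(v)}} f = \vartheta_l - \vartheta_u$. At a fixed point $\mathbf{H}_m^{(v)} = \mathbf{H}_m^{(v)} \odot \sqrt{\vartheta_u / \vartheta_l}$, the entrywise relation $(\mathbf{H}_m^{(v)})_{ij}^{2}\,[\vartheta_l - \vartheta_u]_{ij} = 0$ combined with the non-negativity of $\mathbf{H}_m^{(v)}$ forces $\mathbf{H}_m^{(v)} \odot \nabla_{\mathbf{H}_m^{(v)}} f = 0$; choosing $\Lambda = \nabla_{\mathbf{H}_m^{(v)}} f$ then gives stationarity and complementary slackness simultaneously, and primal feasibility is preserved throughout the iteration because $\vartheta_u$ and $\vartheta_l$ are non-negative and the Hadamard square root is well defined.

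The hard part is dual feasibility $\Lambda \geq 0$ restricted to the zero set of $\mathbf{H}_m^{(v)}$, because complementary slackness alone only pins down the gradient on the active support. I would handle this via the auxiliary-function argument of \cite{ding_convex_2010} adapted to the deep setting as in \cite{trigeorgis_deep_nodate}: construct $G(\mathbf{H}, \widetilde{\mathbf{H}})$ that majorizes $f$ with $G(\mathbf{H}, \mathbf{H}) = f(\mathbf{H})$ and whose unique non-negative minimizer in $\mathbf{H}$ coincides with the right-hand side of Eq. (\ref{reHm}); the delicate step here is bounding the quartic term $(\alpha^{(v)})^{2} \|\mathbf{H}_m^{(v)\mathrm{T}} \mathbf{H}_m^{(v)}\|_F^{2}$ and the positive cross-terms by a separable surrogate that is tight at $\widetilde{\mathbf{H}}$, which can be done with the standard inequality $\sum_{ij} a_{ij}^{2}/\tilde{a}_{ij} \geq \sum_{ij} (2 a_{ij} - \tilde{a}_{ij})$ together with a concavity bound for the negative cubic pieces. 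The resulting monotone decrease of $f$ then forces $\nabla_{\mathbf{H}} G(\mathbf{H}, \widetilde{\mathbf{H}})|_{\mathbf{H} = \widetilde{\mathbf{H}}}$ to vanish on the support and to be component-wise non-negative off the support of any limit $\widetilde{\mathbf{H}}$; transferring this via $\nabla_{\mathbf{H}} G(\mathbf{H}, \mathbf{H}) = \nabla f(\mathbf{H})$ gives $\Lambda \geq 0$ and closes the KKT verification.
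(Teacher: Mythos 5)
Your core argument is the same as the paper's: form the Lagrangian with a nonnegativity multiplier, compute the gradient $\tfrac{1}{2}\nabla f=-\Phi^{\mathrm{T}}\mathbf{X}^{(v)}+\Phi^{\mathrm{T}}\Phi\mathbf{H}_{m}^{(v)}-\alpha^{(v)}\beta\mathbf{H}_{m}^{(v)}(\mathbf{S}+\mathbf{S}^{\mathrm{T}})+2\alpha^{(v)}\beta\mathbf{H}_{m}^{(v)}\mathbf{G}+2(\alpha^{(v)})^{2}\beta\mathbf{H}_{m}^{(v)}\mathbf{H}_{m}^{(v)\mathrm{T}}\mathbf{H}_{m}^{(v)}$ (which matches the paper's), split each term as $[\cdot]^{+}-[\cdot]^{-}$ so that $\tfrac{1}{2}\nabla f=\vartheta_{l}-\vartheta_{u}$, and note that a limit point of the multiplicative rule satisfies the fixed-point identity, whence $\mathbf{H}_{m}^{(v)}\odot\nabla f=0$, i.e.\ complementary slackness --- exactly the chain in Eqs.~(\ref{proofHm1})--(\ref{proofHm2}) (incidentally, the second $\vartheta_{u}$ in Eq.~(\ref{reHm}) should read $\vartheta_{l}$, as your split makes clear). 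Where you depart from the paper is that you go further: the paper's proof, following \cite{ding_convex_2010}, stops at this fixed-point/complementary-slackness verification and does not address dual feasibility of the multiplier on entries where $\mathbf{H}_{m}^{(v)}$ vanishes, whereas you propose to close that gap with an auxiliary-function (majorization) argument. That extra step is also the only place where your plan carries real risk rather than the paper's: the separable surrogate with denominators $\tilde{a}_{ij}$ is undefined at zero entries, and the multiplicative update keeps zero entries at zero, so deducing $\Lambda\geq 0$ off the support of the limit point needs additional care (e.g.\ strictly positive initialization plus a limiting argument); relative to what the paper actually proves, your proposal is, if anything, the more complete one.
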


\begin{proof}
	We introduce the Lagrangian function as
	\begin{equation}	
		\begin{split}
			\mathbf{L}(\mathbf{H}_{m}^{(v)})=&\|\mathbf{X}^{(v)}-\Phi\mathbf{H}_{m}^{(v)}\|_F^2+\beta\|\mathbf{S}-\alpha^{(v)}\mathbf{H}_{m}^{(v)\mathrm{T}}\mathbf{H}_{m}^{(v)}-\mathbf{G}\|_F^2\\&-\operatorname{Tr}(\eta\mathbf{H}_{m}^{(v)}),
		\end{split}
	\end{equation}
	In order to satisfy the constraint $\mathbf{H}_{m}^{(v)}\geq0$, we introduce the Lagrangian multiplier $\eta$. By setting $\partial\mathbf{L}(\mathbf{H}_{m}^{(v)})/\partial \mathbf{H}_{m}^{(v)}=0$, we can obtation:
	\begin{equation}
		\begin{split}
			&\partial \mathbf{L}(\mathbf{H}_{m}^{(v)})/\partial \mathbf{H}_{m}^{(v)} =-2( \Phi^\mathrm{T} \mathbf{X}^{(v)}-\Phi^\mathrm{T} \Phi \mathbf{H}_{m}^{(v)}+\alpha^{(v)}\beta\mathbf{H}_{m}^{(v)}\mathbf{S}\\&+\alpha^{(v)}\beta\mathbf{H}_{m}^{(v)}\mathbf{S}^{\mathrm{T}}-2\alpha^{(v)}\beta\mathbf{H}_{m}^{(v)}\mathbf{G}-2\alpha^{(v)2}\beta\mathbf{H}_{m}^{(v)}\mathbf{H}_{m}^{(v)\mathrm{T}}\mathbf{H}_{m}^{(v)})\\&-\eta=0.
		\end{split}
	\end{equation}
	From the complementary slackness condition, we can obtain,
	\begin{equation}\label{proofHm1}
		\begin{split}
			&(-\Phi^\mathrm{T}\mathbf{X}^{(v)}+\Phi^\mathrm{T}\Phi\mathbf{H}_{m}^{(v)}-\alpha^{(v)}\beta\mathbf{H}_{m}^{(v)}\mathbf{S}-\alpha^{(v)}\beta\mathbf{H}_{m}^{(v)}\mathbf{S}^{\mathrm{T}}+\\
			&2\alpha^{(v)}\beta\mathbf{H}_{m}^{(v)}\mathbf{G}+2\alpha^{(v)2}\beta\mathbf{H}_{m}^{(v)}\mathbf{H}_{m}^{(v)\mathrm{T}}\mathbf{H}_{m}^{(v)})\mathbf{H}_{m}^{(v)}=\eta\mathbf{H}_{m}^{(v)}=0
		\end{split}
	\end{equation}
	Both the equations require that at least one of the two factors is equal to zero, so Eq. (\ref{proofHm1}) and Eq. (\ref{proofHm2}) have the same meaning. We multiply both sides by $\mathbf{H}_{m}^{(v)}$ and we can obtain:
	\begin{equation}\label{proofHm2}
		\begin{split}
			&(-\Phi^\mathrm{T}\mathbf{X}^{(v)}+\Phi^\mathrm{T}\Phi\mathbf{H}_{m}^{(v)}-\alpha^{(v)}\beta\mathbf{H}_{m}^{(v)}\mathbf{S}-\alpha^{(v)}\beta\mathbf{H}_{m}^{(v)}\mathbf{S}^{\mathrm{T}}+\\
			&2\alpha^{(v)}\beta\mathbf{H}_{m}^{(v)}\mathbf{G}+2\alpha^{(v)2}\beta\mathbf{H}_{m}^{(v)}\mathbf{H}_{m}^{(v)\mathrm{T}}\mathbf{H}_{m}^{(v)})\mathbf{H}_{m}^{(v)2}=0.
		\end{split}
	\end{equation}
	Eq. (\ref{proofHm2}) is a fixed point equation. By noting that $\Phi^\mathrm{T}\mathbf{X}^{(v)}=\left[\Phi^\mathrm{T}\mathbf{X}^{(v)}\right]^{+}-\left[\Phi^\mathrm{T}\mathbf{X}^{(v)}\right]^{-}$, $\Phi^\mathrm{T}\Phi\mathbf{H}_{m}^{(v)}=\left[\Phi^\mathrm{T}\Phi\mathbf{H}_{m}^{(v)}\right]^{+}-\left[\Phi^\mathrm{T}\Phi\mathbf{H}_{m}^{(v)}\right]^{-}$ etc, it is easy to get the update rule Eq. (\ref{reHm}) for $\mathbf{H}_{m}^{(v)}$ and to see that the equation satisfies the fixed point equation. At convergence, $\mathbf{H}_{m}^{(v)(\infty)}=\mathbf{H}_{m}^{(v)(t+1)}=\mathbf{H}_{m}^{(v)(t)}=\mathbf{H}_{m}^{(v)}$.
\end{proof}

\subsubsection{Update rule for matrix $\mathbf{S}$}	
By fixing $\mathbf{Z}_{i}^{(v)}$, $\mathbf{H}_{i}^{(v)}$ and $\alpha^{(v)}$, we can update $\mathbf{S}$ by solving the following problem, 
\begin{equation}\label{update S}
	\min\limits_{\mathbf{S}}\|\mathbf{S}-\mathbf{Q}\|_F^2,
	\text{s.t.}\mathbf{S}\mathbf{1}=\mathbf{1},\mathbf{S}\geq 0,\text{diag}\mathbf{(S)}=0,
\end{equation}
where $\mathbf{Q}=\sum_{v=1}^{V}\alpha^{(v)}\mathbf{H}_{m}^{(v)\mathrm{T}}{\mathbf{H}_{m}^{(v)}}$. This problem yields a close-formed solution that,
\begin{equation}\label{reS}
	\mathbf{S}_{i,:}=\max \left({\mathbf{Q}}_{i,:}+\gamma\mathbf{1}^\mathrm{T}, 0\right), \mathbf{S}_{i i}=0, \gamma=\frac{1-{\mathbf{Q}}_{i,:} \mathbf{1}}{n},
\end{equation}
where $\mathbf{S}_{i,:}$ is the $i$-th row of $\mathbf{S}$, ${\mathbf{Q}}_{i,:}$ is the $i$-th row of ${\mathbf{Q}}$.

\begin{theorem}
	Eq. (\ref{reS}) is the close-formed solution of Eq. (\ref{update S}).
\end{theorem}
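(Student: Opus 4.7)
The plan is to exploit separability: since $\|\mathbf{S}-\mathbf{Q}\|_F^2 = \sum_{i=1}^n \|\mathbf{S}_{i,:} - \mathbf{Q}_{i,:}\|_2^2$ and the three constraints $\mathbf{S}\mathbf{1}=\mathbf{1}$, $\mathbf{S}\geq 0$ and $\text{diag}(\mathbf{S})=0$ all act row-wise, the optimization decomposes into $n$ independent subproblems, one per row. So I would reduce the claim to showing that, for each fixed $i$, the vector $\mathbf{s} = \mathbf{S}_{i,:}^{\mathrm{T}}\in\mathbb{R}^{n}$ with $s_i=0$, $\mathbf{s}\geq 0$, $\mathbf{1}^{\mathrm{T}}\mathbf{s}=1$ that minimizes $\|\mathbf{s}-\mathbf{q}\|_2^2$ with $\mathbf{q}=\mathbf{Q}_{i,:}^{\mathrm{T}}$ is given by the formula in Eq.~(\ref{reS}).

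Next I would form the Lagrangian
\begin{equation}
  \mathcal{L}(\mathbf{s},\gamma,\boldsymbol{\lambda}) = \tfrac{1}{2}\|\mathbf{s}-\mathbf{q}\|_2^2 - \gamma(\mathbf{1}^{\mathrm{T}}\mathbf{s}-1) - \boldsymbol{\lambda}^{\mathrm{T}}\mathbf{s},
\end{equation}
with $\boldsymbol{\lambda}\geq 0$ for the nonnegativity constraint and $\gamma\in\mathbb{R}$ for the affine one, and handle the diagonal constraint $s_i=0$ by simply dropping the $i$-th coordinate from the optimization and leaving it fixed at zero. Stationarity gives $s_j = q_j + \gamma + \lambda_j$ for $j\neq i$, and KKT complementary slackness $\lambda_j s_j = 0$ together with feasibility $s_j\geq 0, \lambda_j\geq 0$ forces $s_j = \max(q_j + \gamma, 0)$, which is precisely the max operator appearing in Eq.~(\ref{reS}). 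Since the objective is strictly convex and the feasible set is a nonempty convex polytope, any KKT point is the unique minimizer, so this derivation is both necessary and sufficient.

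The remaining step is to pin down $\gamma$ from the simplex constraint $\sum_{j\neq i} s_j = 1$, i.e.\ $\sum_{j\neq i}\max(q_j+\gamma,0)=1$. In the regime where every entry $q_j+\gamma$ is nonnegative (so the $\max$ is inactive) this collapses to $\gamma = (1 - \sum_{j\neq i} q_j)/n$, reproducing the expression in Eq.~(\ref{reS}) once the contribution of $q_i$ is folded in by the zero-diagonal convention. I would note explicitly that in this clipping-free regime the formula is exact, and sketch how the standard simplex-projection argument (with sorting of $\mathbf{q}$) extends $\gamma$ to the general case, which keeps the same piecewise-linear form.

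The main obstacle I anticipate is the interplay between the zero-diagonal constraint and the simplex constraint: one must be careful that setting $s_i=0$ does not violate complementary slackness and does not create a pathological case where $\gamma$ has to absorb the removed coordinate. I would address this by optimizing from the start over $\mathbb{R}^{n-1}$ (coordinates $j\neq i$) so that the $s_i=0$ assignment is built into the feasible set and plays no role in deriving $\gamma$, leaving the $\max(\mathbf{Q}_{i,:}+\gamma\mathbf{1}^{\mathrm{T}},0)$ form together with $S_{ii}=0$ as the natural closed-form solution.
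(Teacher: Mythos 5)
Your proposal follows essentially the same route as the paper's proof: decompose Eq.~(\ref{update S}) into $n$ independent row-wise problems, form the Lagrangian with multipliers for the simplex and nonnegativity constraints, and read off $\mathbf{S}_{i,:}=\max(\mathbf{Q}_{i,:}+\gamma\mathbf{1}^{\mathrm{T}},0)$ from the KKT stationarity and complementary-slackness conditions. In fact you are more careful than the paper on the one delicate point it glosses over --- that the stated value $\gamma=\frac{1-\mathbf{Q}_{i,:}\mathbf{1}}{n}$ is exact only in the clipping-free regime (and up to how the zero-diagonal coordinate is folded in), with the general case requiring the sorted simplex-projection determination of $\gamma$ --- so your argument is a sound and slightly sharper version of the same proof.
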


\begin{proof} 
	The problem of Eq. (\ref{update S}) can be easily rewritten into $n$ row-formed independent optimization problems as follow,
	\begin{equation}\label{update S2}
		\min _{{S}_{i,:}}\left\|\mathbf{S}_{i,:}-{\mathbf{Q}}_{i,:}\right\|_{\mathrm{F}}^{2}, 
		\text{s.t.}\mathbf{S}_{i,:} \geq 0, \mathbf{S}_{i,:}\mathbf{1}=1, \mathbf{S}_{i, i}=0,
	\end{equation}
	The Lagrangian function of Eq. (\ref{update S2}) is,
	\begin{equation}\label{lagrangianS}
		\mathcal{L}\left(\mathbf{S}_{i,:}, \gamma, \eta\right)=\left\|\mathbf{S}_{i,:}-{\mathbf{Q}}_{i,:}\right\|_{F}^{2}-\gamma\left(\mathbf{S}_{i,:}\mathbf{1}-1\right)-\eta\mathbf{S}_{i,:}^\mathrm{T},
	\end{equation}
	where $\gamma$ and $\eta$ are the Lagrangian multipliers for the constraints $\mathbf{S}_{i,:} \geq 0$ and $\mathbf{S}_{i,:}\mathbf{1}=1$ respectively. Then the KKT condition is written as,
	\begin{equation}
		\left\{
		\begin{array}{lr}
			\mathbf{S}_{i,:}-{\mathbf{Q}}_{i,:}-\gamma\mathbf{1}^\mathrm{T}-\eta=0, \\
			\eta \odot \mathbf{S}_{i,:}^\mathrm{T}=0,
		\end{array}
		\right.
	\end{equation}
	We can easily obtain the Eq. (\ref{reS}).
\end{proof}

\subsubsection{Update rule for coefficient $\alpha$}
By fixing $\mathbf{Z}_{i}^{(v)}$, $\mathbf{H}_{i}^{(v)}$ and $\mathbf{S}$, we can update $\alpha$ by solving the following problem, 
\begin{equation}\label{update alpha1}
	\min\limits_{\alpha}\|\mathbf{S}-\sum_{v=1}^{V}\alpha^{(v)}\mathbf{H}_{m}^{(v)\mathrm{T}}\mathbf{H}_{m}^{(v)}\|_F^2,
	\text{s.t.} \sum_{v=1}^{V}\alpha^{(v)}=1, \alpha^{(v)} \geq 0.
\end{equation}
Supposing $\mathbf{Q}=\sum_{v=1}^{V}\alpha^{(v)}\mathbf{H_{m}}^{(v)\mathrm{T}}{\mathbf{H_{m}}^{(v)}}$, we have that,
\begin{equation}\label{update_alpha2}
	\|\mathbf{S}-\mathbf{Q}\|_F^2=n-2\operatorname{Tr}\left(\mathbf{S}^{\top}\mathbf{Q}\right)+\operatorname{Tr}\left(\mathbf{Q}^{\top}\mathbf{Q}\right).
\end{equation}

Note that $\mathbf{Q}$=$\sum_{v=1}^{V}\alpha^{(v)}\mathbf{H_{m}}^{(v)\mathrm{T}}{\mathbf{H_{m}}^{(v)}}$, we have $\mathbf{Q}^{\operatorname{T}}$=$\mathbf{Q}$. Taking them into Eq. (\ref{update_alpha2}), the optimization can be written as follows,
\begin{equation}\label{re_alpha}
	\min _{\alpha} \frac{1}{2} \alpha^\mathrm{T} \mathbf{A} \alpha-\mathbf{f}^\mathrm{T}\alpha,
	\text{s.t.} \alpha^\mathrm{T} \mathbf{1}=1, \quad \alpha \geq 0,
\end{equation}    
where the variables are defined as follows,
\begin{equation}\label{var2}
    \begin{aligned}
        &\mathbf{f}^{\mathrm{T}}=\left[f_{1}, f_{2}, \ldots, f_{V}\right],\\
        &\mathbf{f}_{v}=\operatorname{Tr}\left(\mathbf{S}^\mathrm{T}\mathbf{H}_{m}^{(v)\mathrm{T}} \mathbf{H}_{m}^{(v)}\right),\\
        &\mathbf{A}_{p q}=\operatorname{Tr}\left(\mathbf{H}_{m}^{(p)\mathrm{T}}\mathbf{H}_{m}^{(p)}\mathbf{H}_{m}^{(q)\mathrm{T}}\mathbf{H}_{m}^{(q)}\right),\\
        &\alpha=\left[\alpha^{(1)},\alpha^{(2)},\ldots,\alpha^{(V)}\right]^{\mathrm{T}}.
    \end{aligned}
\end{equation}    

For every $x \in \mathbb{R}^{m}$, we have that $x^\mathrm{T}\mathbf{A}x=\left\|\sum_{v=1}^{V} x_{v}\mathbf{H}_{m}^{(v)}\mathbf{H}_{m}^{(v)\mathrm{T}}\right\|_{F}^{2} \geq 0$. So the matrix $\mathbf{A}$ is a positive semi-definite matrix and quadratic programming could be used in Eq. (\ref{re_alpha}).

The entire approach is outlined in Algorithm \ref{Algorithm-proposed}. We train the proposed algorithm at least 150 iterations until convergence, then we perform spectral clustering on $\mathbf{S}$ to obtain the clustering results.

\alglanguage{pseudocode}
\begin{algorithm}[t]
	\caption{MVC-DMF-GGR}
	\label{Algorithm-proposed}
	\hspace*{0.02in}{$\mathbf{Input:}$}  \hspace*{0.02in}
	Set of given Multi-view data matrices $\mathbf{X}^{(v)}(1\leq v \leq V)$, tuning parameters $\beta$ and the depth of layers $p$.\\
	\hspace*{0.02in}{$\mathbf{Initialize:}$} Initialize $\mathbf{H}_{i}^{(v)}$ and $\mathbf{Z}_{i}^{(v)}$ according to \ref{init}, and then initialize $\alpha^{(v)}$ and  $\mathbf{S}$.\\
	\hspace*{0.02in}{$\mathbf{Output:}$} Performing spectral clustering on \textbf{S}.
	
	\begin{algorithmic}[1]
		\While{not\;convergence}
			\For{i $\leq$ m}
				\State update $\mathbf{Z}_{i}^{(v)}$ by solving Eq. (\ref{upZ}).
				\State update $\mathbf{H}_{i}^{(v)}$ by solving Eq. (\ref{reHi}).
			\EndFor
    		\State update $\mathbf{H}_{m}^{(v)}$ by solving Eq. (\ref{reHm}).
    		\State update $\mathbf{S}$ by solving Eq. (\ref{reS}).
    		\State update $\alpha$ by solving Eq. (\ref{re_alpha}).
		\EndWhile 
		\State \Return Similarity matrix $\mathbf{S}.$ Performing spectral clustering on $\mathbf{S}$ to get final clustering partition.
	\end{algorithmic}
\end{algorithm}

\subsection{Analysis and discussions}  

$\textit{Computational Complexity}$: Pre-training and fine-tuning are the two main stages of our proposed method, and we will analyze them
separately. To make the analysis clearer, we assume the dimensions in all the layers are the same. So we denote $l$ and the dimensions of the original feature for all the views are the same which denoted $d$. $t_{pre}$ denotes the number of iterations to achieve convergence in pre-training process and $t_{fine}$ denotes the number of iterations to achieve convergence in fine-tuning process. So the complexity of pre-training and fine-tuning stages are $O(Vm{t_{pre}}(dnl+nl^{2}+ld^{2}+ln^{2}+dn^{2}))$ and $O(Vm{t_{fine}}(dnl+nl^{2}+ld^{2}+ln^{2}+dn^{2}))$ respectively, where $l \le d$ normally. In conclusion, the time complexity of our algorithm is $O(Vm(dnl+ld^{2}+dn^{2})(t_{pre}+t_{fine}))$.

$\textit{Convergence}$: It is easy to obtain that the lower bound of the whole optimization function is 0. When we optimize one variable with fixing the others, the four (optimizing $\mathbf{Z}_{i}^{(v)}$ and $\mathbf{H}_{i}^{(v)}$ as one subproblem) subproblems are strictly convex and the objective of Algorithm \ref{Algorithm-proposed} is monotonically decreased at each iteration. As a result, the proposed algorithm can be confirmed to be convergent.

\begin{figure*}
	
	\begin{minipage}{\linewidth}
		\vspace{3pt}
		\centerline{\includegraphics[width=\textwidth]{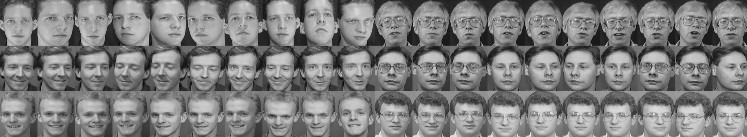}}
		\centerline{(a) ORL}
	\end{minipage}
	\begin{minipage}{\linewidth}
		\vspace{3pt}
		\centerline{\includegraphics[width=\textwidth]{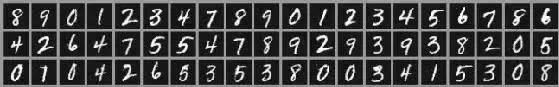}}
		\centerline{(b) HW}
	\end{minipage}
	\caption{Sample images from ORL and HW.}
	\label{samples}
\end{figure*}

\begin{table}[t]
	\caption{{Datasets used in our experiments.}}\label{datasets} 
	\centering
	\begin{tabular}{ccccc}
		\toprule
		Dataset       & Views & Samples & Classes & Datatypes \\
		\midrule
		HW   & 2     & 2000    & 10               & Image     \\
		BBCSport      & 2     & 544     & 5                & Text      \\
		3Sources      & 3     & 169     & 6                & Text      \\
		BBC           & 4     & 685     & 5                & Text      \\
		CiteSeer      & 2     & 3312    & 6                & Text      \\
		ORL           & 3     & 400     & 40               & Image     \\
		\bottomrule
	\end{tabular}
\end{table}

\section{Experiments}
\label{experiments}
In this part, we evaluate the clustering performance, the parameter sensitivity, and the convergence of Algorithm \ref{Algorithm-proposed} on six benchmark datasets.
\subsection{Benchmark Datasets}

We select six datasets of two types: image and text. The key information of the datasets is shown in Table \ref{datasets} and the sample images from two image data sets are illustrated in Figure \ref{samples}. The details of these datasets are given below:
\begin{enumerate}
	\item {\textbf{HW}\footnote{\footnotesize{\texttt{http://archive.ics.uci.edu/ml/datasets/Multiple+\\ 
	Features}}}} contains 2000 images of 0-9 ten-digit classes. Each class has 200 images, which are described by six views. These classes including Profile correlations (216), Fourier coefficients (76), Karhunen coefficients (64), Morphological (6), Pixel averages (240), and Zernike moments (47). The number in brackets represents the dimension of each view. The data we use just includes two views with Profile correlations and Pixel averages.
	\item {\textbf{BBCSprot}\footnote{\footnotesize{\texttt{http://mlg.ucd.ie/datasets/segment.html.}}}} is derived from the BBC Sport section. It contains 544 documents and each document is split into two related segments as views. The dimension of two views are 3183 and 3203 respectively. 
	\item {\textbf{BBC}\footnote{\footnotesize{\texttt{http://mlg.ucd.ie/datasets/segment.html.}}}} is derived from the BBC news corporan. It contains 685 documents and each document is split into four related segments as views, which dimensions are 4659, 4633, 4665 and, 4684 respectively.
	
	\item {\textbf{3Sources}\footnote{\footnotesize{\texttt{http://mlg.ucd.ie/datasets/3sources.html.}}}} is a document dataset collected from BBC, Reuters, and The Guardian. It contains 169 documents and these documents belong to six different themes including technology, health, business, politics, entertainment, and sport.
	
	\item {\textbf{CiteSeer}\footnote{\footnotesize{\texttt{http://lig-membres.imag.fr/grimal/data.html.}}}} is collected from citeseer website. It contains 3312 documents and each document is described by content and citations. 3312 documents can be classified into six classes, including Agents, AI, DB, IR, ML and HCI.

	\item {\textbf{ORL}\footnote{\footnotesize{\texttt{http://www.cl.cam.ac.uk/research/dtg/.}}}} is created by the Olivetti Research Laboratory in Cambridge, England. It is a face dataset containing 400 images of 40 different people. For each subject, images are taken at different times, lights, facial expression (open or closed eyes, smiling or not smiling), and facial details (with glasses or not). Each image uses three kinds of features which called intensity feature, LBP feature, and Gabor feature to obtain three views.
	
\end{enumerate}

\subsection{Compared Method}
We compare our proposed Algorithm \ref{Algorithm-proposed} with the following methods, including 10 state-of-the-art multi-view clustering algorithms. Eight algorithms include four matrix decomposition clustering algorithms, Co-training algorithms and other SOTA multi-view clustering algorithms.
\begin{enumerate}
	\item Perform k-means to every view and get the result of each view, then select the best one as the final result. We call the method \textbf{BKM}.
	\item \textbf{AKM} is regarded as a baseline method. It concatenates all of the views as one view and performs k-means to get the final result.
	\item Kernel-based weighted multi-view clustering (\textbf{MVKKM}) \cite{tzortzis2012kernel} expresses all views by given kernel matrices. A weighted combination of the kernels is learned in parallel to the partitioning.
	\item Multi-view k-means clustering on big data (\textbf{RMKMC}) \cite{cai2013multi} proposes a new robust large-scale multi-view clustering method adaptively to integrate heterogeneous representations of large scale data and then induces structured sparsity-inducing norm to make it more robust to outliers.
	\item A co-training approach for multi-view spectral clustering (\textbf{Co-train}) \cite{kumar_co-training_nodate} has been proposed with a flavor of co-training. They work on the assumption that the true underlying clustering would assign a point to the same cluster irrespective of the view with no hyperparameters. 
	\item Feature extraction via multi-view non-negative matrix factorization with local graph regularization (\textbf{MultiNMF}) \cite{wang2015feature} is motivated by manifold learning and multi-view NMF. The inner-view relatedness between data is taken into consideration. 
	\item Adaptive Structure Concept Factorization for Multi-view Clustering (\textbf{MVCF}) \cite{zhan2018adaptive} is a method for data integration. This method correlates the affinity weights of all views with the inter-view correlation.
	\item Self-weighted multi-view clustering with soft capped norm (\textbf{SCaMVC}) \cite{huang2018self} learns an optimal weight for each view automatically without introducing an additive parameter. It mainly deals with different level noises and outliers by using soft capped norm.
	
	\item Multi-view clustering via deep semi-NMF (\textbf{DMVC}) \cite{zhao_multi-view_nodate} proposes a deep matrix factorization framework for MVC. A graph regularization term is added to a deep NMF framework for preserving the inherent structure of the origin data. It is required that the representation in the last layer of each view is the same.
	\item Auto-weighted multi-view clustering via deep matrix decomposition (\textbf{AwDMVC}) \cite{huang_auto-weighted_2020} learns lower-level hidden attributes for the subsequent clustering task. The weights of different views are automatically assigned without introducing extra hyperparameters.
\end{enumerate}

\begin{table*}[t]
	\caption{The ACC comparison of the different algorithms on six benchmark datasets.}
	\centering
	\label{performance_ACC}
	\setlength{\tabcolsep}{5mm}{
	\begin{tabular}{cccccccc}
		\toprule
		ACC        & BBCSport           		  & 3Source              & BBC                   & CiteSeer             & ORL                  & HW                &Average Rank\\
		\midrule
		BKM   & 42.93                        & 47.97                & 45.37                & 41.43                & 56.00                & 81.95               &6.33\\
		AKM & 47.97                        & 49.77                & 40.36                & 46.02                & 58.25                & 64.90               &6.50\\
		RMKMC      & 45.93                        & 35.39                & 33.82                & 22.77                & 24.50                & 66.10               &8.83\\
		MVKKM      & 40.45                        & 46.39                & 44.92                & 23.75                & 62.50                & 61.90               &8.17\\
		Co-train   & 39.18                        & 33.15                & 32.71                & 26.44                & 72.50                & 80.15               &7.67\\
		MultiNMF   & 57.51                        & 50.28                & 48.26                & 40.22                & 23.75                & 78.54               &5.83\\
		MVCF       & 63.24                        &  58.21   & 65.75                & 47.21                & 66.50                & 76.75               &3.17\\
		ScaMVC     & 43.67                        & 54.23                & 51.95                & 23.47                & 61.75                & 75.20               &6.33\\
		DMVC       & 43.81                        & 44.21                & 49.48                & 24.83                & 77.00                & 38.70               &6.83\\
		AwDMVC     & 70.76                        & 55.86                & 62.34                & 48.25                & -                    & -                   &5.33\\
		Ours    &  \textbf{91.73} 		   & \textbf{70.41}                &  \textbf{71.68}   &  \textbf{53.86}   & \textbf{79.25}             & \textbf{84.75}  &\textbf{1.00}\\
		\bottomrule 
	\end{tabular}}
\end{table*}

\begin{table*}[t]
	\caption{The NMI comparison of the different algorithms on six benchmark datasets.}
	\centering
	\label{performance_NMI}
	\setlength{\tabcolsep}{5mm}{
	\begin{tabular}{cccccccc}
		\toprule
		NMI & BBCSport                     & 3Source                      & BBC                          & CiteSeer                     & ORL                          & HW                           &Average Rank\\
		\midrule
		BKM     & 20.75                        & 25.73                        & 27.56                        & 16.41                        & 74.44                        & 75.89                        &6.33\\
		AKM   & 27.64                        & 30.58                        & 22.06                        & 20.21                        & 77.22                        & 62.23                        &6.33\\
		RMKMC        & 24.27                        & 15.27                        & 12.11                        & 11.43                        & 56.89                        & 76.57                        &8.17\\
		MVKKM        & 19.09                        & 26.71                        & 20.96                        & 11.85                        & 77.97                        & 65.64                        &8.17\\
		Co-train     & 16.48         & 10.41                        &10.94       & 12.25                        & 86.61                        & \textbf{ 76.59}    &7.67\\
		MultiNMF     & 37.96                        & 42.47                        & 27.37                        & 20.10                        & 37.98                        & 74.64                        &5.50\\
		MVCF         & 40.45                        & 48.16                        & 42.80                        & 21.10                        & 83.90                        & 68.74                         &3.67\\
		ScaMVC       & 20.36                        & 31.57                        & 20.18                        & 12.29                        & 78.92                        & 75.64                        &6.67\\
		DMVC         & 26.04                        & 33.35                        & 20.16                        & 13.01                        & 88.00                        & 38.65                        &6.50\\
		AwDMVC       & 46.82                        & 46.09 &42.82    & 22.01    & -                            & -                            &5.17\\
		Ours      & \textbf{79.38}    & \textbf{59.53}    &\textbf{45.60} &\textbf{25.38} & \textbf{90.75}    & 74.11 &\textbf{1.83}\\
		\bottomrule                       
	\end{tabular}}
\end{table*}

\begin{table*}[ht]
	\caption{The PUR comparison of the different algorithms on six benchmark datasets.}
	\centering
	\label{performance_PUR} 
	\setlength{\tabcolsep}{5mm}{
	\begin{tabular}{cccccccc}
		\toprule
		PUR & BBCSport                     & 3Source                      & BBC                          & CiteSeer                     & ORL                          & HW                           &Average Rank\\
		\midrule
		BKM     & 44.84                        & 51.47                        & 46.47                        & 42.59                        & 62.00                        & 81.95                        &6.33\\
		AKM   & 49.36                        & 56.32                        & 40.63                        & 45.70                        & 63.00                        & 68.30                        &6.83\\
		RMKMC        & 45.94                        & 36.64                        & 33.82                        & 21.76                        & 24.50                        & 70.05                        &9.00\\
		MVKKM        & 37.61                        & 49.88                        & 46.35                        & 24.53                        & 68.50                        & 65.50                        &8.50\\
		Co-train     & 43.68                        & 34.92                        & 33.15                        & 28.64                        & 76.68                        & 80.92    &7.50\\
		MultiNMF     & 59.23                        & 63.05                        & 48.25                        & 41.92                        & 23.75                        & 79.81                        &5.33\\
		MVCF         & 63.42                        & 60.05                        & 65.84                        & 48.77                        & 70.25                        & 76.80                        &3.83\\
		ScaMVC       & 44.26                        & 60.33                        & 52.56                        & 23.96                        & 66.00                        & 75.20                        &6.67\\
		DMVC         & 51.36                        & 62.80 & 48.38                        & 28.14                        & 79.75 & 38.60                        &5.50\\
		AwDMVC       & 65.99                        & 62.45 & 63.80    & 50.00    & -                            & -                            &5.50\\
		Ours      & \textbf{ 91.73}    & \textbf{ 80.47}    &\textbf{72.12} &\textbf{55.98} & \textbf{82.25}    &\textbf{ 84.90}  &\textbf{1.00}\\
		\bottomrule
	\end{tabular}}
\end{table*}

\subsection{Experiment Setup}

For the proposed mehtod, all original feature matrices should be normalized firstly. We set the number of clusters is the true number of classes for each dataset. The trade-off parameters $\beta$ is selected from $\left[2^{-7},2^{-5},\dots,2^{5},2^{7}\right]$. We assume that the layer size should be correlated with the number of clusters, so we design two schemes with one layer size $p_2=\left[l_1,l_2\right]$ and another layer size $p_3=\left[l_1,l_2,k\right]$. Where $l_1,l_2$ in $p_2$ are chosen from $\left[4k,8k,12k\right]$ and $\left[k,2k,3k\right]$ respectively and $l_1,l_2$ in $p_3$ are chosen from $\left[7k,11k,15k \right]$ and $\left[2k,3k,4k \right]$ respectively. The reason why the third layer of $p_3$ is fixed to $k$ will be explained in the subsection \ref{last_layer}. For these compared methods, we obtain their paper and code from the autors' websites and obey the setting of the hyper-parameters in the paper. 

The clustering performance is evaluated by three widely used criteria, including clustering accuracy (ACC), normalized mutual information (NMI), and purity (PUR). We repeat each experiment 50 times to avoid the effect of the random initialization and save the best result. All experiments are conducted on a desktop computer with Intel i9-9900K CPU @ 3.60GHz$\times$16 and 64GB RAM, MATLAB 2017a (64bit).

\subsection{Experiment Results}

Table \ref{performance_ACC}, Table \ref{performance_NMI} and Table \ref{performance_PUR} show the clustering performance which is measured by ACC, NMI, and PUR on six benchmark datasets. The best results of all datasets in all algorithms mark in bold. Based on these tables, we can obtain the following conclusions:
\begin{itemize} 
	\item As for Table \ref{performance_ACC} which measured by ACC, 
	we can find it seems better to connect all features then do k-means than perform k-means on all single views then get the best in most of the time. So using all information of data is always better than a certain aspect. It is prominent that the performance of our proposed method is always best. The ACC of our algorithm exceeds the second best method by $20.97\%$, $12.20\%$, $5.93\%$, $5.61\%$, $2.25\%$, and $2.80\%$ on BBCSport, 3Sources, BBC, CiteSeer, ORL and HW respectively. What stands out is the performance of on the image datasets ORL and HW. Both of them just have little increase in ACC and the algorithm of Co-train exceeds our algorithm by $2.21\%$ on NMI reported in Table \ref{performance_NMI}. It proves that our algorithm is more suitable for text datasets from another angle. 
	\item Comparing with the DMVC, our proposed algorithm has achieved good results on six benchmark datasets and improved clustering performance. And both of them use a deep semi-NMF framework. The results show that real-time reconstruction of the global graph instead of a fixed graph structure will learn a better representation for the original data and a global consensus graph.
	\item AwDMVC also uses the framework of deep NMF. It assigns a weight to each view automatically when learning feature representation layer by layer without other hyperparameters being introduced. We outperform AwDMVC on all datasets by a large margin, which shows the merits of combining the representation learning and consensus graph reconstructing.
\end{itemize}

In addition, we report NMI in Table \ref{performance_NMI} and PUR in Table \ref{performance_PUR} and get the same conclusions as Table \ref{performance_ACC}.

As a result, we have demonstrated our proposed method is effective compared with other state-of-the-art methods by analyzing the above experimental results. We attribute the superiority of our proposed algorithm with two factors: i\text{)} Our proposed method is based on a deep matrix decomposition framework, so it is can more likely find the meaningful representation layer by layer. ii\text{)} We abandon the original graph structure which results in a bad clustering effect and use the learned new good representation to reconstruct a consistent graph for clustering. iii\text{)} We propose a framework that unifies representation learning and consensus graph constructing, so learning representation and reconstructing the graph can mutually promote each other.

\begin{figure*}
	\centering
	\begin{minipage}{0.24\linewidth}
		\vspace{3pt}
		\centerline{\includegraphics[width=\textwidth]{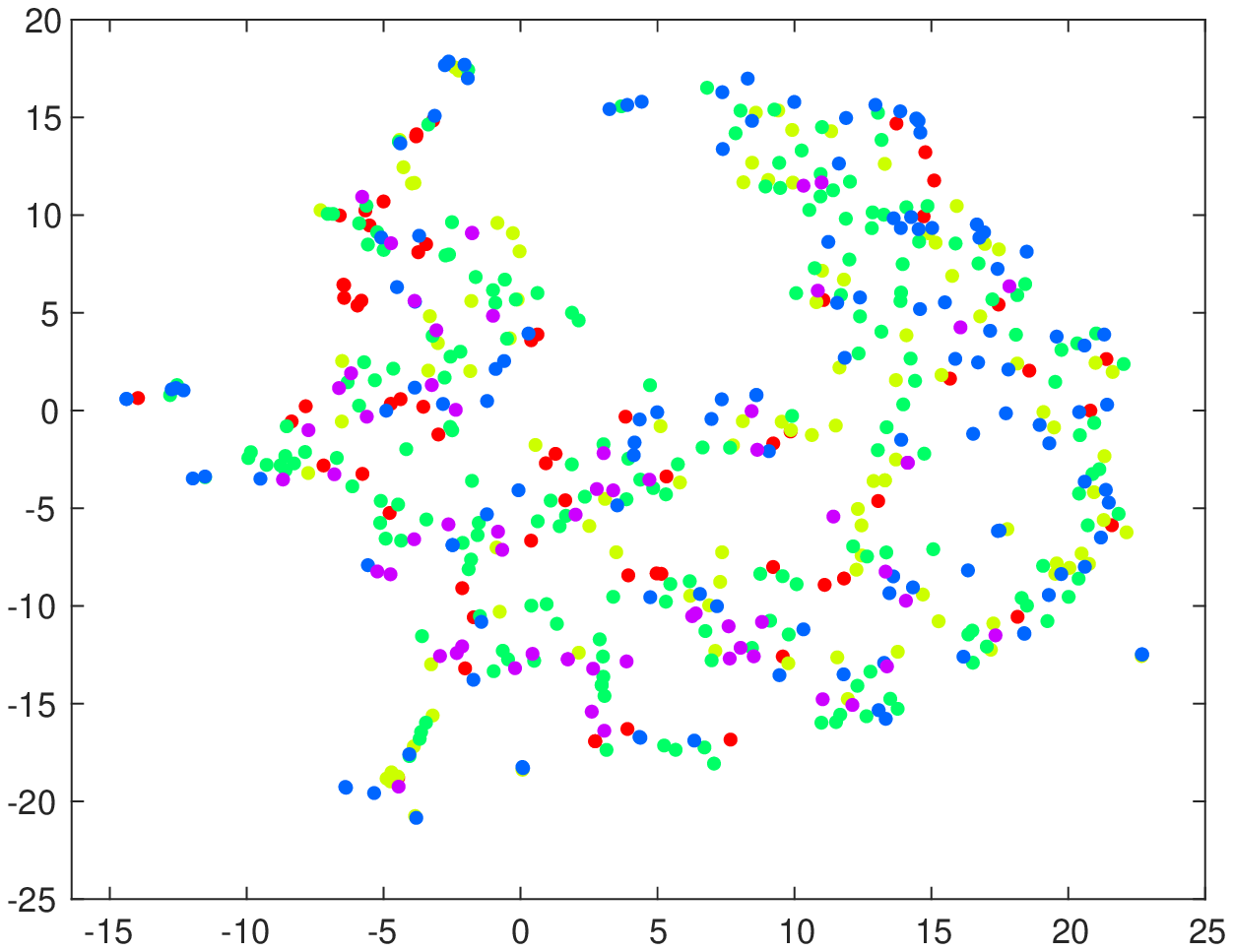}}
		\centerline{(a) BBCSports(iter=1)}
	\end{minipage}
	\begin{minipage}{0.24\linewidth}
		\vspace{3pt}
		\centerline{\includegraphics[width=\textwidth]{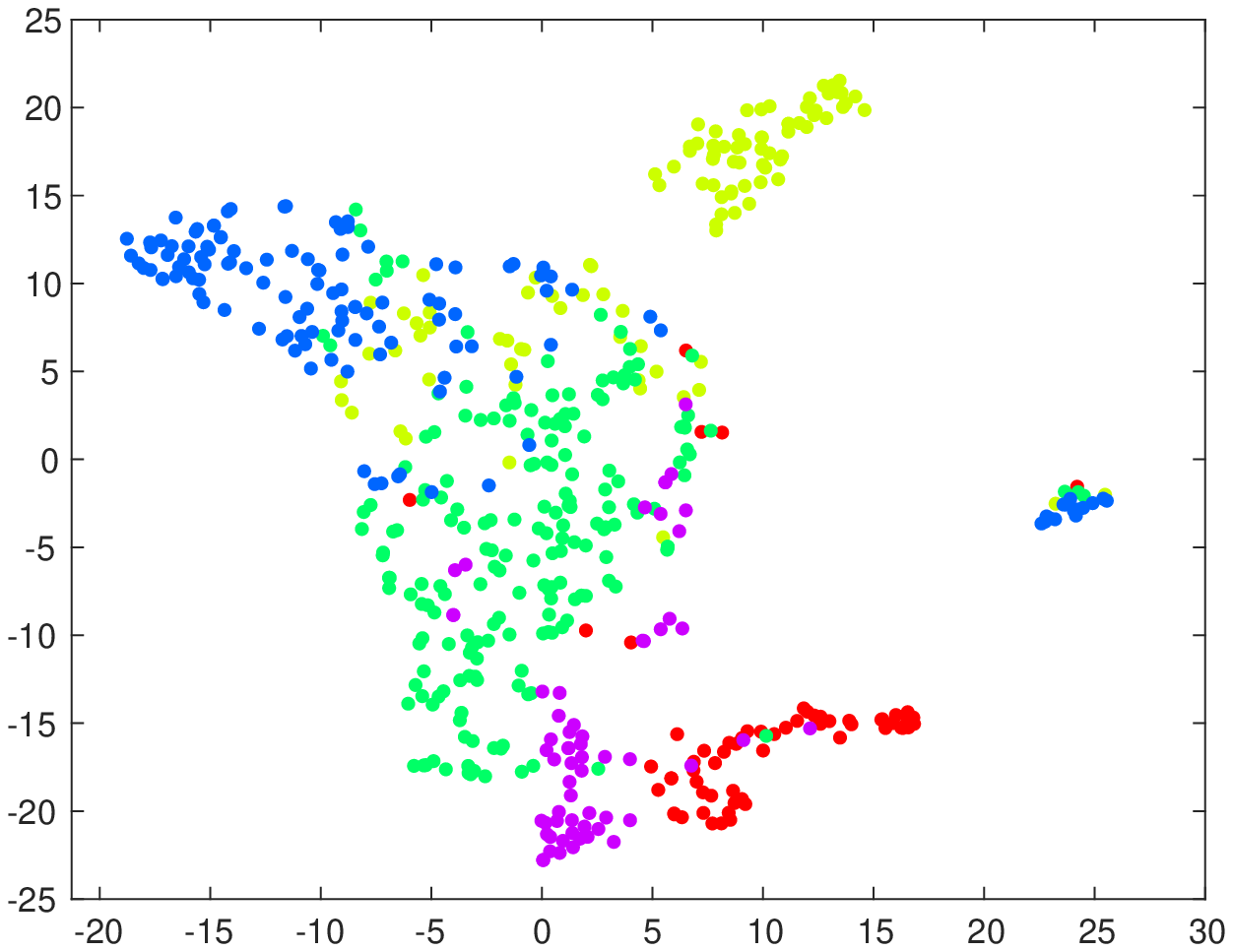}}
		\centerline{(b) BBCSports(iter=10)}
	\end{minipage}
	\begin{minipage}{0.24\linewidth}
		\vspace{3pt}
		\centerline{\includegraphics[width=\textwidth]{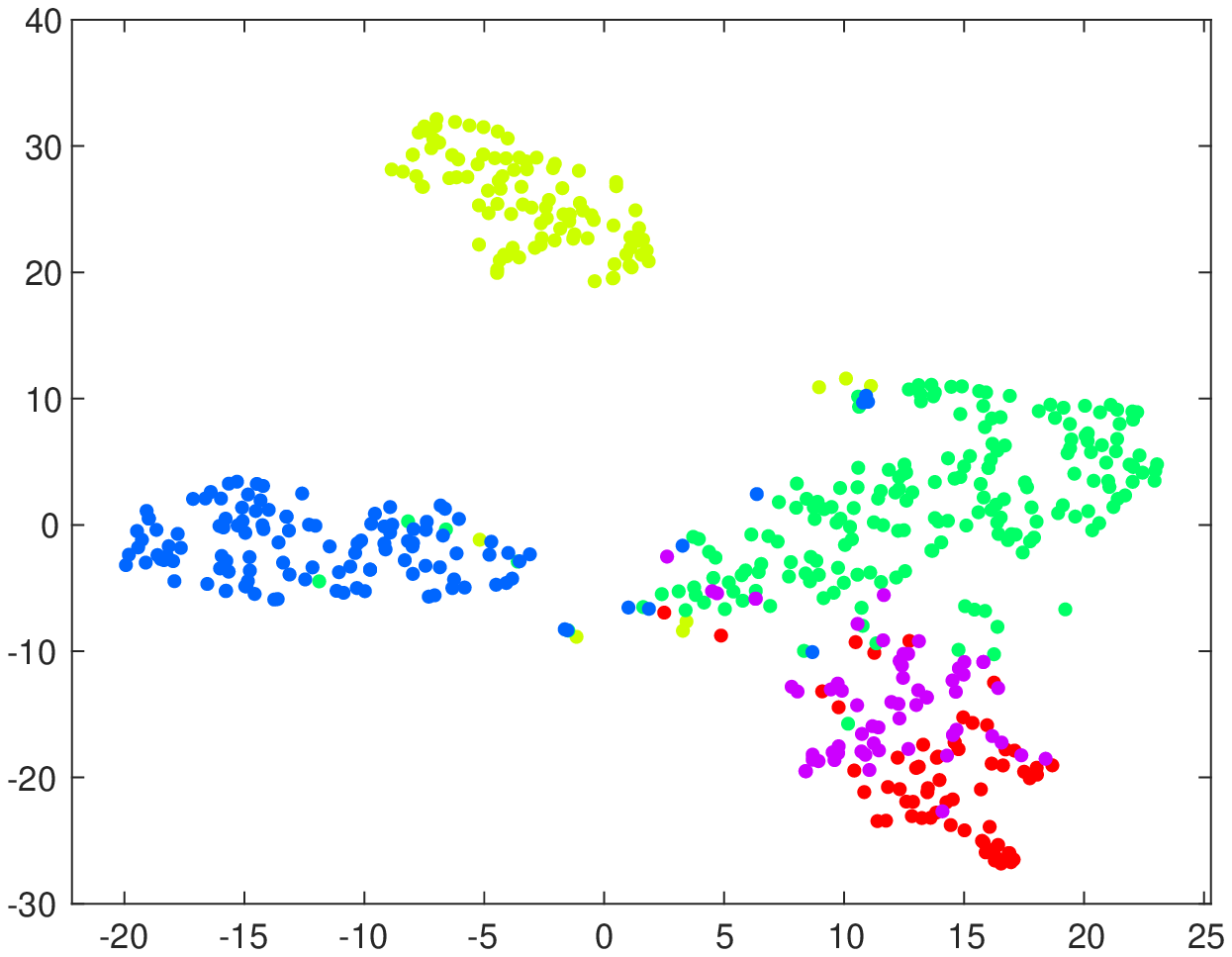}}
		\centerline{(c) BBCSports(iter=50)}
	\end{minipage}
	\begin{minipage}{0.24\linewidth}
		\vspace{3pt}
		\centerline{\includegraphics[width=\textwidth]{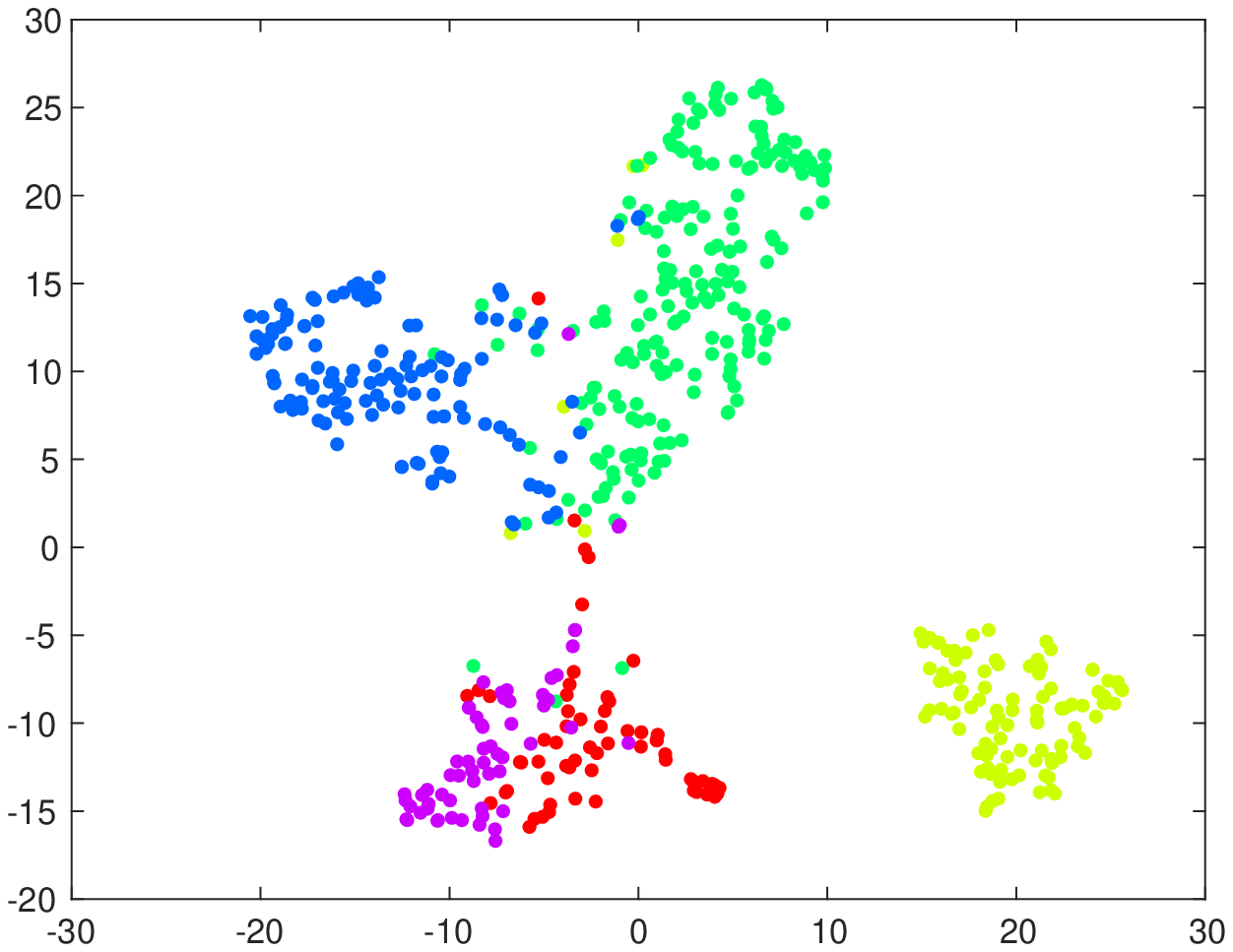}}
		\centerline{(d) BBCSports(iter=150)}
	\end{minipage}

	\begin{minipage}{0.24\linewidth}
		\vspace{3pt}
		\centerline{\includegraphics[width=\textwidth]{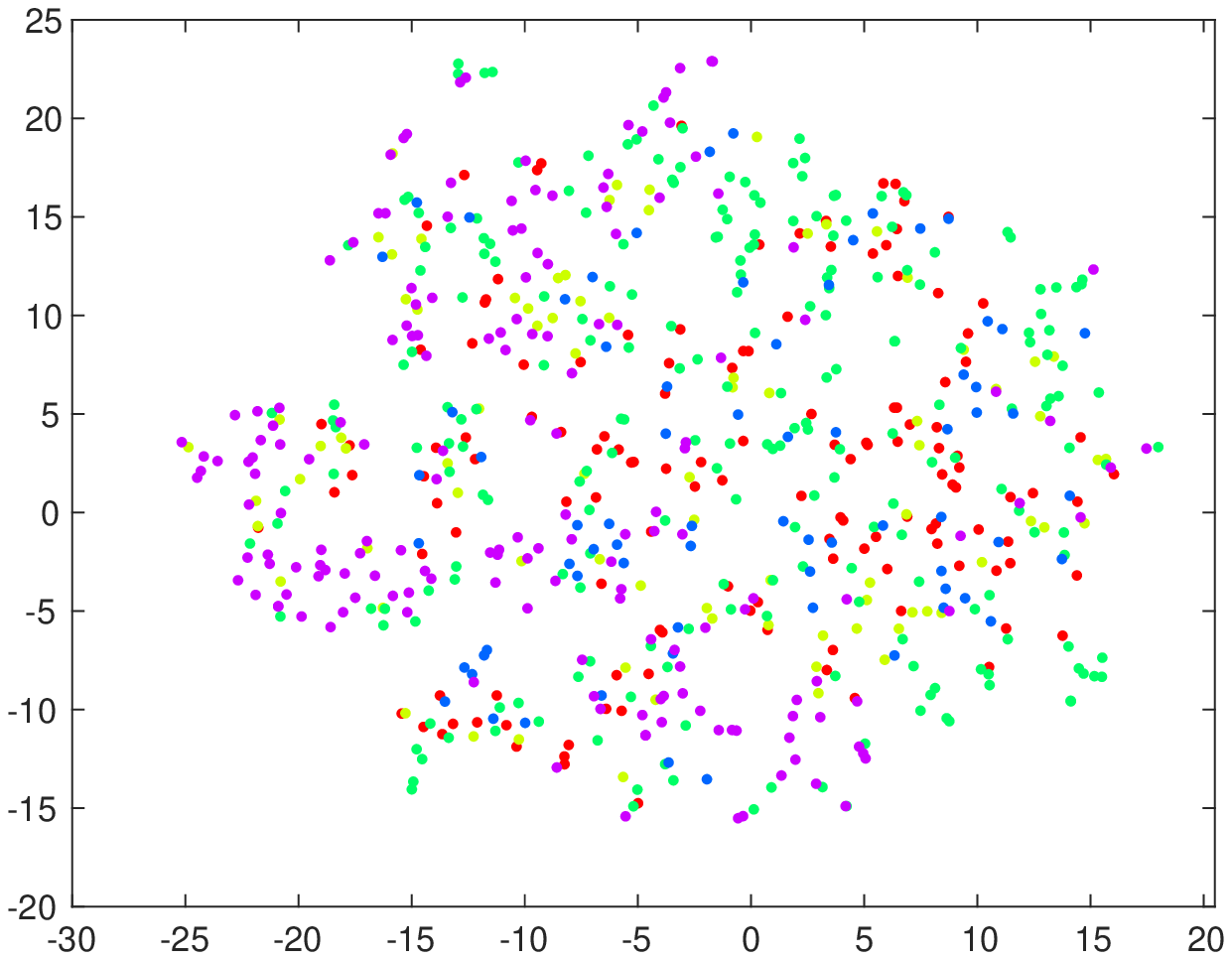}}
		\centerline{(e) BBC(iter=1)}
	\end{minipage}
	\begin{minipage}{0.24\linewidth}
		\vspace{3pt}
		\centerline{\includegraphics[width=\textwidth]{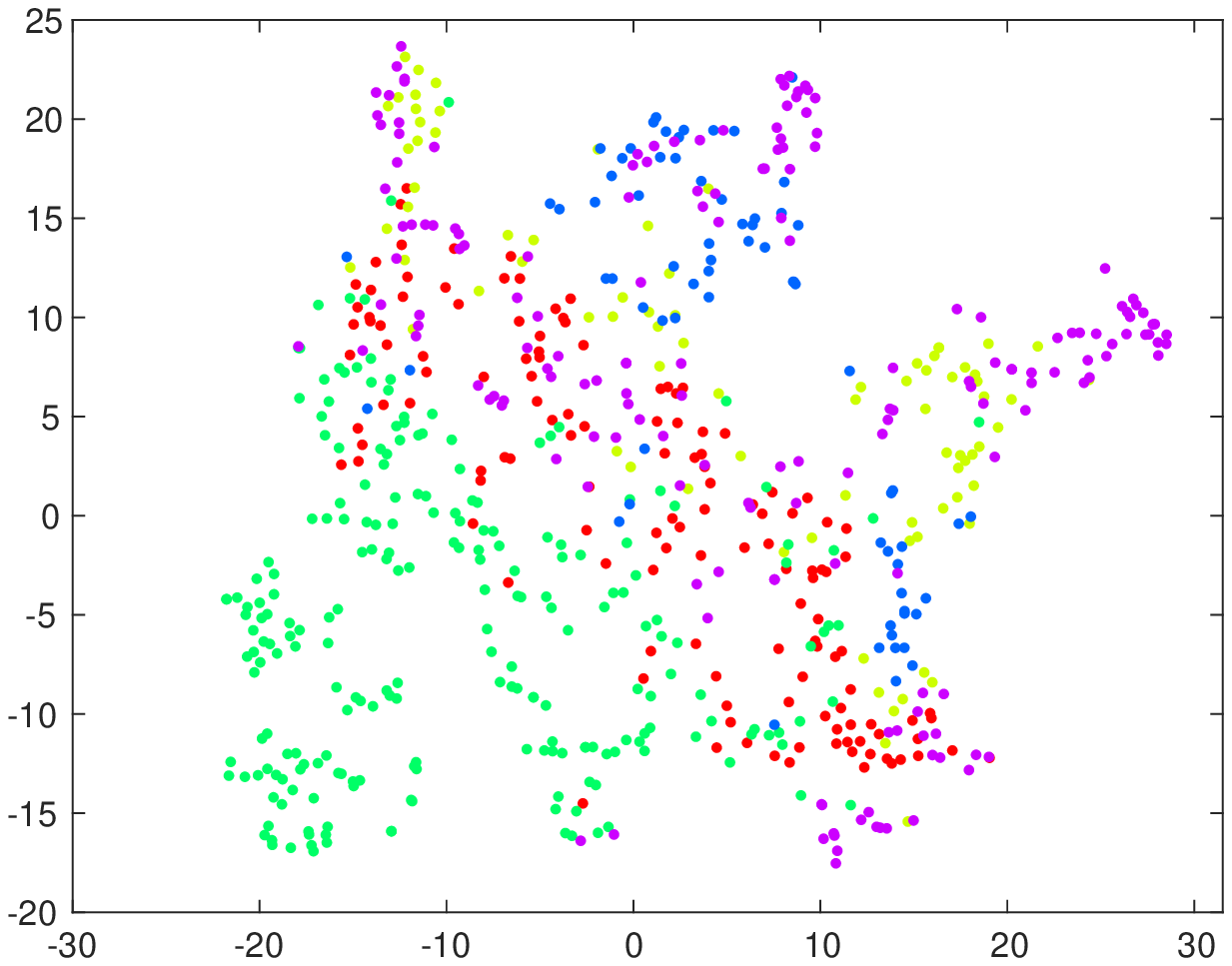}}
		\centerline{(f) BBC(iter=10)}
	\end{minipage}
	\begin{minipage}{0.24\linewidth}
		\vspace{3pt}
		\centerline{\includegraphics[width=\textwidth]{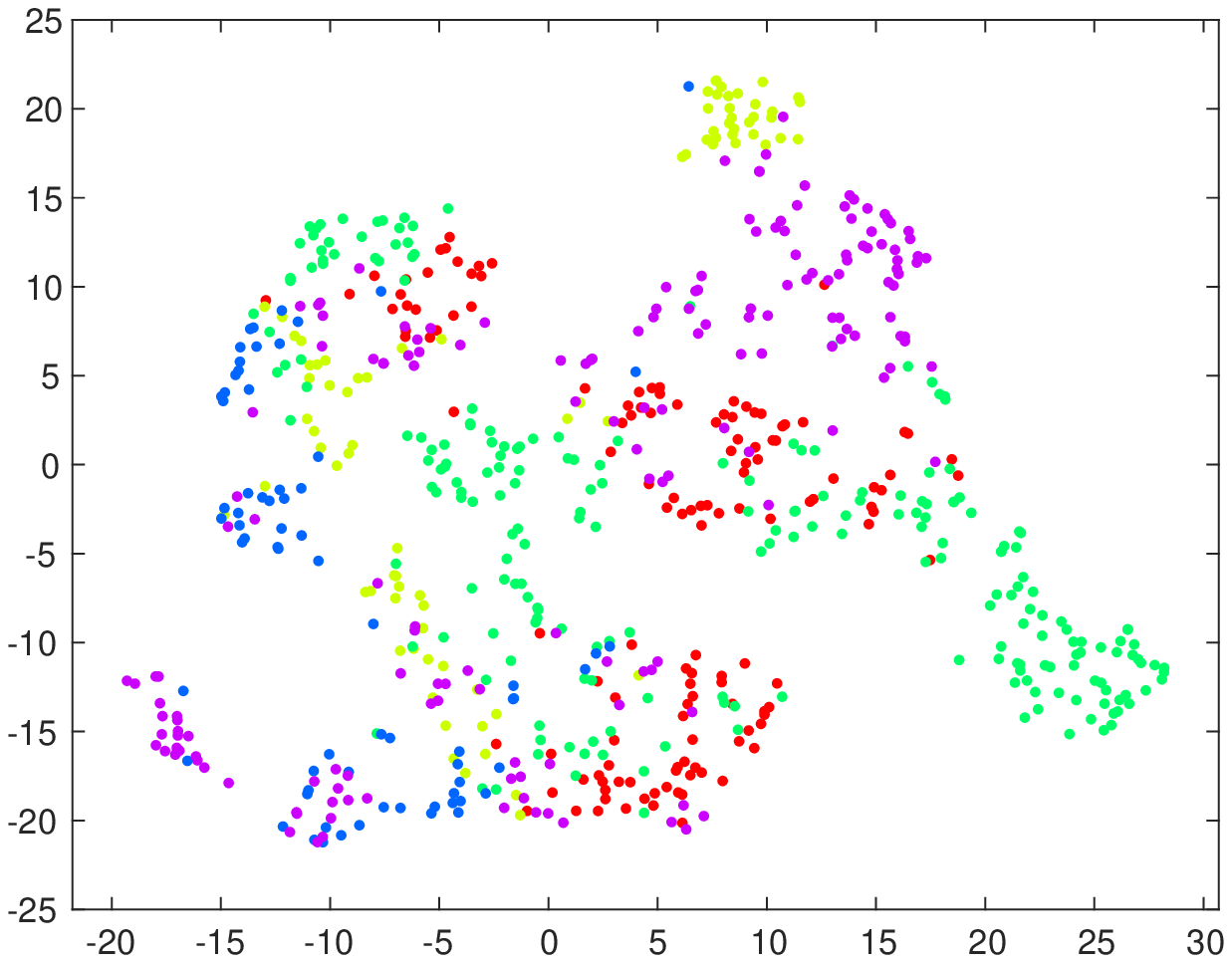}}
		\centerline{(g) BBC(iter=50)}
	\end{minipage}
	\begin{minipage}{0.24\linewidth}
		\vspace{3pt}
		\centerline{\includegraphics[width=\textwidth]{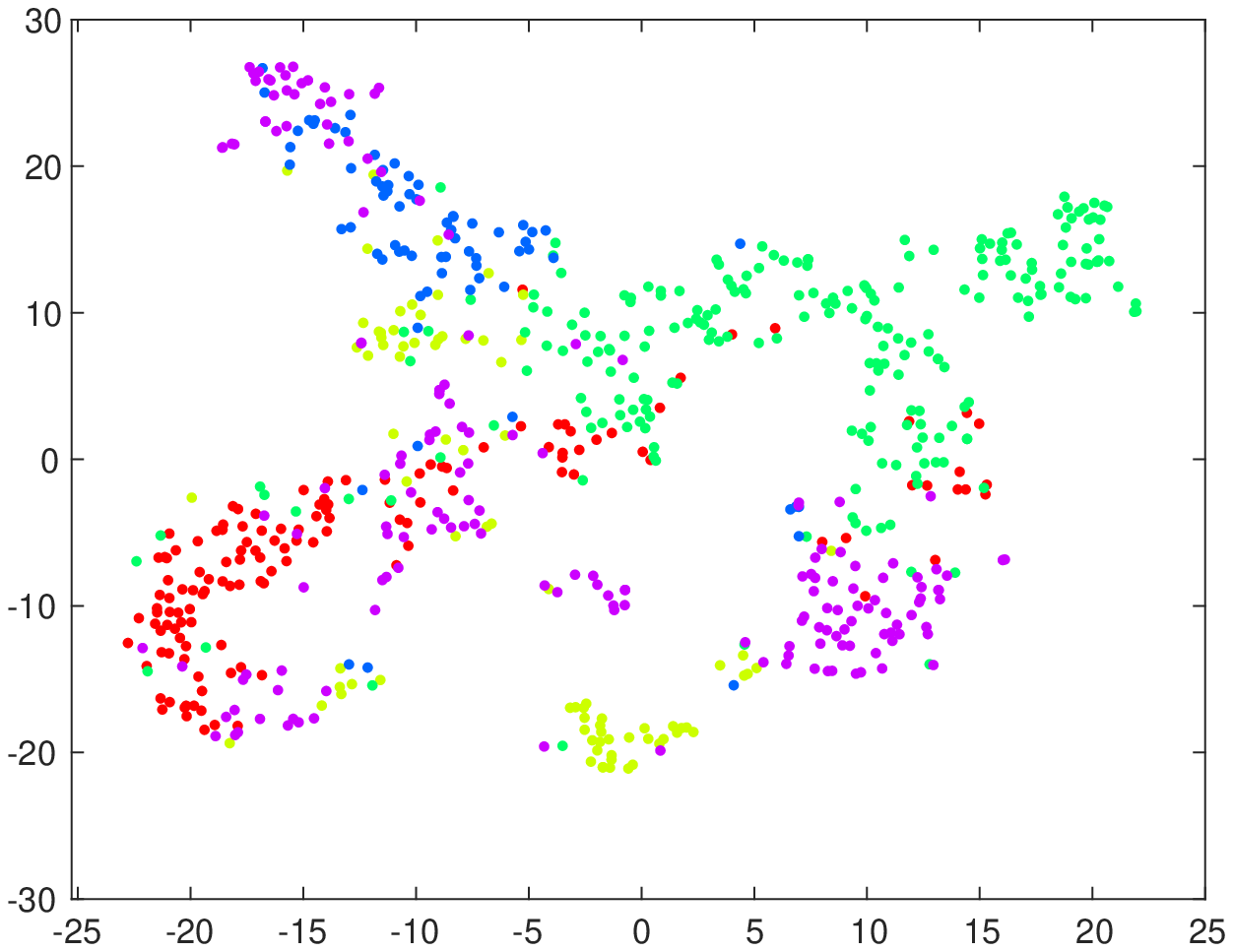}}
		\centerline{(h) BBC(iter=150)}
	\end{minipage}

	\begin{minipage}{0.24\linewidth}
		\vspace{3pt}
		\centerline{\includegraphics[width=\textwidth]{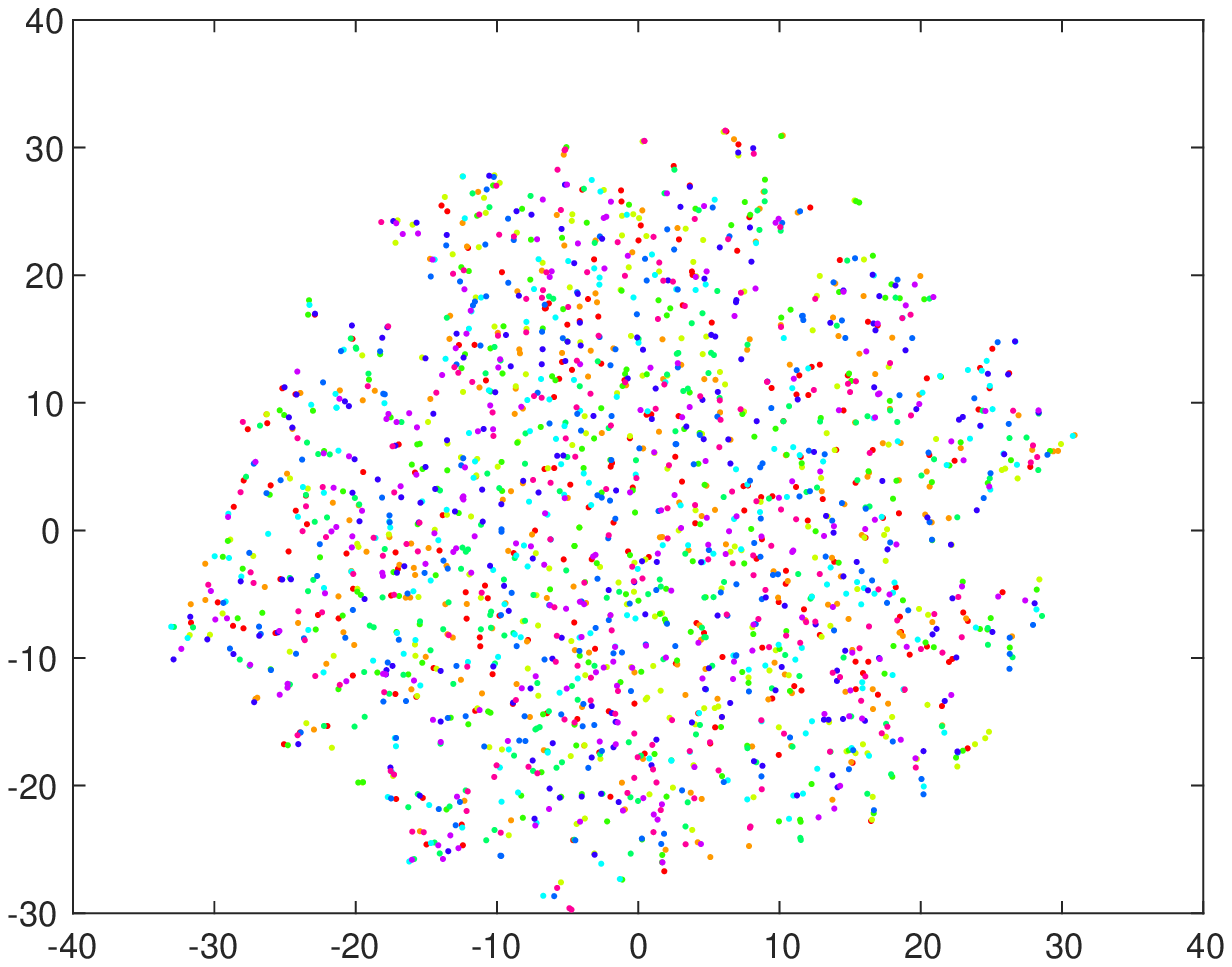}}
		\centerline{(i) HW(iter=1)}
	\end{minipage}
	\begin{minipage}{0.24\linewidth}
		\vspace{3pt}
		\centerline{\includegraphics[width=\textwidth]{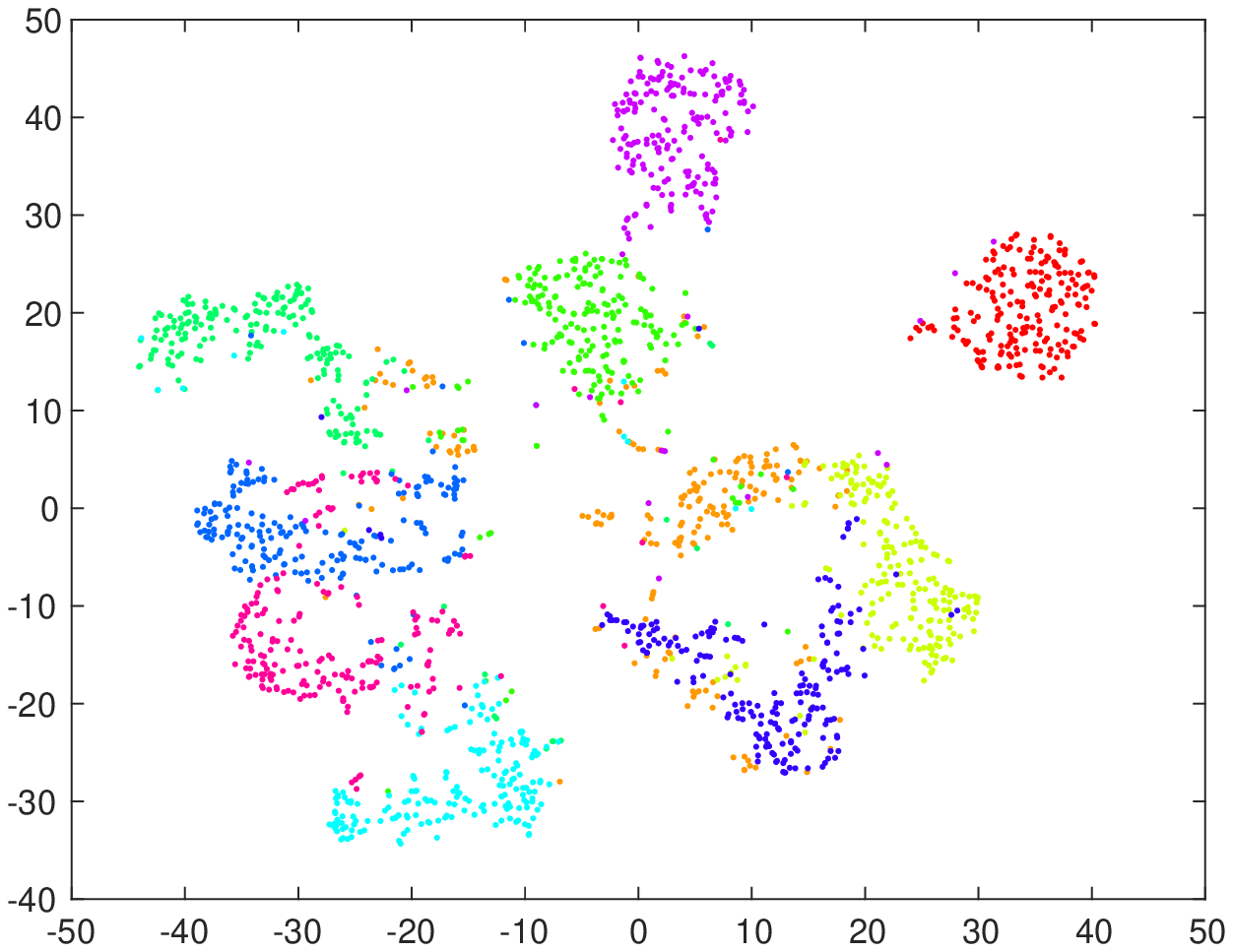}}
		\centerline{(j) HW(iter=10)}
	\end{minipage}
	\begin{minipage}{0.24\linewidth}
		\vspace{3pt}
		\centerline{\includegraphics[width=\textwidth]{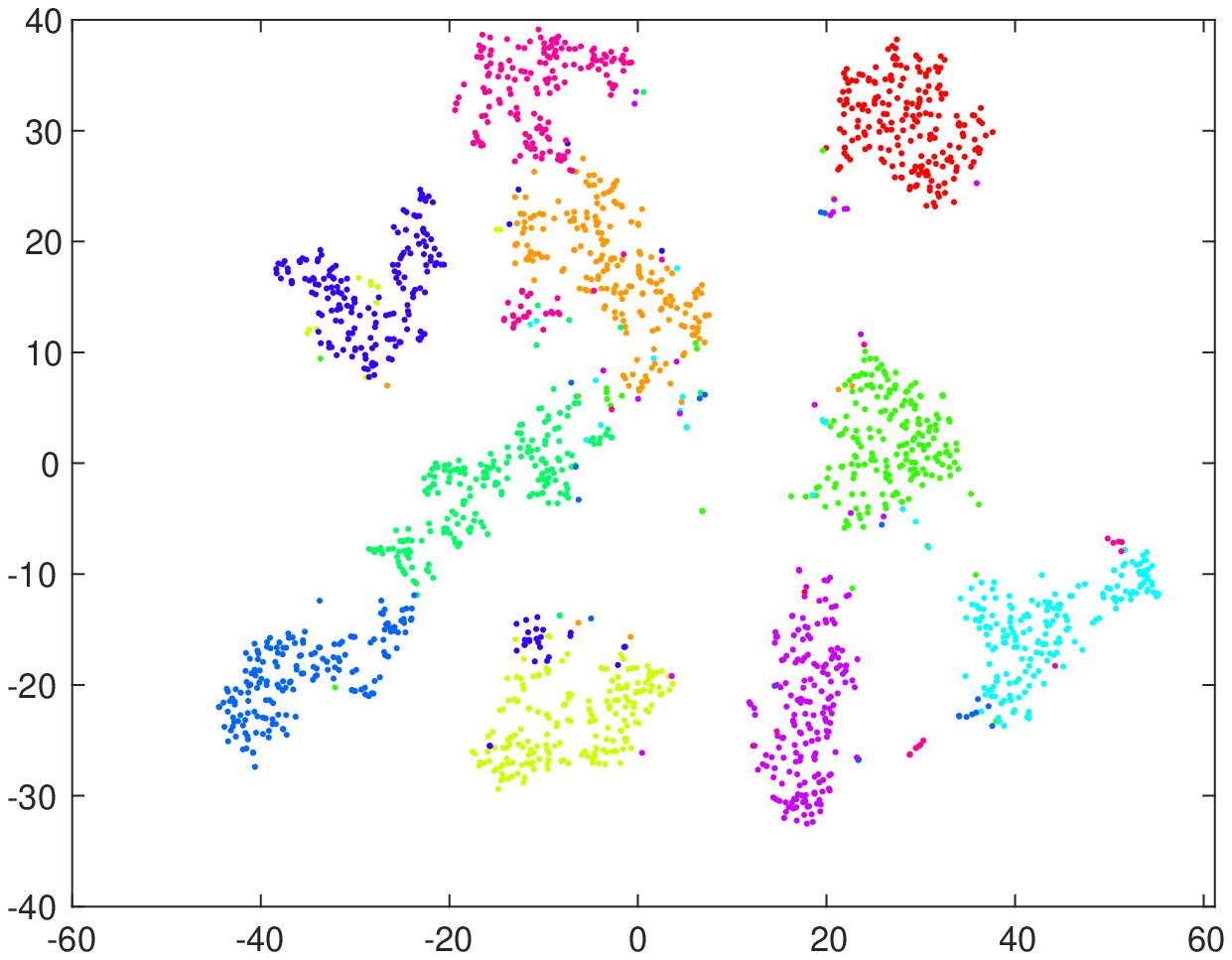}}
		\centerline{(k) HW(iter=50)}
	\end{minipage}
	\begin{minipage}{0.24\linewidth}
		\vspace{3pt}
		\centerline{\includegraphics[width=\textwidth]{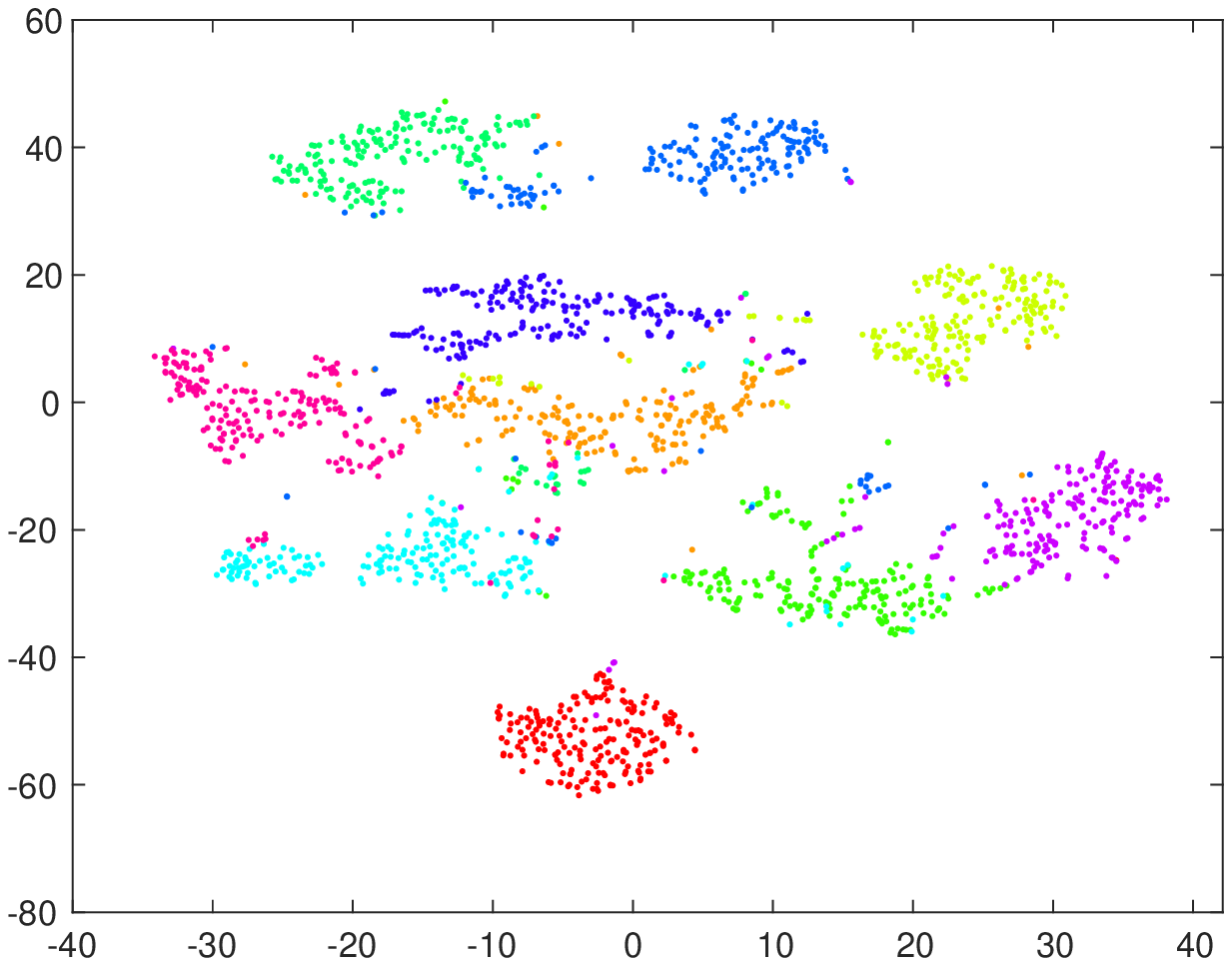}}
		\centerline{(l) HW(iter=150)}
	\end{minipage}
	
	\caption{The proposed algorithm use t-SNE \cite{van2008visualizing} on BBCSports, BBC and HW when iterations are 1, 10, 50 and 150, The different colors indicate different classes for each dataset.}
	\label{VisualizationDifferentIteration}
\end{figure*} 

\begin{figure*}
	
	\begin{minipage}{0.32\linewidth}
		\vspace{3pt}
		\centerline{\includegraphics[width=\textwidth]{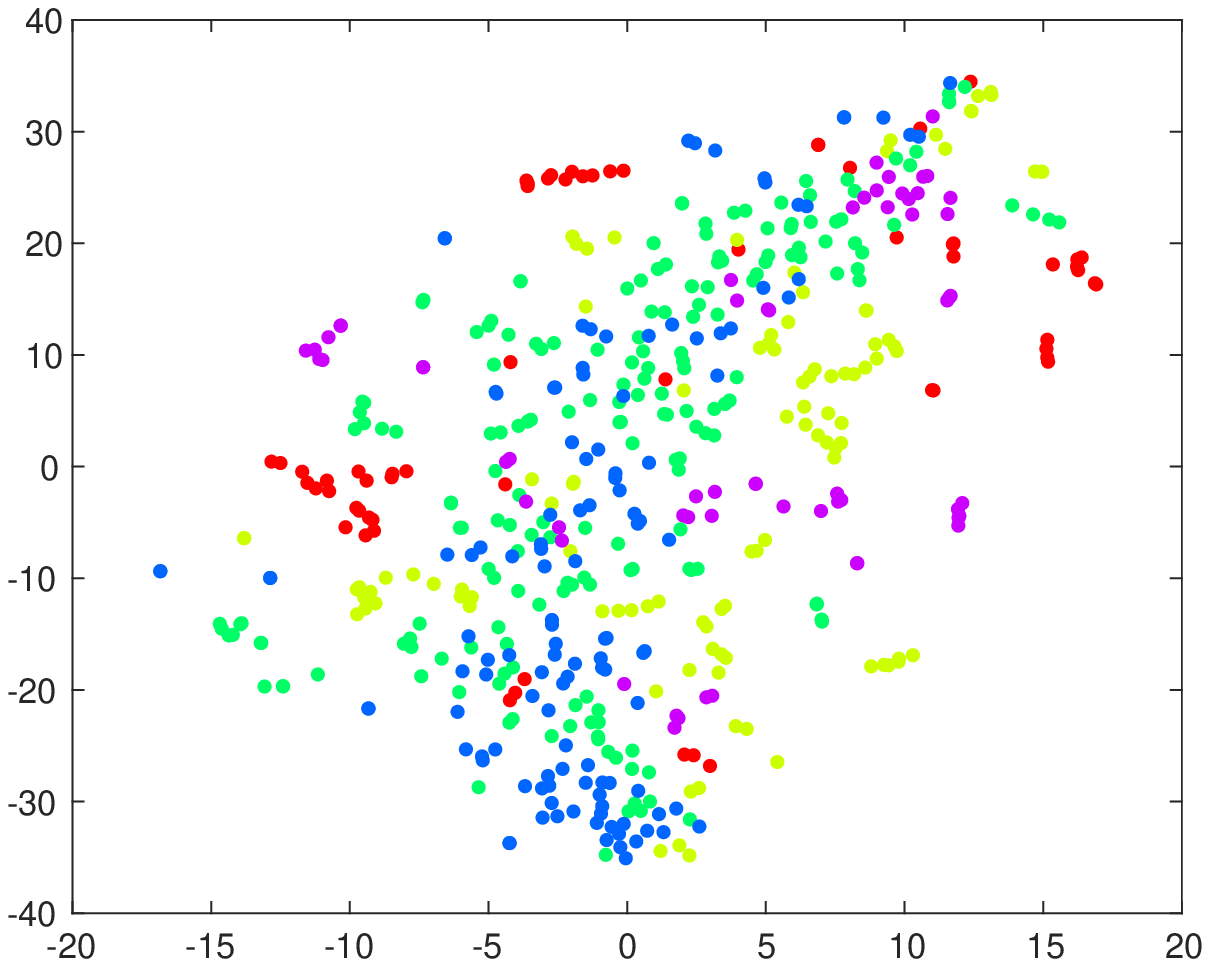}}
		\centerline{(a) DMVC(BBCSports)}
	\end{minipage}
	\begin{minipage}{0.32\linewidth}
		\vspace{3pt}
		\centerline{\includegraphics[width=\textwidth]{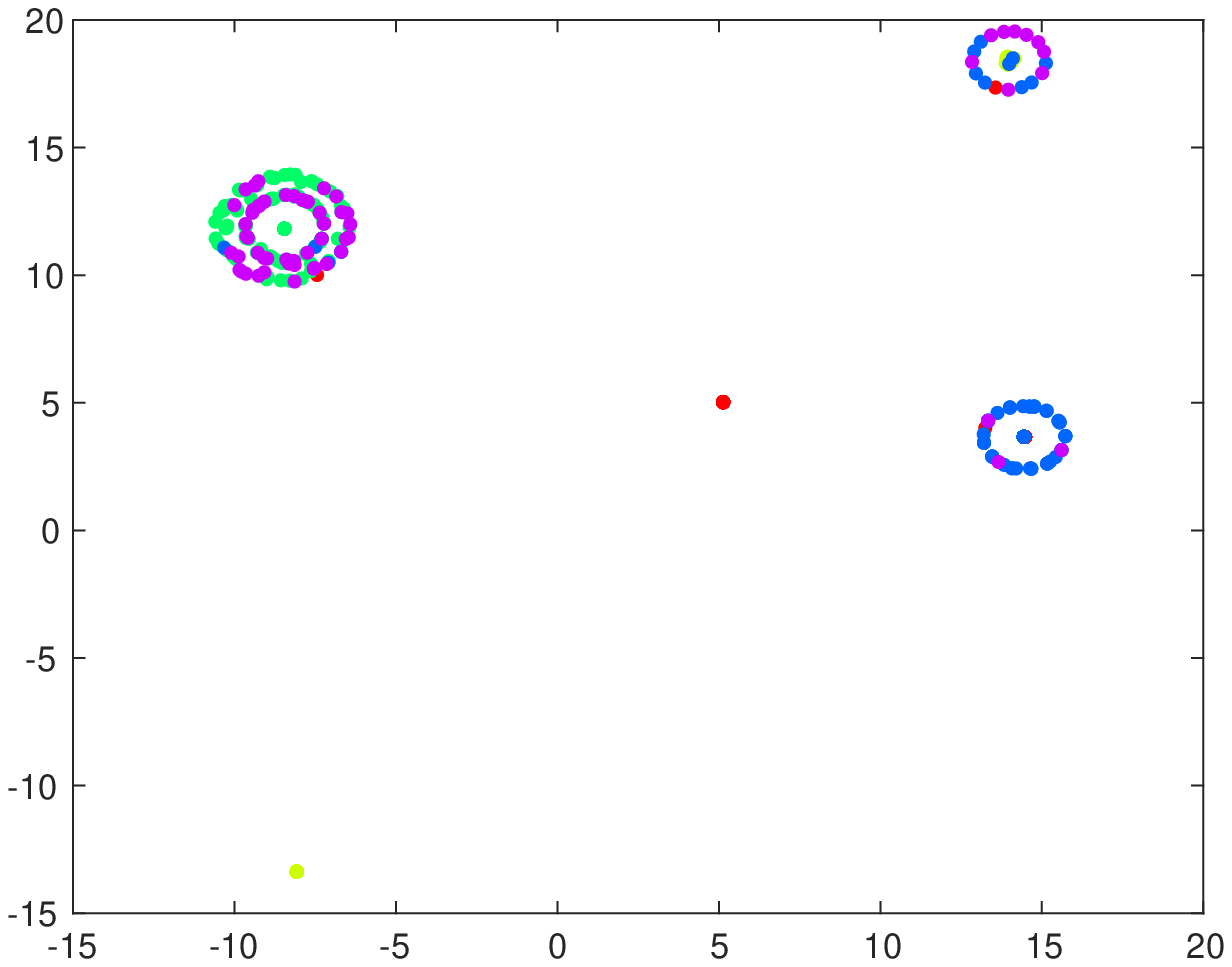}}
		\centerline{(b) AwDMVC(BBCSports)}
	\end{minipage}
	\begin{minipage}{0.32\linewidth}
		\vspace{3pt}
		\centerline{\includegraphics[width=\textwidth]{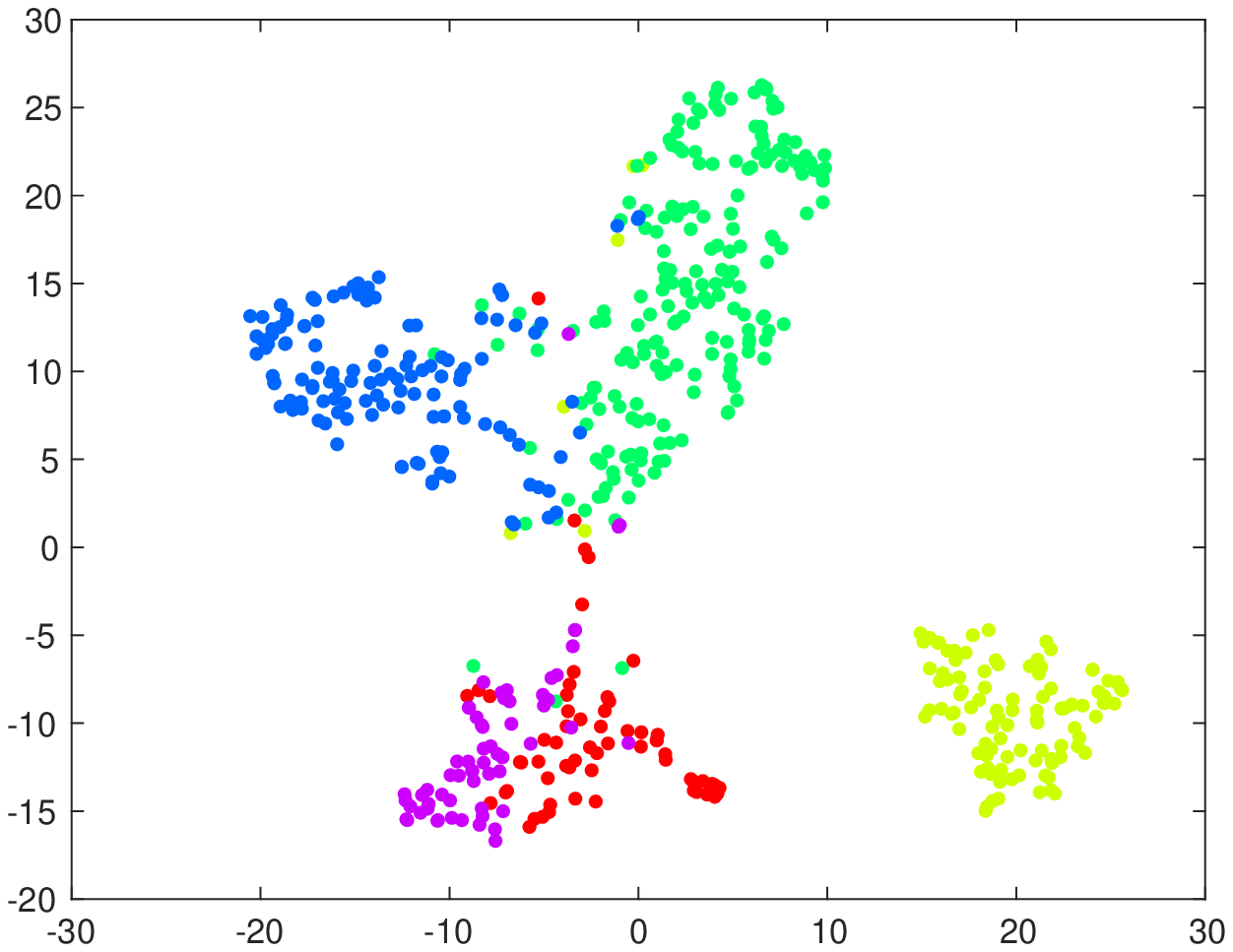}}
		\centerline{(c) Ours(BBCSports)}
	\end{minipage}

	 \begin{minipage}{0.32\linewidth}
		\vspace{3pt}
		\centerline{\includegraphics[width=\textwidth]{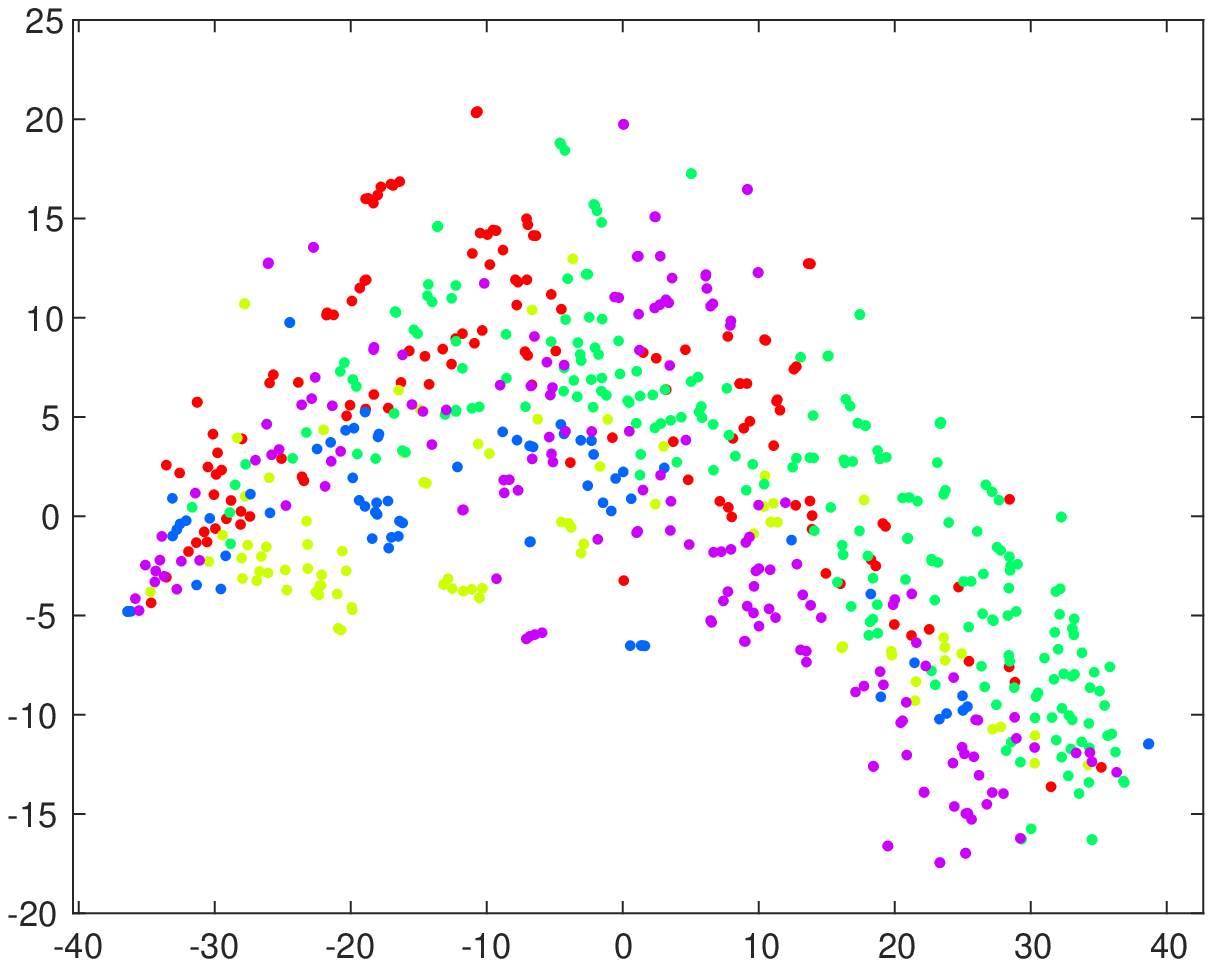}}
		\centerline{(d) DMVC(BBC)}
	\end{minipage}
	\begin{minipage}{0.32\linewidth}
		\vspace{3pt}
		\centerline{\includegraphics[width=\textwidth]{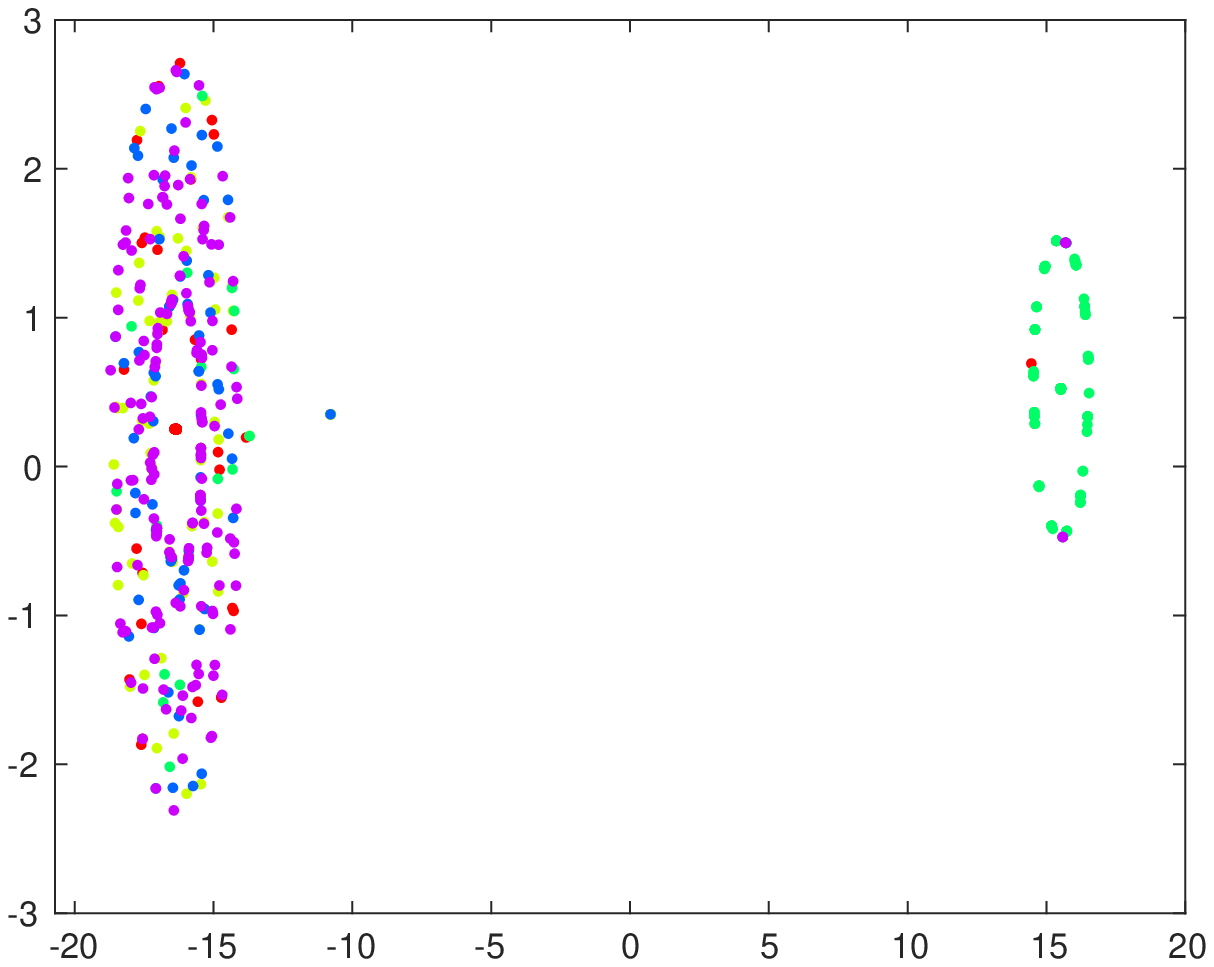}}
		\centerline{(e) AwDMVC(BBC)}
	\end{minipage}
	\begin{minipage}{0.32\linewidth}
		\vspace{3pt}
		\centerline{\includegraphics[width=\textwidth]{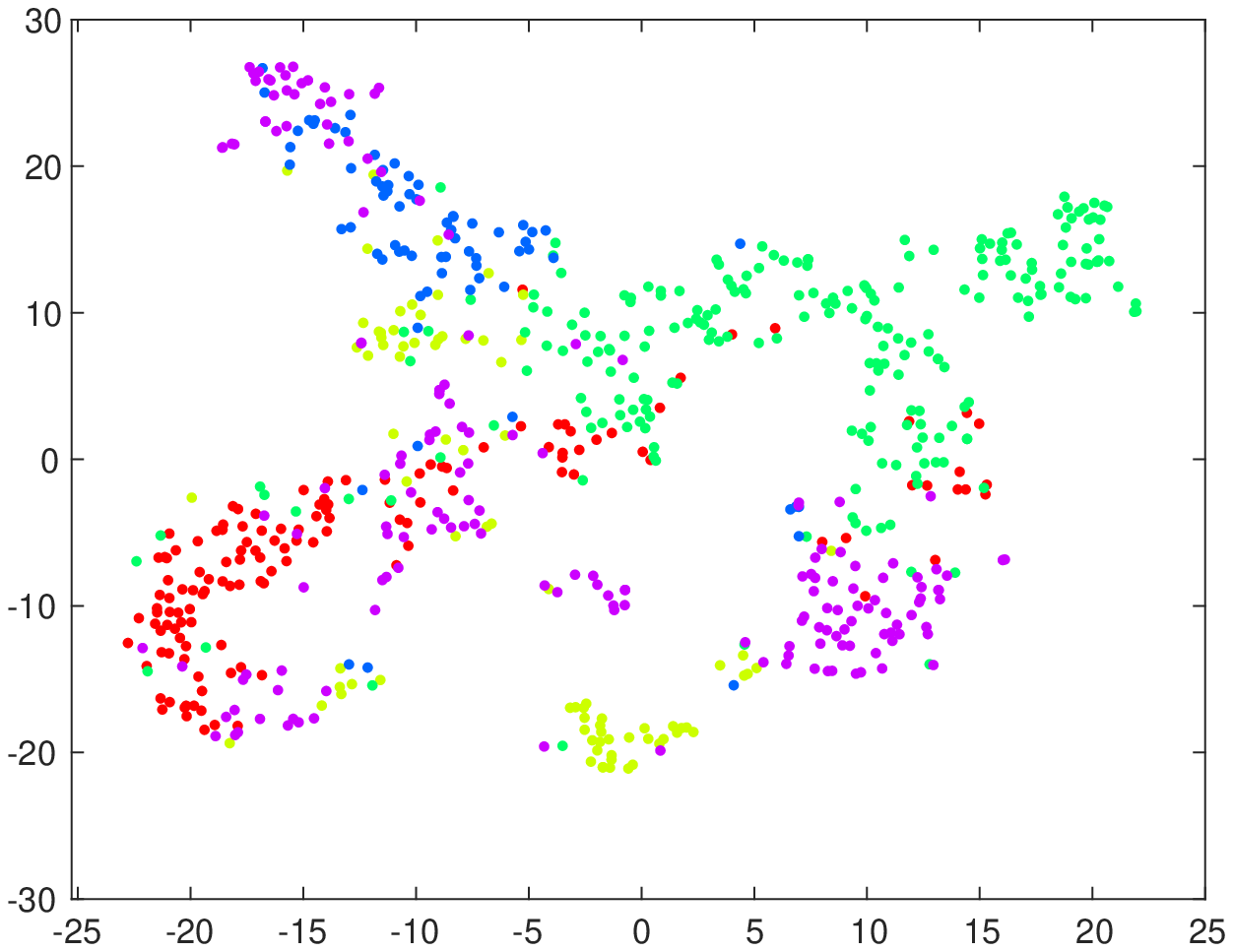}}
		\centerline{(f) Ours(BBC)}
	\end{minipage}

	\caption{The proposed algorithm, DMVC and AwDMVC use t-SNE on BBCSports and BBCwhen, The different colors indicate different classes for each dataset.}
	\label{VisualizationDifferentMethod}
\end{figure*}

\begin{figure*}
	\begin{minipage}{0.32\linewidth}
		\vspace{3pt}
		\centerline{\includegraphics[width=\textwidth]{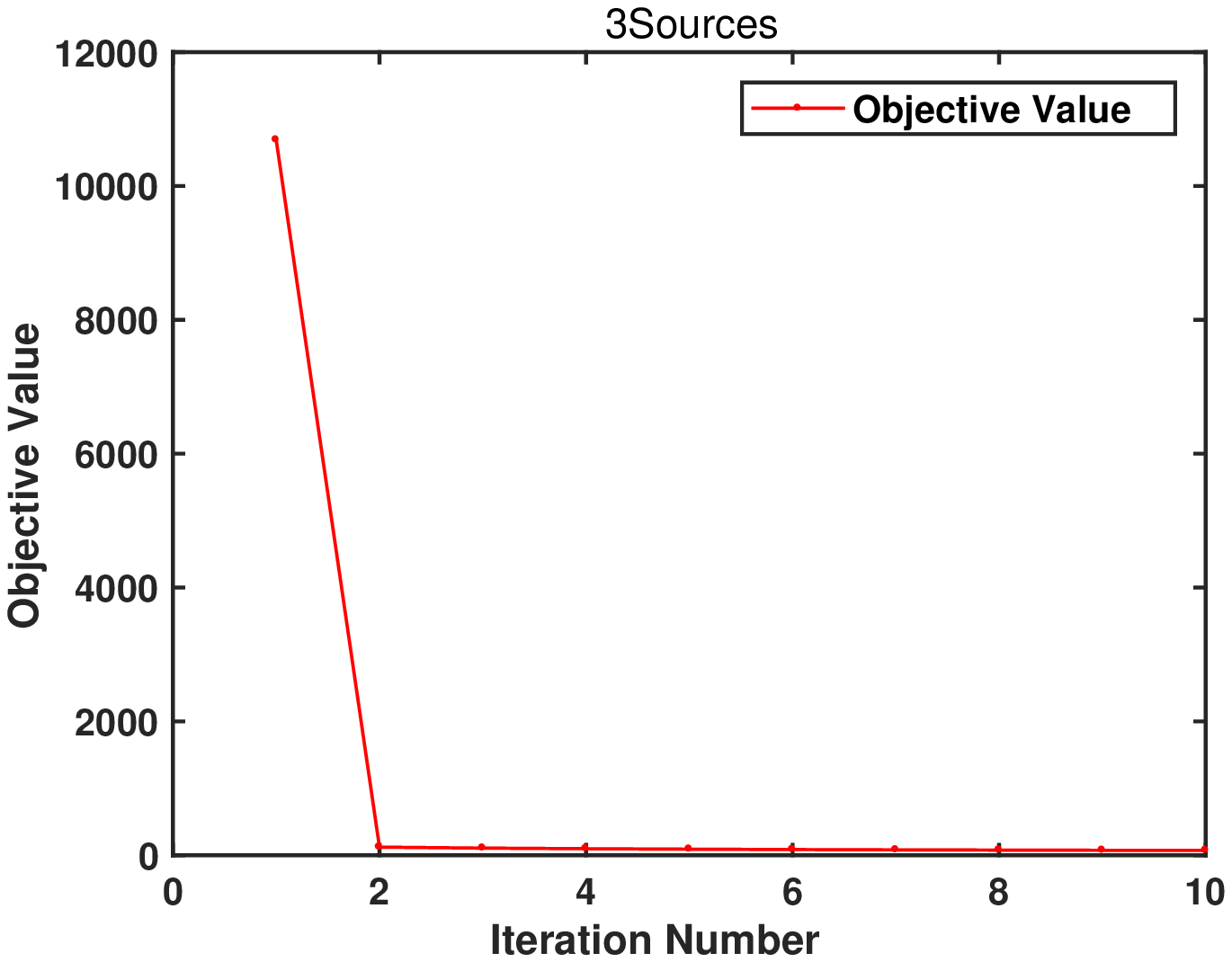}}
		\centerline{(a) 3Sources}
	\end{minipage}
	\begin{minipage}{0.32\linewidth}
		\vspace{3pt}
		\centerline{\includegraphics[width=\textwidth]{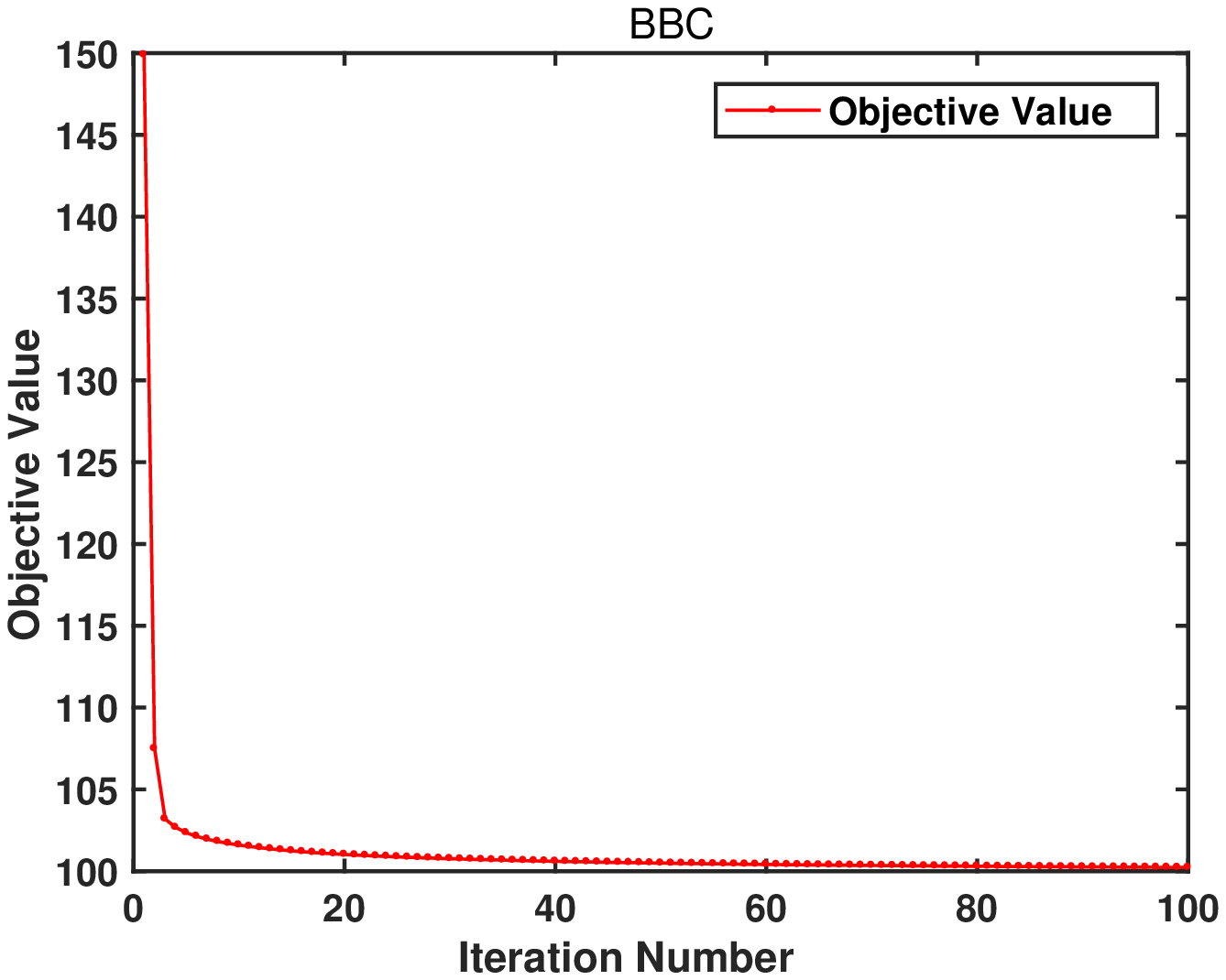}}
		\centerline{(b) BBC}
	\end{minipage}
	\begin{minipage}{0.32\linewidth}
		\vspace{3pt}
		\centerline{\includegraphics[width=\textwidth]{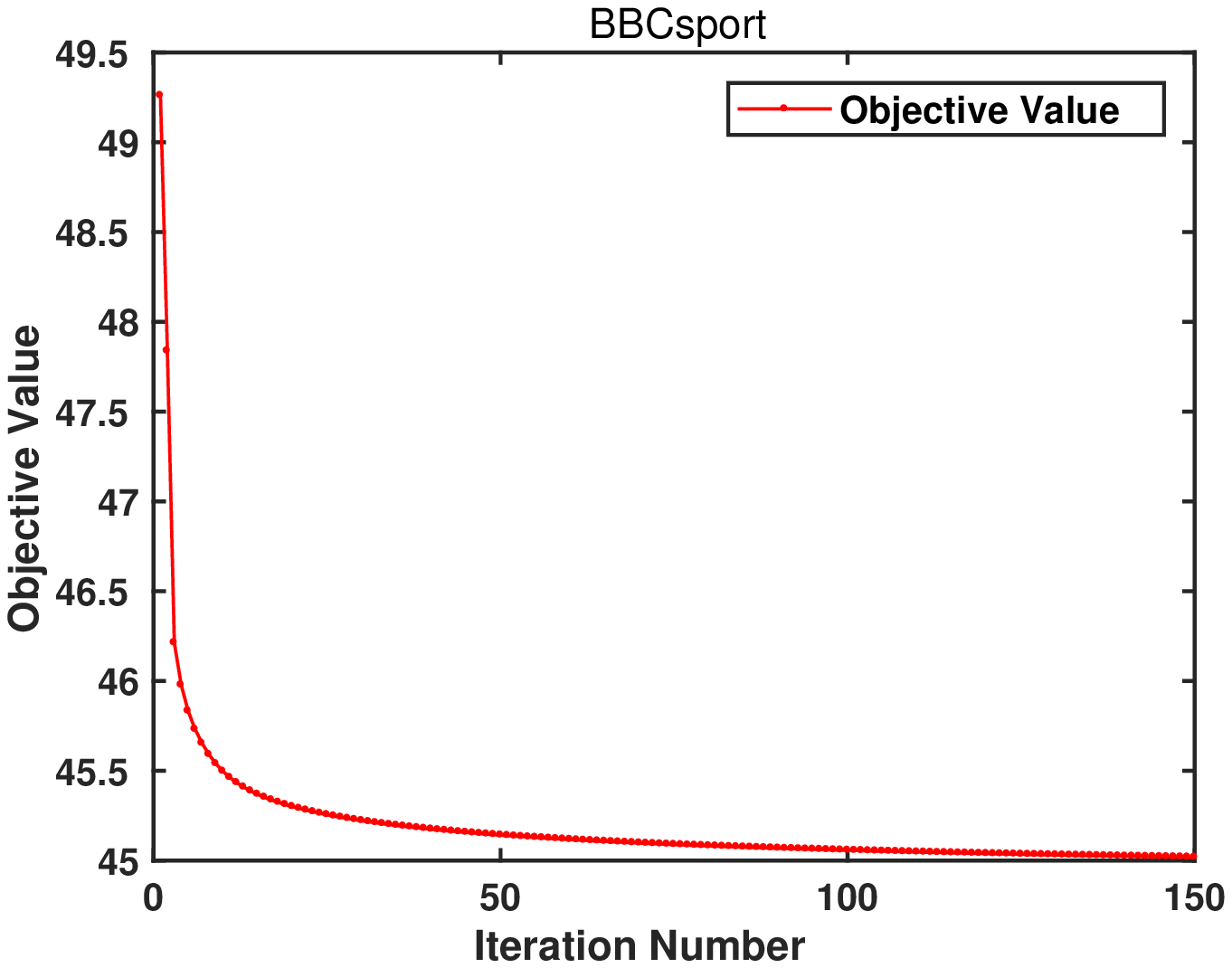}}
		\centerline{(c) BBCSport}
	\end{minipage}
	
	\begin{minipage}{0.32\linewidth}
	\vspace{3pt}
		\centerline{\includegraphics[width=\textwidth]{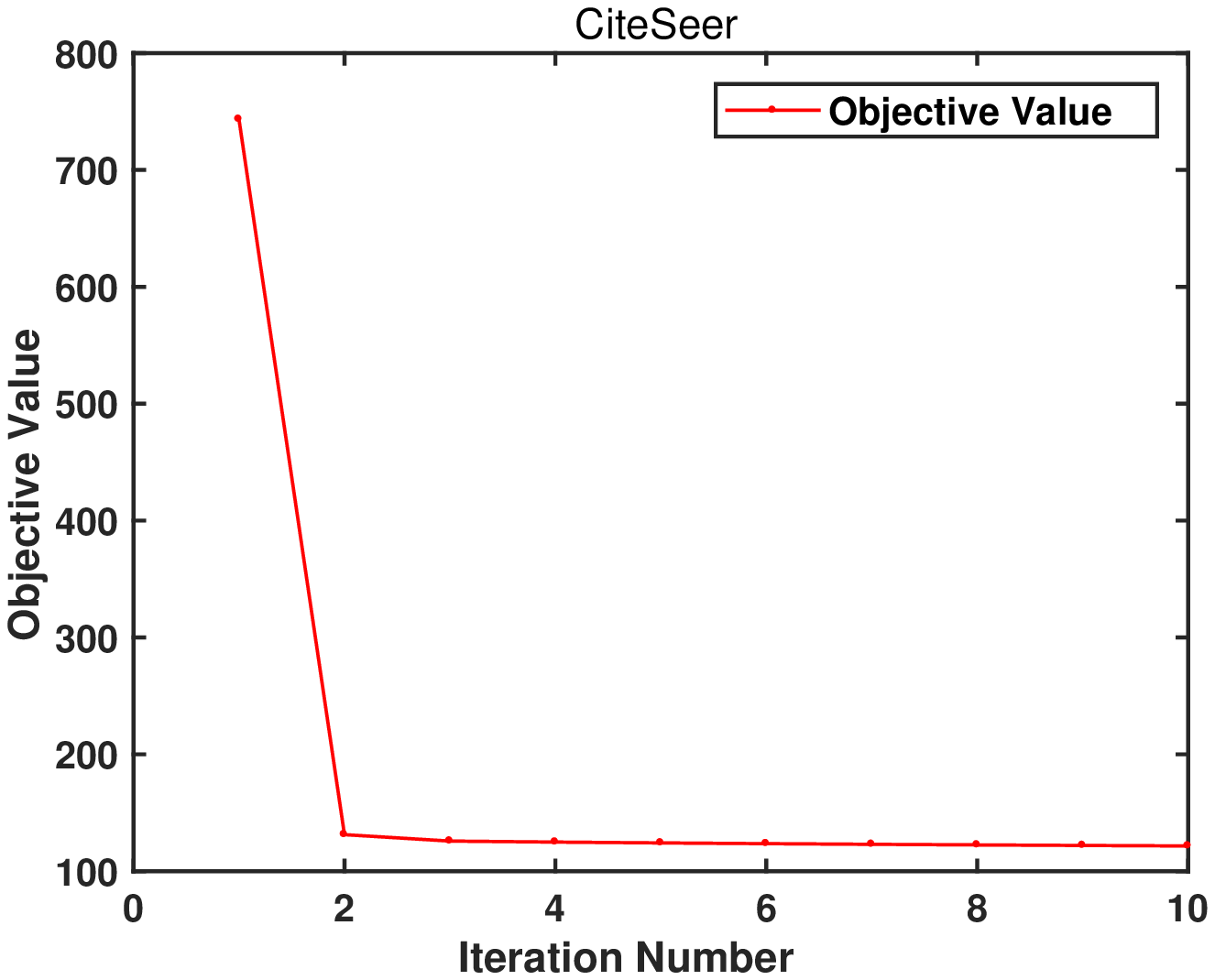}}
		\centerline{(d) CiteSeer}
	\end{minipage}
	\begin{minipage}{0.32\linewidth}
		\vspace{3pt}
		\centerline{\includegraphics[width=\textwidth]{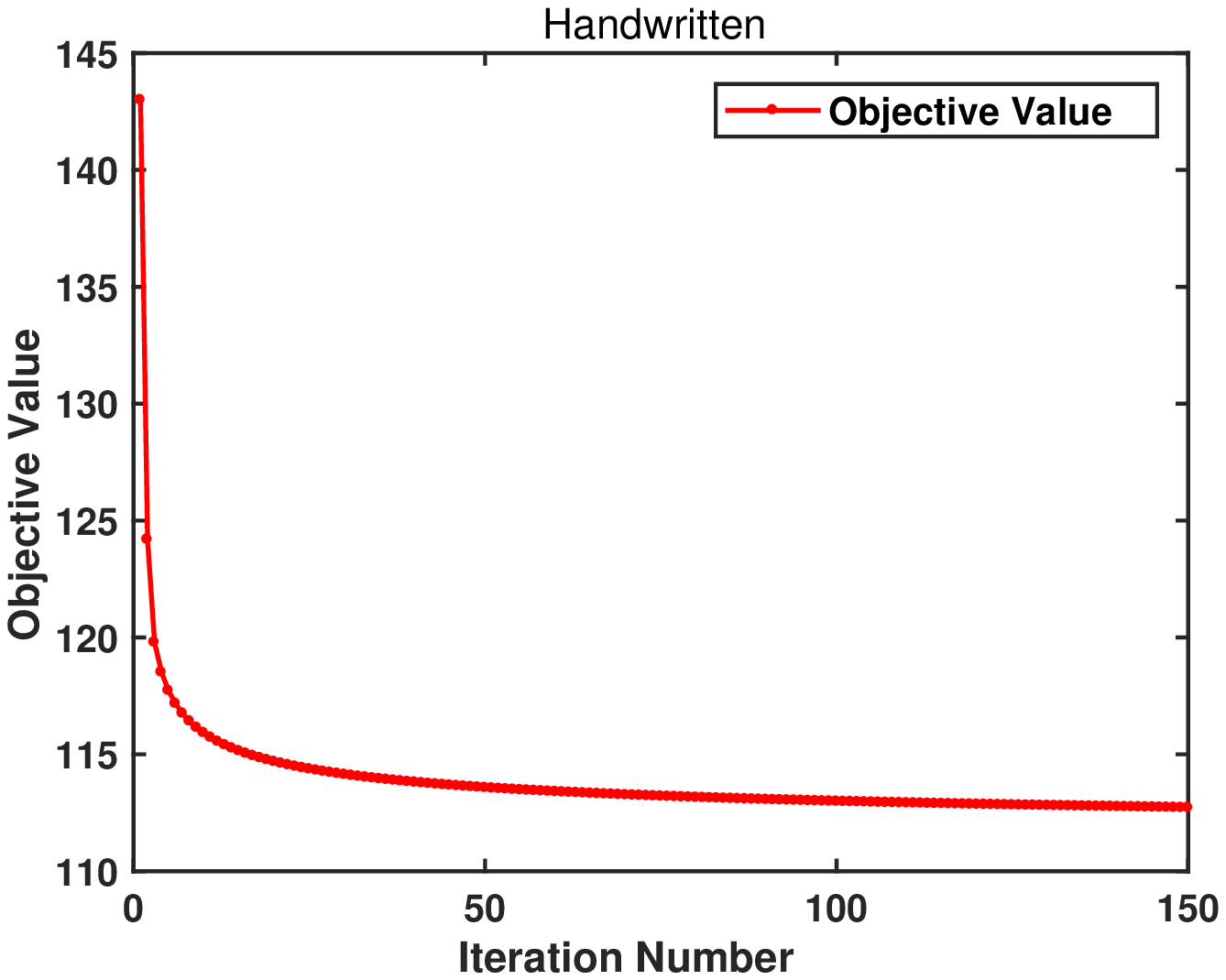}}
		\centerline{(e) HW}
	\end{minipage}
	\begin{minipage}{0.32\linewidth}
		\vspace{3pt}
		\centerline{\includegraphics[width=\textwidth]{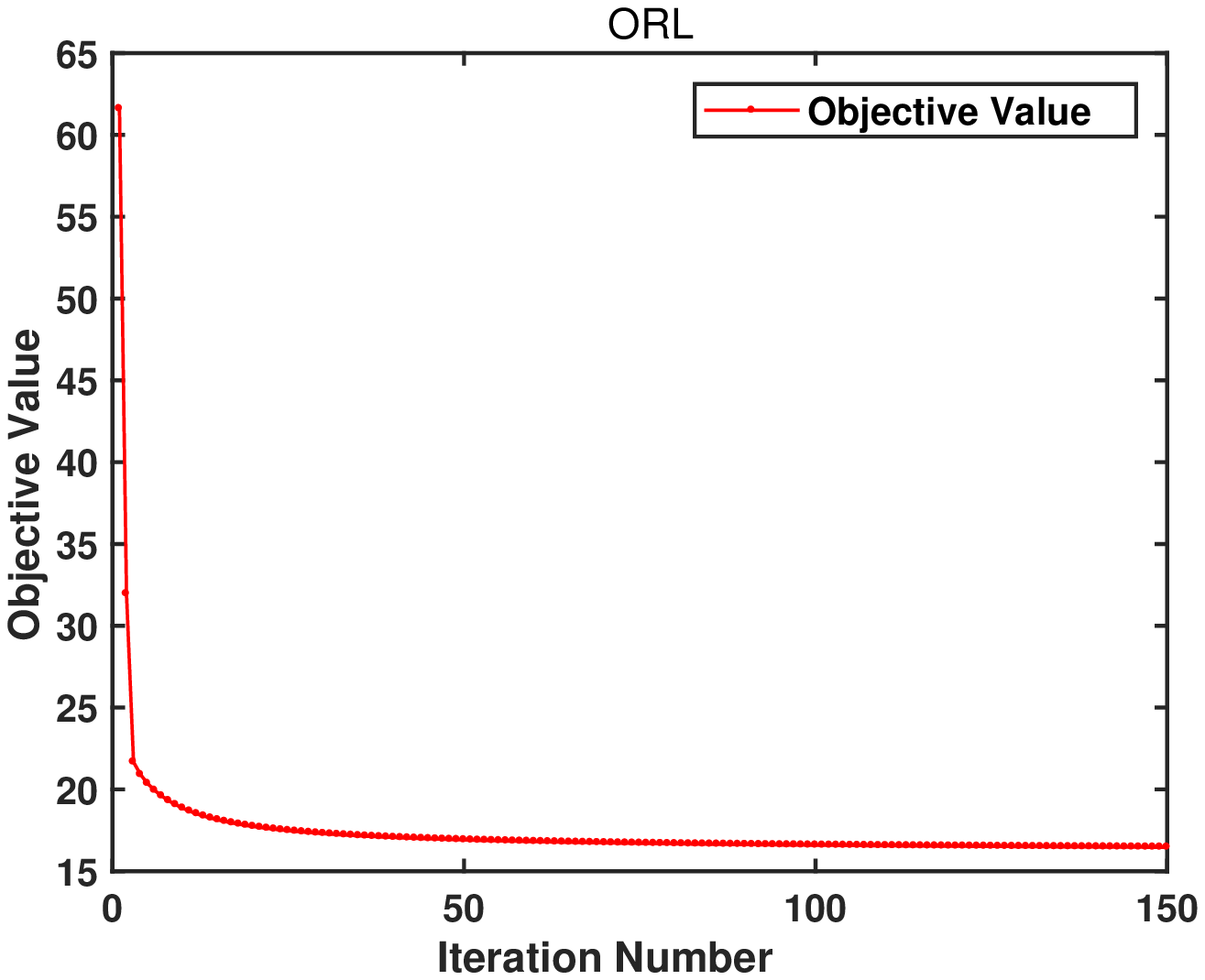}}
		\centerline{(f) ORL}
	\end{minipage}
	\caption{The convergence of the proposed method on 3Sources, BBC, BBCSport, CiteSeer, HW and ORL.}
	\label{obj}
\end{figure*}

\begin{figure*}
	
	\begin{minipage}{0.32\linewidth}
		\vspace{3pt}
		\centerline{\includegraphics[width=\textwidth]{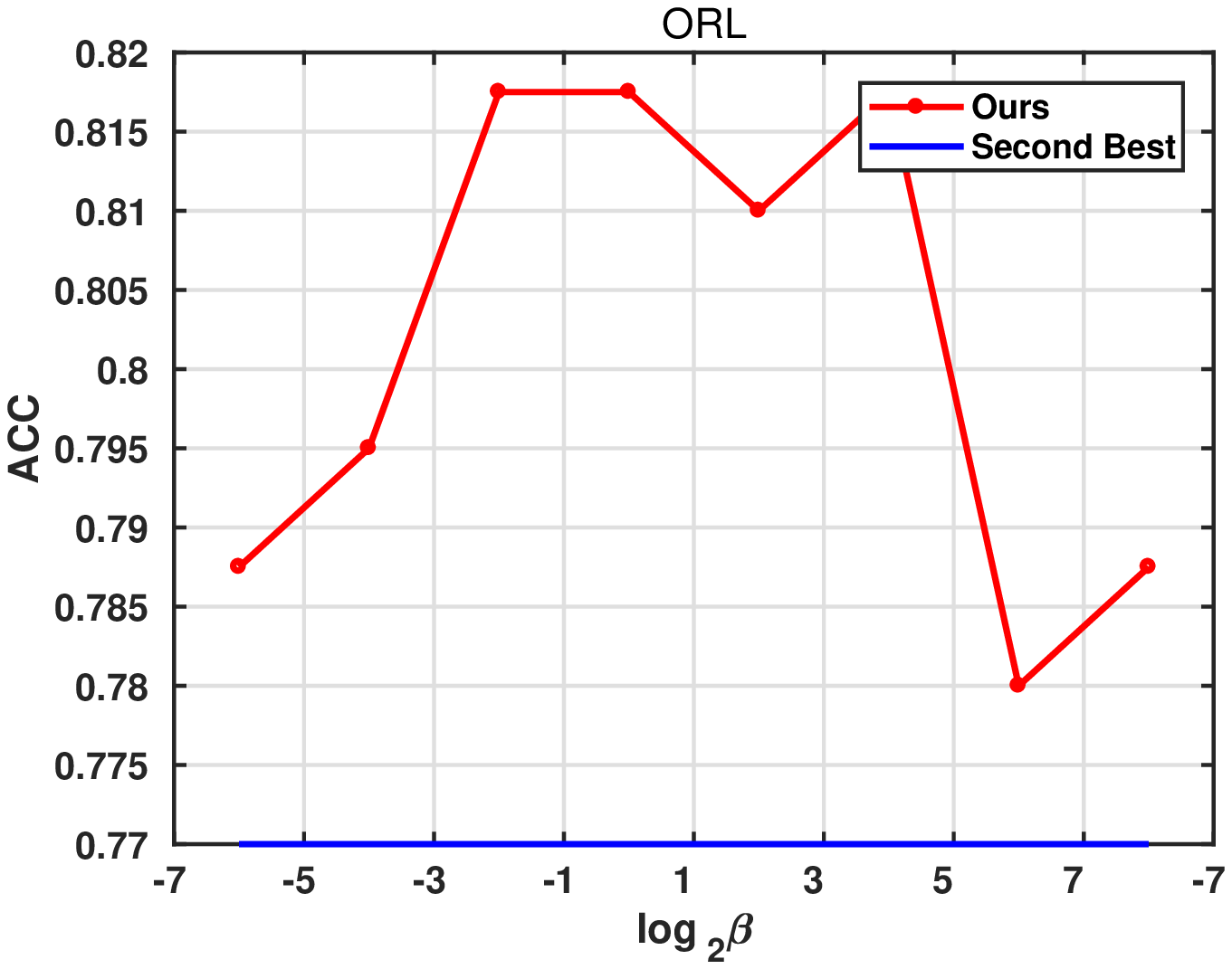}}
		\centerline{(a) ORL:$\beta$}
	\end{minipage}
	\begin{minipage}{0.32\linewidth}
		\vspace{3pt}
		\centerline{\includegraphics[width=\textwidth]{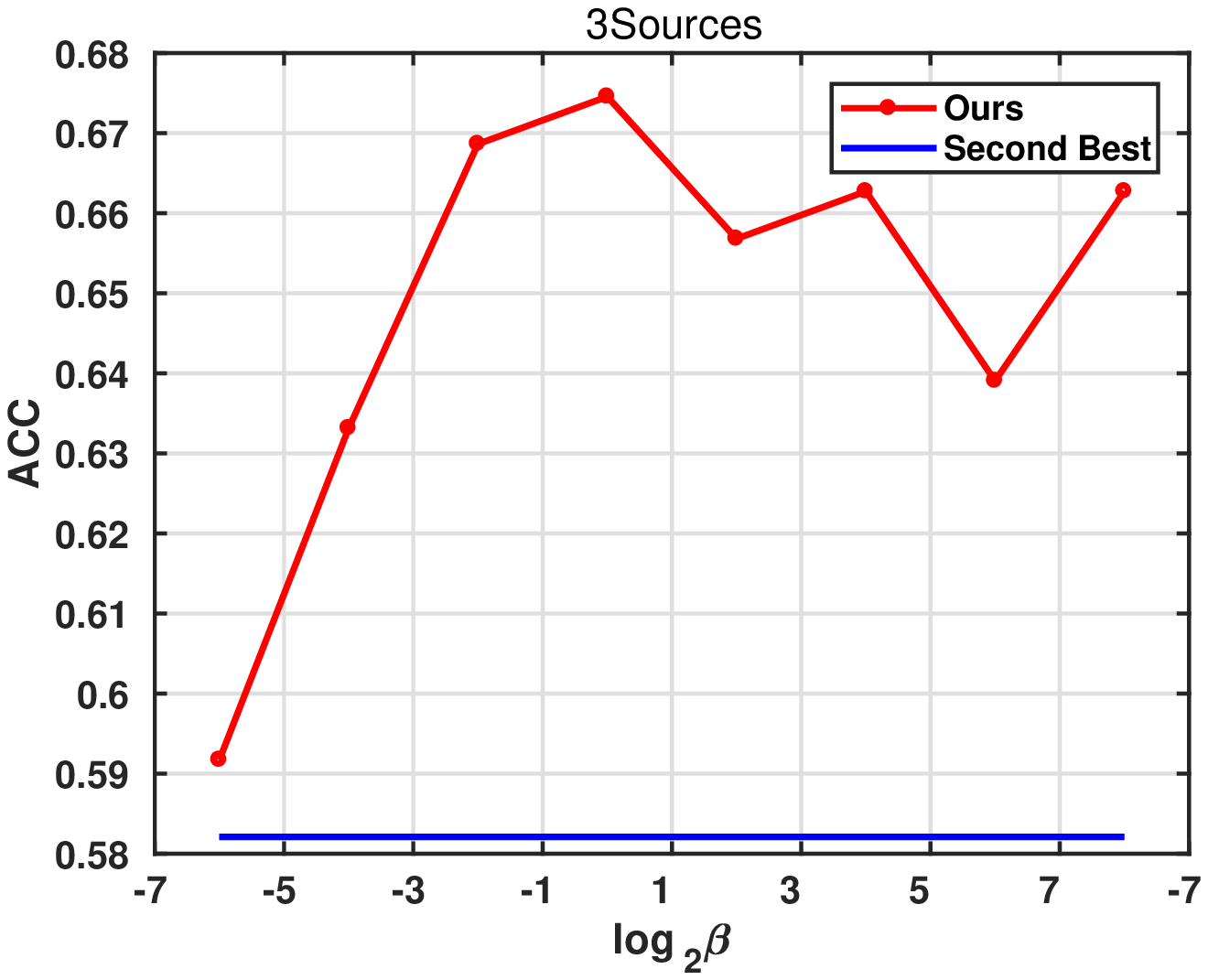}}
		\centerline{(b) 3Sources:$\beta$}
	\end{minipage}
	\begin{minipage}{0.32\linewidth}
		\vspace{3pt}
		\centerline{\includegraphics[width=\textwidth]{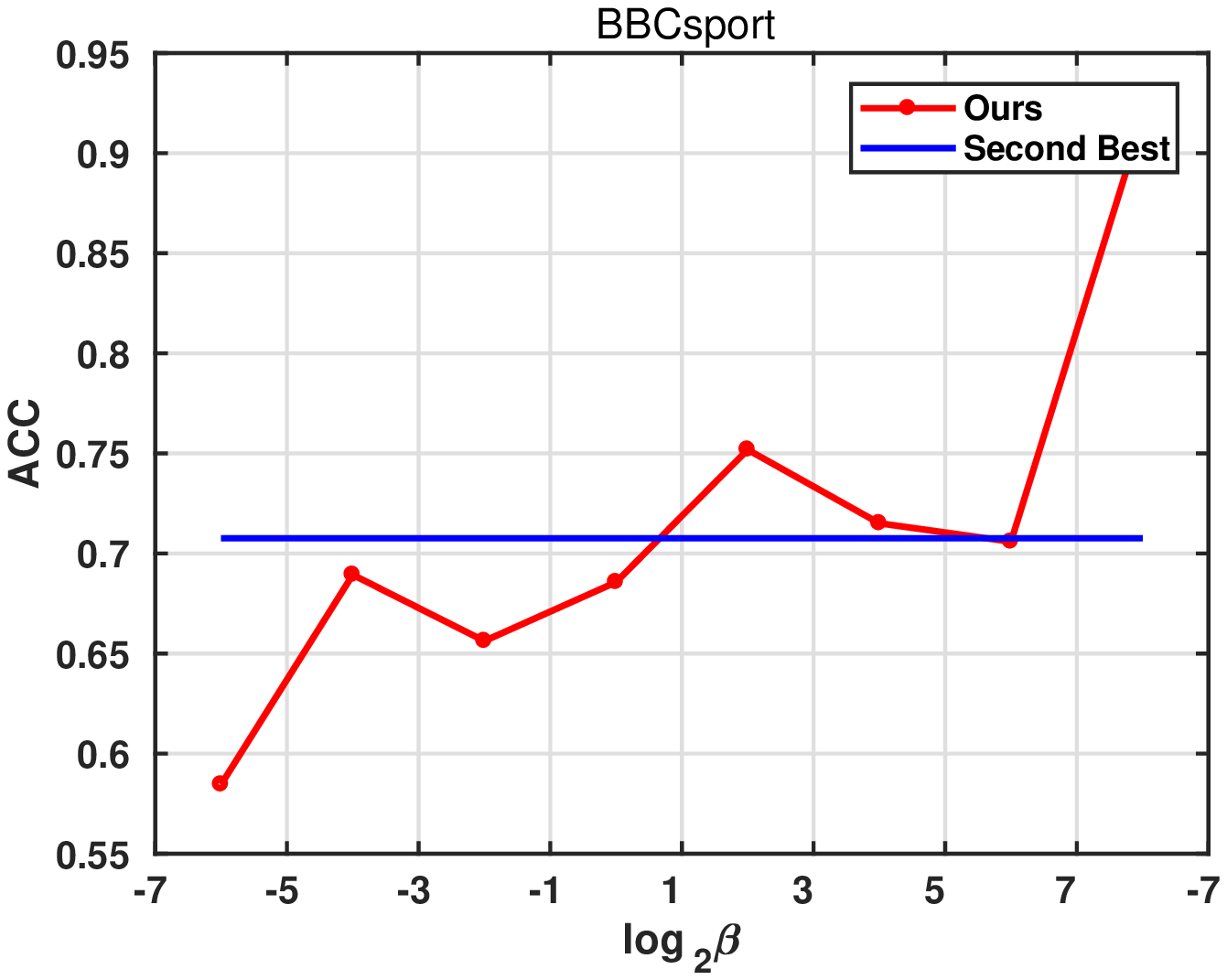}}
		\centerline{(c) BBCSport:$\beta$}
	\end{minipage}
	\caption{The sensitivity of the proposed method with the variation of $\beta$ on ORL, 3Sources and BBCSport.}
	\label{sensityBeta}
\end{figure*}

\begin{figure*}
	
	\begin{minipage}{0.32\linewidth}
		\vspace{3pt}
		\centerline{\includegraphics[width=\textwidth]{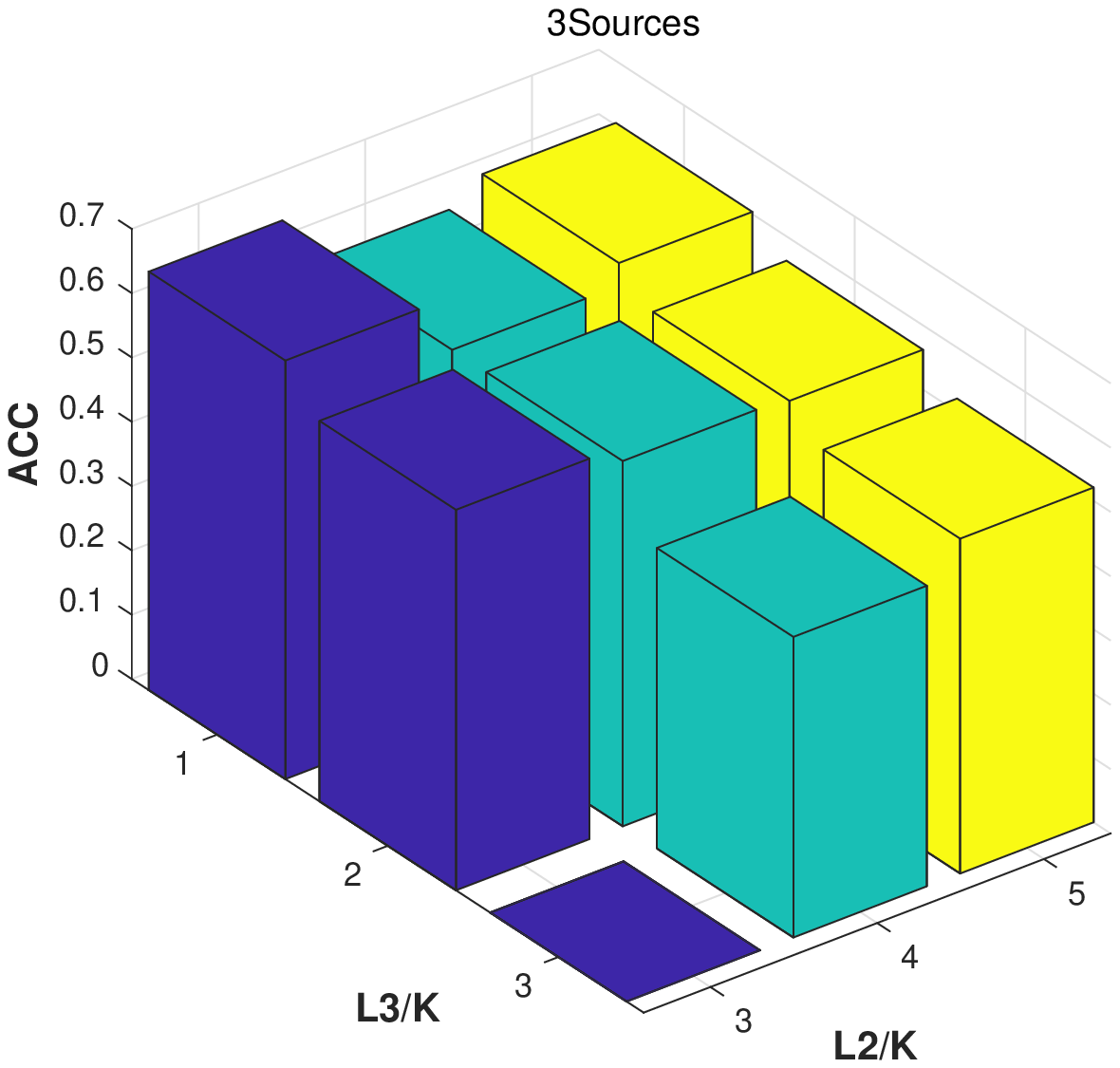}}
		\centerline{(a) 3Source:$l_2$ and $l_3$}
	\end{minipage}
	\begin{minipage}{0.32\linewidth}
		\vspace{3pt}
		\centerline{\includegraphics[width=\textwidth]{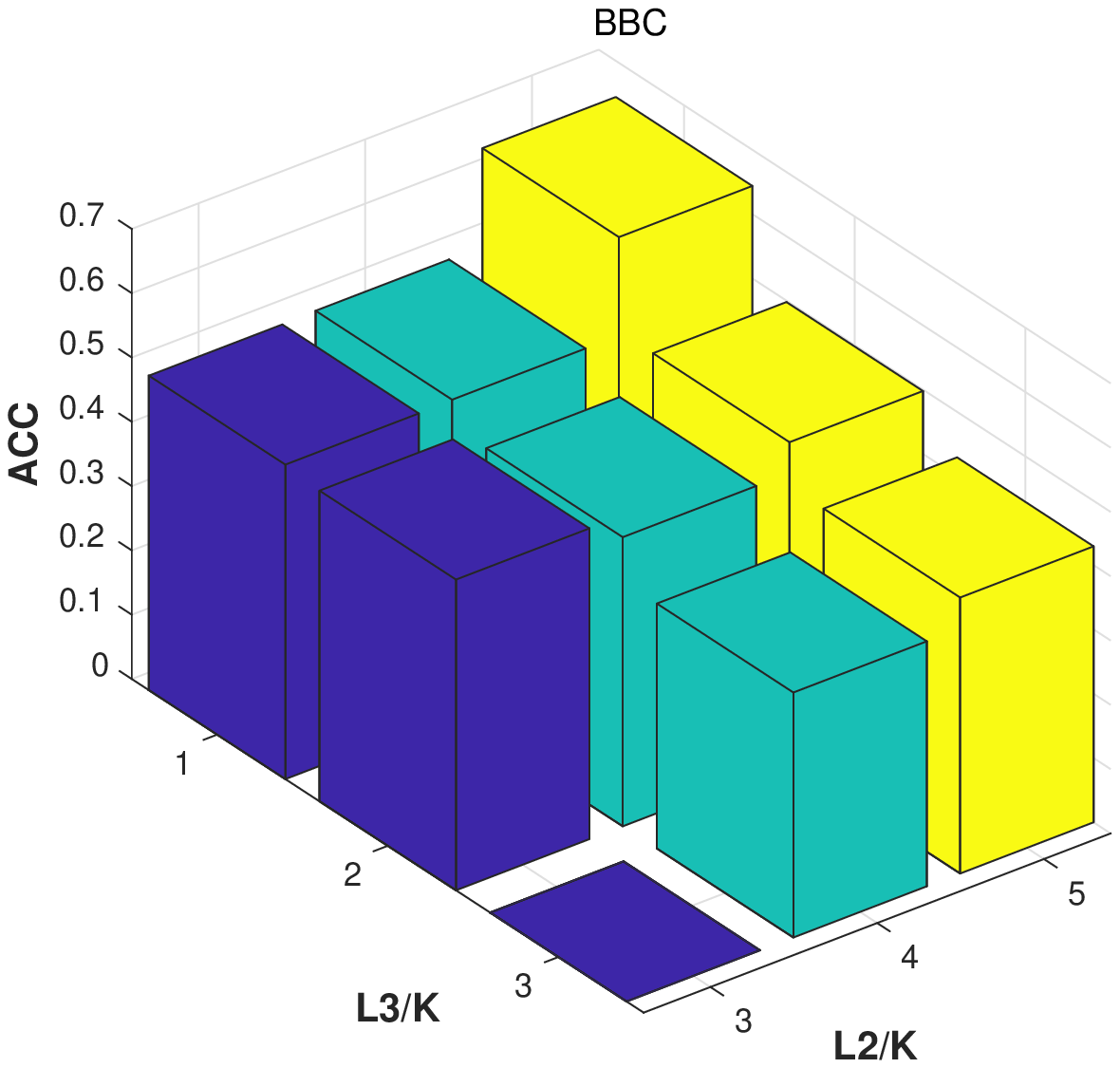}}
		\centerline{(b) BBC:$l_2$ and $l_3$}
	\end{minipage}
	\begin{minipage}{0.32\linewidth}
		\vspace{3pt}
		\centerline{\includegraphics[width=\textwidth]{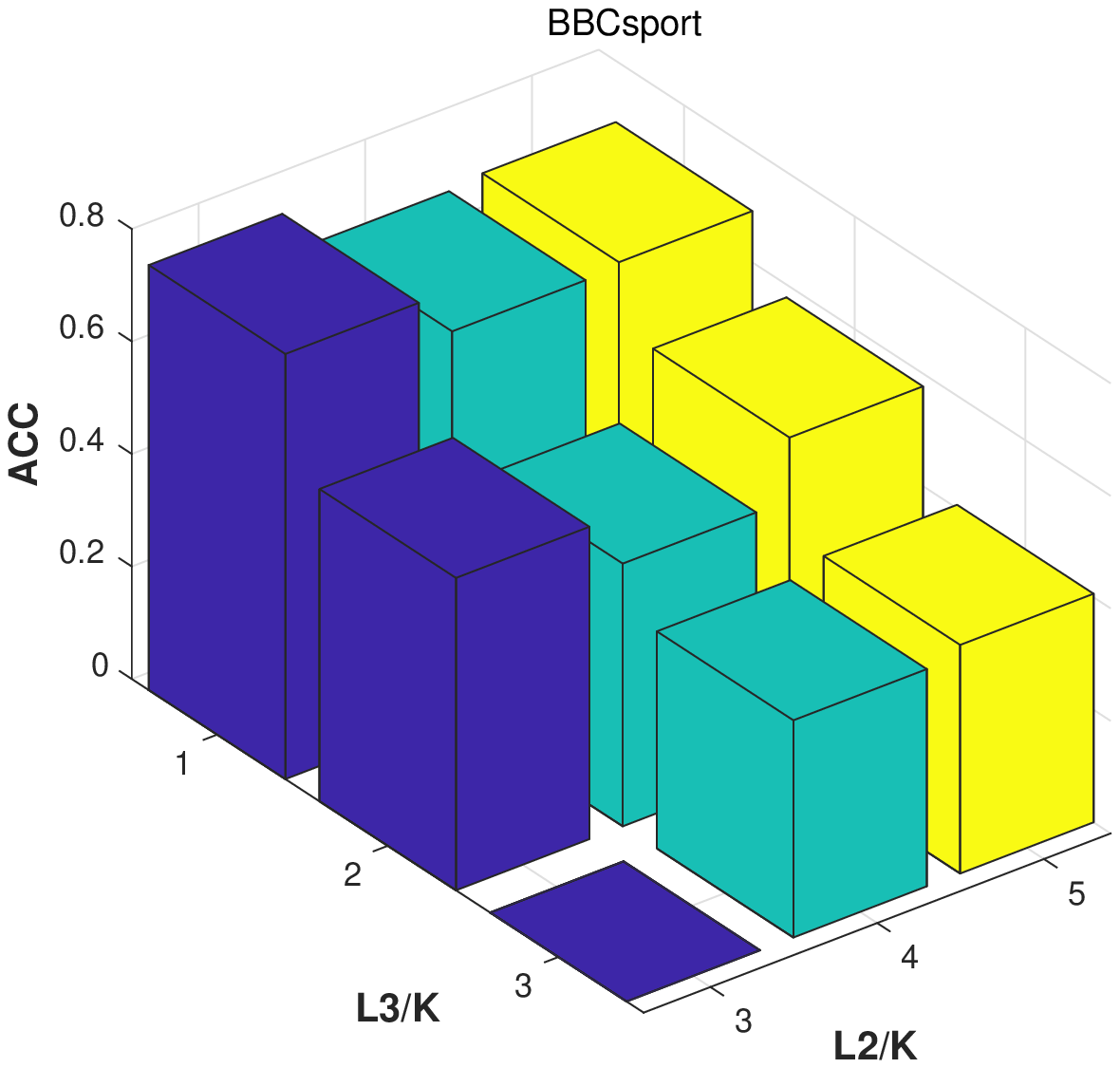}}
		\centerline{(c) BBCSport:$l_2$ and $l_3$}
	\end{minipage}
	
	\caption{The sensitivity of the proposed method with the variation of $l_2$ and $l_3$ in $p_3$ on 3Source, BBC and BBCSport.}
	\label{sentivityL2L3}
\end{figure*}

\begin{figure*}
	\centering
	\begin{minipage}[t]{0.32\textwidth}
		\centering
		\includegraphics[width=\textwidth]{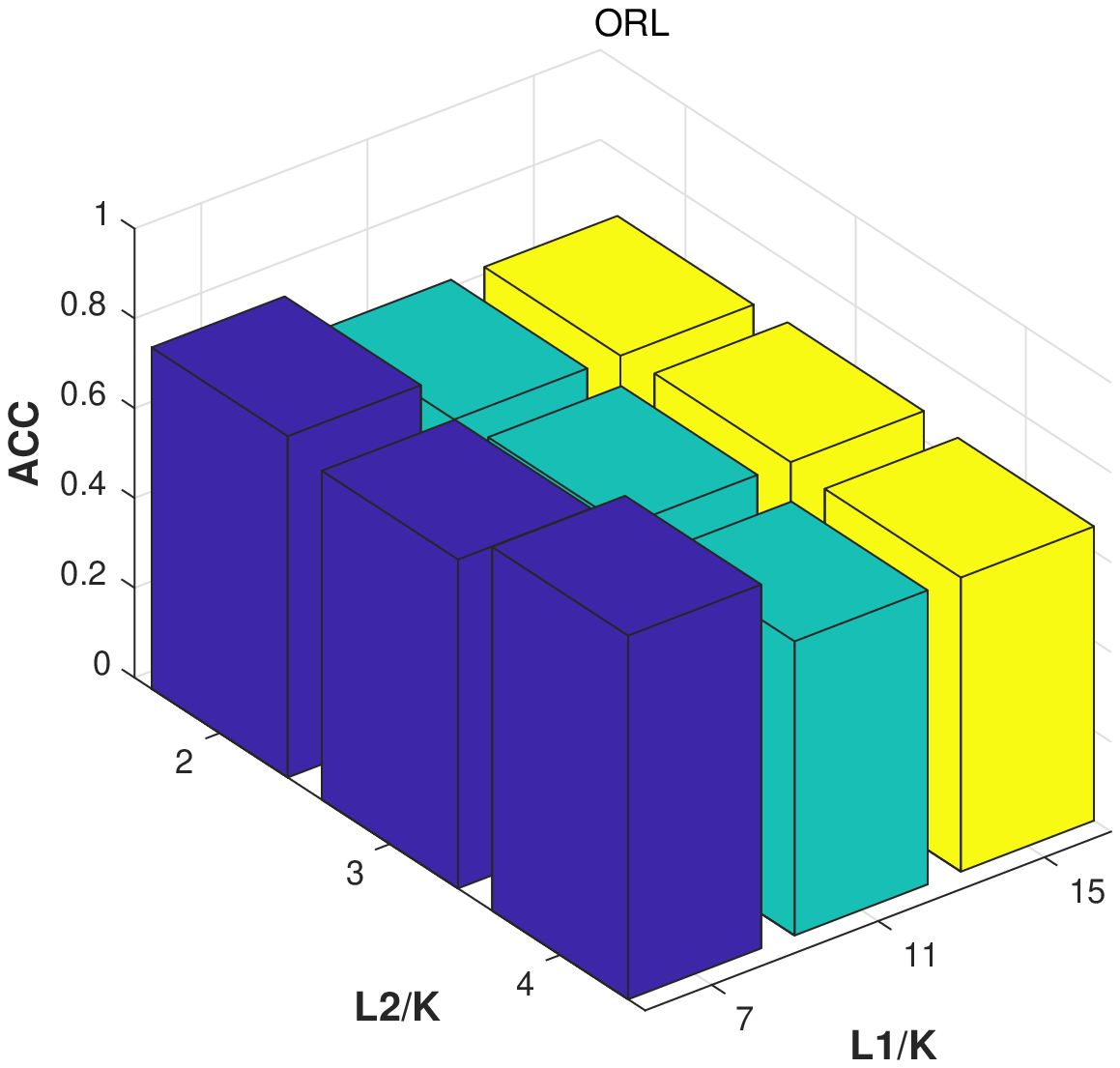}
		\centerline{(a) ORL:$l_1$ and $l_2$}
	\end{minipage}
	\begin{minipage}[t]{0.32\textwidth}
		\centering
		\includegraphics[width=\textwidth]{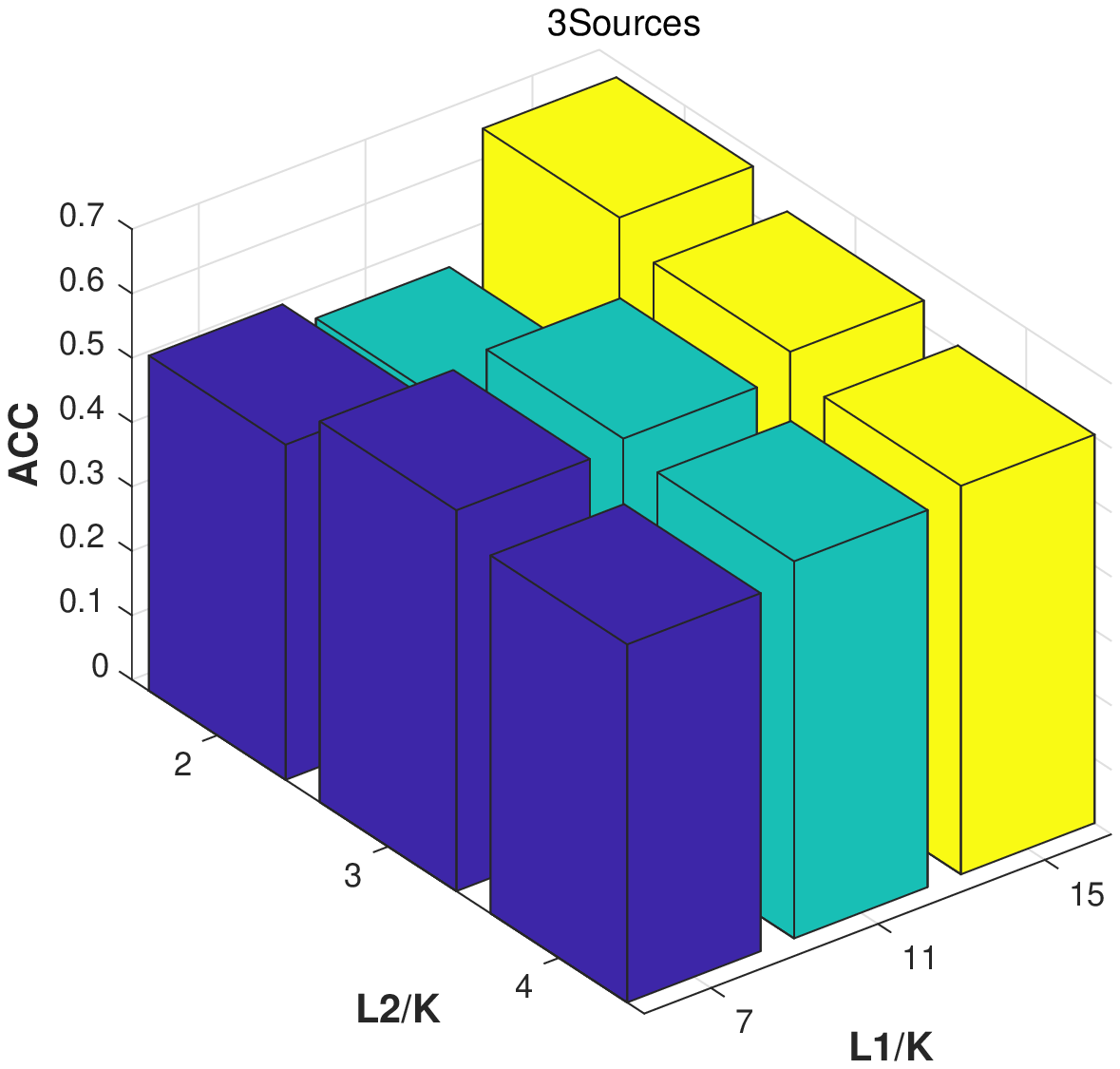}
		\centerline{(b) 3Sources:$l_1$ and $l_2$}
	\end{minipage}
	\begin{minipage}[t]{0.32\textwidth}
		\centering
		\includegraphics[width=\textwidth]{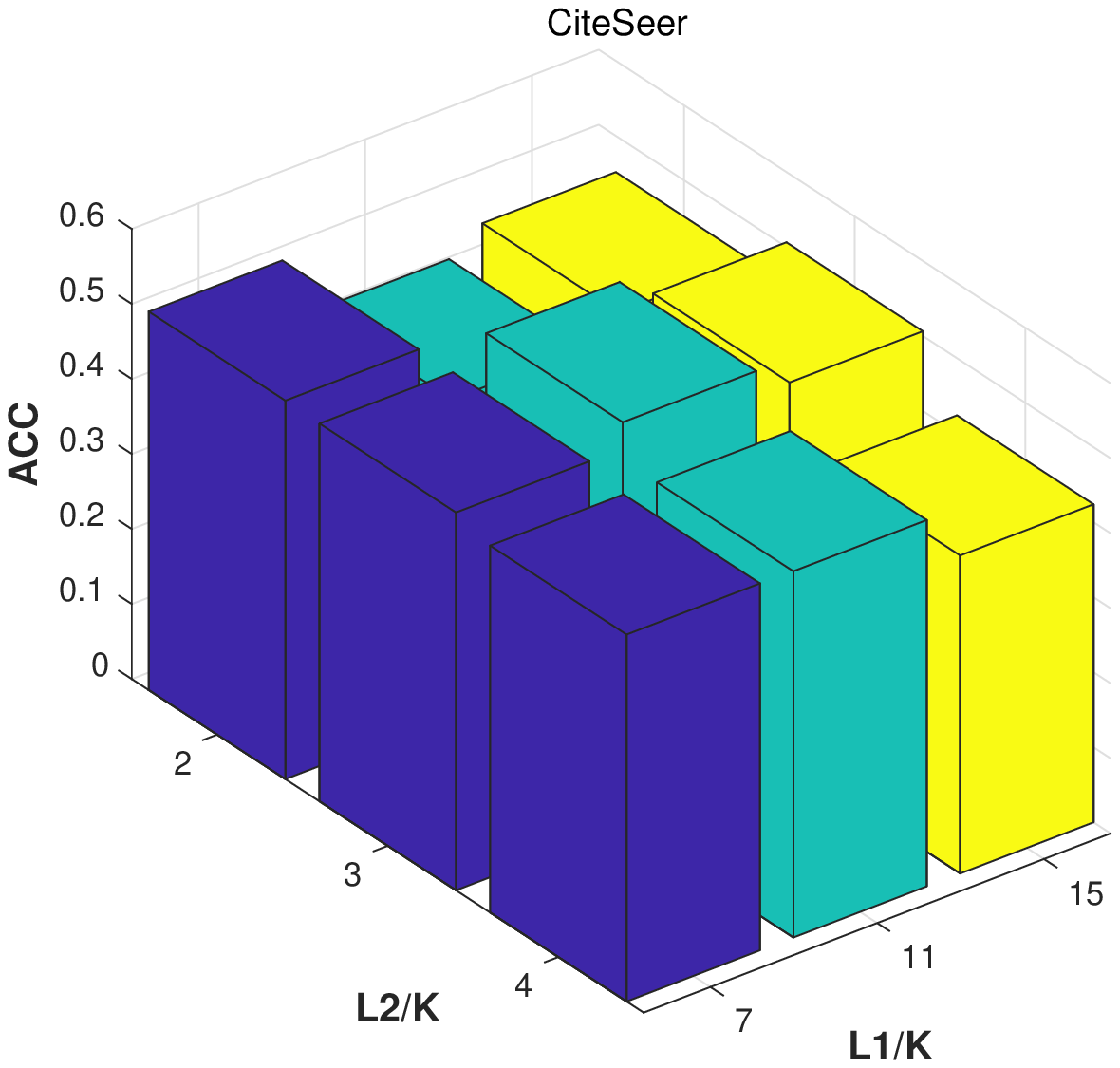}
		\centerline{(c) CiteSeer:$l_1$ and $l_2$}
	\end{minipage}
	\caption{The sensitivity of the proposed method with the variation of $l_1$ and $l_2$ in $p_3$ on ORL, 3Sources and CiteSeers.}
	\label{sentivityL1L2_3}
\end{figure*}

\subsection{Ablation Study}

\begin{table}[ht]
	\caption{ACC of different layers on six benchmark datasets}
	\label{ACC_DL1}
	\begin{tabular}{lllllll}
		\toprule
		Depth &BBCSport &3Sources &BBC &CiteSeer &ORL &HW   \\
		\midrule
		$\left[k\right]$       & 75.55          & 61.54          & 54.74          & 44.81          & 73.50          & 64.65          \\
		$\left[l_2,k\right]$   & 88.50          & 64.50          & 63.50          & 53.83          & 77.25          & 59.50          \\
		$\left[l_1,l_2,k\right]$ & \textbf{91.73} & \textbf{70.41} & \textbf{71.68} & \textbf{53.86} & \textbf{79.25} & \textbf{84.05} \\
		\bottomrule
	\end{tabular}
\end{table}

We have done a set of ablation experiments like the number of layers $p$ is one, two, and three. The purpose is to verify if the number of layers deeper, the hidden information can be easier to be extracted, and the more valuable representations can be learned.

We record the best parameters like $p_{3}=[a,b,k]$ when depth is third. As can see in Table \ref{ACC_DL1}, we compare the results of different depths like $p_{3}=[a,b,k]$, $p_{2}=[b,k]$, $p_{1}=[k]$. It is obvious to find that the results of three layers are always greater than two, and the results of two layers are always greater than one. It is easy to calculate the performance improvement on BBCSport, 3Sources, BBC, CiteSeer and ORL by $12.95\%$, $2.96\%$, $8.76\%$, $9.02\%$ and $3.75\%$ when the number of layers changes from $p_{1}$ to $p_{2}$, and the performance improvement by $3.23\%$, $2.96\%$, $8.18\%$, $0.03\%$ and $3.75\%$ when the number of layers changes from $p_{2}$ to $p_{3}$. So it is very necessary to choose an appropriate number of layers for all datasets.

\subsection{Visualization}

We visualize the clustering results of our algorithm and the comparison diagram of our algorithm, DMVC, and AwDMVC in Figure \ref{VisualizationDifferentIteration} and Figure \ref{VisualizationDifferentMethod} respectively. In these figures, we represent the samples of the same class as a color. The points of the same color become closer and the points of different colors become further, the better the clustering performance is. It can be seen from Figure \ref{VisualizationDifferentIteration} that the differences between intra-class structure and inter-class are becoming more and more obvious with the increase of the number of iteration on datasets BBCSports, BBC, and HW. It can be seen from Figure \ref{VisualizationDifferentMethod} that the clustering effect of DMVC is least obvious. For DMVC, some intra-class structures can be seen, while the boundaries between clusters are very vague or even not. AwDMVC clusters into a ring structure for one class, but a ring always contains other classes, which greatly reduces the clustering effect. In contrast, clear clustering structures can be seen in our algorithm on two benchmarks.

\subsection{Convergence}
It is theoretically guaranteed that our algorithm converges to a local minimum. We also conduct experiments to verify that the algorithm is convergent or not. As shown in Figure \ref{obj}, the objective value curves are plotted in red on datasets 3Source, BBC, and BBCSport. The experimental results prove that our proposed algorithm can decrease monotonically and the iterations less than 150 usually. Thus it experimentally proves the convergence of our algorithm.

\subsection{Parameter Sensitivity}
There are two sets of parameters in our proposed method, i.e., the balance coefficient $\beta$ and the size of layer $p$. Next we will analyze the sensitivity of $\beta$, the selection rules of the last parameter in $p_2$ and $p_3$, and the sensitivity of $l_1$ and $l_2$ in $p_3$.

\subsubsection{Sensitivity of $\beta$}

Figure \ref{sensityBeta} shows the influence of ACC result concerning the parameter $\beta$ under the best layer size setting $p$ which is obtained in previous experiments. Each small picture in Figure \ref{sensityBeta} contains two curves, where the red one is ours, the blue one is the second best algorithm. We can find that our algorithm outperforms the second best algorithm in most range of the $\beta$ on most of the benchmarks even if it is a little sensitive to the parameter $\beta$.





\subsubsection{Sensitivity of $l_1$ and $l_2$ in $p_3$}

The Figure \ref{sentivityL1L2_3} shows the sensitivity experimental results of $l_1$ and $l_2$ in $p_3$ on ORL, 3Source and CiteSeer. From these figures, we observe that it is relatively stable in most parameter combinations without the above-mentioned discipline. Despite slight variation, it outperforms most algorithms in most of the benchmarks.

\section{Conclusion} 
\label{conclusion}

In this paper, we propose a novel multiple clustering framework with deep semi-NMF, which simultaneously optimizes deep representation learning and consensus graph constructing. In other words, the deep representation can be refined by the global consensus graph and vice versa. Through the multi-layer projection and the guidance of a consensus geometric structure that is constrained by a graph, the representation learned can contain more hidden attributes of the original features. Extensive experiments are conducted on six benchmarks, demonstrating the effectiveness of our proposed algorithm by comparing with ten SOTA methods. In the future, we will consider learning a consensus representation with a rotation matrix directly and construct the consensus graph more discriminative.

\section*{Acknowledgment}

This work was supported by the National Key R\&D Program of China 2018YFB1003203 and the National Natural Science Foundation of China (Grant NO.61672528, NO. 61773392, NO. 61872377).


%

\ifCLASSOPTIONcaptionsoff
  \newpage
\fi

\bibliographystyle{IEEEtran}
\bibliography{MVC-DMF-GGR}

\end{document}